\documentclass{article}


\usepackage[final]{neurips_2025}
\usepackage{hyperref}
\usepackage{url}
\usepackage{amsmath}  
\usepackage{amssymb}
\usepackage{amsthm} 
\usepackage{graphicx}
\usepackage{subcaption}
\usepackage{enumitem}
\usepackage{wrapfig}

\usepackage{xcolor}


\usepackage{amsmath,amsfonts,bm}
\usepackage{hyperref}
\usepackage{url}
\usepackage{amsmath}  
\usepackage{amssymb}
\usepackage{amsthm} 
\usepackage{graphicx}
\usepackage{subcaption}
\usepackage{enumitem}


\newtheorem{thm}{Theorem}
\newtheorem{cor}{Corollary}
\newtheorem{defi}{Definition}

\newtheorem{prop}{Proposition}

\newtheorem{assump}{Assumption}

\newcommand{\norm}[1]{\lVert#1\rVert}

\newcommand{\T}{\intercal}
\newcommand{\D}{\mathrm{d}}








\def\eqref#1{equation~\ref{#1}}









\def\1{\bm{1}}








\def\vtheta{{\bm{\theta}}}

\def\vb{{\bm{b}}}

\def\vu{{\bm{u}}}


\def\mA{{\bm{A}}}

\def\mF{{\bm{F}}}

\def\mK{{\bm{K}}}

\def\mM{{\bm{M}}}

\def\mQ{{\bm{Q}}}

\def\mU{{\bm{U}}}
\def\mV{{\bm{V}}}
\def\mW{{\bm{W}}}
\def\mX{{\bm{X}}}

\def\mSigma{{\bm{\Sigma}}}

\DeclareMathAlphabet{\mathsfit}{\encodingdefault}{\sfdefault}{m}{sl}
\SetMathAlphabet{\mathsfit}{bold}{\encodingdefault}{\sfdefault}{bx}{n}


\def\gN{{\mathcal{N}}}















\usepackage[utf8]{inputenc} 
\usepackage[T1]{fontenc}    
\usepackage{hyperref}       
\usepackage{url}            
\usepackage{booktabs}       
\usepackage{amsfonts}       
\usepackage{nicefrac}       
\usepackage{microtype}      
\usepackage{xcolor}         

\title{From Condensation to Rank Collapse: A Two-Stage Analysis of Transformer Training Dynamics}

%

\author{%
  Zheng-An Chen\textsuperscript{1}, \
  \textbf{Tao Luo}\textsuperscript{1,2}\thanks{Corresponding author: luotao41@sjtu.edu.cn.} \\
  \\
  \textsuperscript{1}School of Mathematical Sciences, Shanghai Jiao Tong University \\
  \textsuperscript{2}Institute of Natural Sciences, MOE-LSC, CMA-Shanghai, Shanghai Jiao Tong University 
}

\begin{document}

\maketitle

\begin{abstract}
Although transformer-based models have shown exceptional empirical performance, the fundamental principles governing their training dynamics are inadequately characterized beyond configuration-specific studies. Inspired by empirical evidence showing improved reasoning capabilities under small initialization scales in language models, we employ the gradient flow analytical framework established in \cite{zhou2022towards} to systematically investigate linearized Transformer training dynamics. Our theoretical analysis dissects the dynamics of attention modules into two distinct stages. In the first stage, asymmetric weight perturbations from random initialization sustain non-degenerate gradient dynamics in parameter matrices, facilitating systematic escape from small initialization regimes. Subsequently, these matrices undergo condensation, progressively aligning toward the target orientation. In the second stage, the previously static key-query matrices actively participate in training, driving the normalized matrices toward asymptotic rank collapse. This two-stage framework generalizes classical directional convergence results.
\end{abstract}

\section{Introduction}\label{sec::Intro}
The transformer-based models \cite{vaswani2017attention} have achieved remarkable breakthroughs in various fields, with the successful application of large language models. However, the theoretical analysis of the transformer still remains in specific tasks, such as in-context learning settings \cite{brown2020language, olsson2022incontextlearninginductionheads, bietti2023birth} or single attention block with reparameterization \cite{tian2023scan}. The use of linear regression tasks \cite{zhang2024trained} and Markov chain tasks \cite{ildiz2024self} has provided highly interpretable theoretical analyses, but a crucial question still remains: Can we analyze the characteristics of the training dynamics of transformers independently of specific tasks?

Meanwhile, small initialization has been increasingly shown to hold promise in the training process of large models, especially for reasoning tasks. Numerous studies \cite{zhangzhongwang,zhang2025complexity,yao2025analysisreasoningbiaslanguage} suggest that the implicit regularization effect of small initialization is still effective in large language models. This effectiveness is particularly significant in the context of modern large models, which are characterized by extreme overparameterization. In these regimes, where explicit regularization techniques like weight decay or dropout may prove insufficient on their own, implicit regularization becomes pivotal. It operates by imposing intrinsic constraints on the training dynamics and the resulting parameter space, effectively guiding the model towards solutions with good generalization properties despite the vast hypothesis space. This implicit bias is key to understanding how models with such immense capacity manage to avoid severe overfitting and achieve remarkable performance on unseen data.

Motivated by these observations, we propose to investigate the training dynamics of transformers under a small initialization setting. Leveraging the gradient flow theme similarly to \cite{zhou2022towards}, We delineate different training dynamics for outer parameters versus attention parameters $\mW_Q$ and $\mW_K$ in Transformers.

We dissect the dynamics of attention modules into two distinct stages. In the first stage, the core attention mechanism, $\text{softmax}(\mQ \mK^{\T})$, remains nearly stagnant, as asymmetric weight perturbations from random initialization drive non-degenerate gradient dynamics in parameter matrices, particularly $\mW_V$, facilitating escape from small initialization regimes. During this escape, the parameter matrix converges row-wise toward the target orientation, a process we term condensation. We theoretically prove that condensation is guaranteed under small initialization, and experimentally observe that it stabilizes without significant fluctuations.

In the second stage, after the outer parameters, such as $\mW_V$, reach a quasi-steady state, the previously static key-query matrices, $\mW_Q$ and $\mW_K$, begin to actively participate in training, driving their collapse. This two-stage framework not only elucidates the training dynamics but also generalizes classical directional convergence results, offering a robust theoretical foundation for Transformer optimization.

To sum up, our contribution can be summarized as follows.

\begin{enumerate}[leftmargin=*]
\item \textbf{Blow-up Dynamics}: We prove the blow-up property (Theorem \ref{thm::BlowUp}) holds for measure-theoretically generic initializations, eliminating reliance on dichotomy assumptions while ensuring model non-degeneracy.
\item \textbf{Condensation Mechanism}: By introducing a final-stage condition (Assumption \ref{assump::FinalStageCond}), we establish theoretical guarantees for condensation emergence (Theorem \ref{thm::new_condensation_thm}).
\item \textbf{Key-Query Collapse}: After outer parameters stabilize in a quasi-steady state (Assumption \ref{assump:dynamics_separation}), the key-query matrices begin active training, leading to asymptotic rank collapse of the normalized key-query matrices (Theorem \ref{thm::key-query_alignment}).
\item \textbf{Experimental evidence}: We validate our hypotheses and theoretical predictions on both synthetic and real datasets with one and multi-layer Transformers, consistently observing two-stage dynamics marked by condensation and an eventual rank collapse of the normalized key-query matrices (Figure~\ref{fig:loss_zero_mean}, \ref{fig:condition_zero_mean}, \ref{fig:loss_zero_mean_real_task}).
\end{enumerate}

\section{Related Works}\label{sec::RelaWork}  
\textbf{Training dynamics of transformer.}
Given the scale of modern models and the complexity of optimizers, studying the training dynamics of Transformers is a challenging problem. Prior works have primarily investigated the optimization dynamics of a single attention layer \cite{lu2021on, li2023transformerslearntopicstructure, Snell2021ApproximatingHS}. However, these studies mainly focused on specific tasks, such as topic structure prediction and translation.

Recently, the dynamics of in-context learning (ICL) has emerged as a prominent research area within Transformer dynamics, particularly given ICL's ability to solve novel tasks without parameter updates. Many works \cite{mahankali2024one,zhang2024trained,huang2023incontextconvergencetransformers, NEURIPS2024_a8633d27} have focused on the linear regression setup to theoretically investigate the mechanism of ICL in single-layer Transformers, a line of work that has also informed algorithmic development \cite{akyrek2023what,bai2023transformers,guo2024how}. Another line of research investigates how specific structures within attention emerge during training, notably starting with studies on induction heads \cite{olsson2022incontextlearninginductionheads, reddy2024the,edelman2024evolutionstatisticalinductionheads,zhang2025trainingdynamicsincontextlearning}, memory recall mechanisms \cite{bietti2023birth, cabannes2024learningassociativememoriesgradient}, and even causal structure \cite{nichani2024how}.

Despite the sophisticated structure of realistic Transformers, \cite{tian2024joma} proposed a novel mathematical framework for analyzing the joint dynamics of MLP and attention blocks and successfully explained the sparsity of attention score matrices. Meanwhile, \cite{chen2024training} provides a rigorous proof for the convergence of the ICL linear regression task using gradient flow with sufficiently small initialization.

\textbf{Small initialization and its applications}
The initialization of a neural network significantly affects its learning outcomes \cite{arora2019exact, williams2019gradient, mei2018mean, jacot2018neural, rotskoff2018parameters, zhang2020type}. Small initialization is a common setting investigated in the study of neural network optimization dynamics, which is different with the Neural Tangent Kernel (NTK) perspective in infinitely wide networks. For linear model, \cite{ji2018gradient} establish matrix alignment results theoretically. For nonlinear model, \cite{zhou2022towards} found that small initialization can similarly promote parameter condensation, thereby reducing model complexity. Theoretically, \cite{luo2021phase,chen2023phase,zhou2023understanding,kumar2024early}  have deepened the understanding of this phenomenon. The recent survey article \cite{xu2025overview} systematically synthesizes empirical and theoretical findings.

The implicit bias induced by small initialization has also been discussed in the context of linear regression \cite{saxe2013exact, min21c, varre2023on} and matrix factorization tasks \cite{li2018algorithmic, arora2019implicit, stoger2021small, soltanolkotabi2023implicit, bai2024connectivity}. More recently, many researchers have adopted small initialization settings to simplify the analysis of training dynamics in more complex models. From a theoretical perspective, \cite{zhang2025trainingdynamicsincontextlearning} applied small initialization to ICL tasks to analyze the behavior of linear attention. \cite{yao2025analysisreasoningbiaslanguage} considered the training dynamics of the embedding space under small initialization using a synthetic dataset designed for reasoning and memorization. From an applied perspective, \cite{zhang-etal-2019-improving, huang2020improving,zhu2021gradinit} highlighted the importance of initialization in Transformers, while \cite{pmlr-v161-bachlechner21a} combined zero-initialization with residual blocks in Transformers. Some research \cite{zhangzhongwang, yao2025analysisreasoningbiaslanguage} shows that small initialization helps Transformers learn the reasoning aspects of data rather than just memorization, a principle already applied in realistic LLM training \cite{yin2025panguultrapushinglimits}.





\section{Preliminaries}
\subsection{Basic Notations}
First, we introduce some notations that will be used in the rest of this paper. Let $n$ and $d_m$ be  the number of samples and  the width of hidden layers, respectively. Let $[n] $ denote the set of integers from $1$ to $n$. Denote vector $L^2$ norm as $\|\cdot\|_2$ and matrix Frobenius norm as $\|\cdot\|_{\mathrm{F}}$. Let $\langle \cdot, \cdot \rangle$ represent standard inner product between two vectors. For a vector $\bm{v}$, denote its $k$-th entry as $\bm{v}_k$. For a matrix $\mA$,  denote the element in the $k$-th row and $k'$-th column as $\mA_{k k'}$. And denote $k$-th row as $\mA_k$ and $k'$-th column as $\mA^{k'}$. Unless otherwise specified, summation `$\sum$' is performed over the network width.

\subsection{Classification Task} 
\textbf{Binary classification}: For decision tasks, the network produces a scalar output $f_{\boldsymbol{\theta}}(\mX) \in \mathbb{R}$. The predicted class assignment is determined by the sign of the output. The dataset is denoted by $\mathcal{D} = \{(\mX_i, y_i)\}_{i=1}^n$ where $\mX_i \in \mathbb{R}^{s \times d_m}$ stands for input sequence in which $s$ represents the sequence length and $d_m$ represents the hidden dimension, and $y_i \in \{ {\pm 1 \}}$ stands for label. For a loss function $\ell: \mathbb{R} \to \mathbb{R}_+$, we define the empirical risk as $\mathcal{L}(\boldsymbol{\theta}) = \frac{1}{n} \sum_{i=1}^n \ell\left( y_i f_{\boldsymbol{\theta}}(\mX_i) \right)$.
    
\textbf{Multi-class Classification}: 
For probabilistic tasks, the network outputs logit vectors $f_{\boldsymbol{\theta}}(\mX) \in \mathbb{R}^{d_v}$ that parameterize a categorical distribution via the softmax transformation
$
\mathbb{P}(y=i|\mX;\boldsymbol{\theta}) = \frac{\exp(f_{\boldsymbol{\theta}}(\mX)_i)}{\sum_{j=1}^{d_v} \exp(f_{\boldsymbol{\theta}}(\mX)_j)}
$
where $d_v$ denotes the vocabulary size. For cross-entropy loss, we define the empirical risk as $\mathcal{L}(\boldsymbol{\theta}) = - \frac{1}{n} \sum_{i=1}^n \log \mathbb{P}(y=y_i|\mX_i;\boldsymbol{\theta})$.

\subsection{Condensation and Rank Collapse}
We formalize the two geometric phenomena that will recur throughout our analysis.
\begin{defi}[Condensation]
\label{defi:condensation}
Let $\mW(t)$ be a matrix with rows $\mW_k(t)$ (or columns $\mW^{k}(t)$). We say $\mW$ \emph{condenses} to a direction $\bm v$ if, as $t\to T$,
\[
\Big\langle \tfrac{\mW_k(t)}{\|\mW_k(t)\|_2},\,\bm v\Big\rangle \;\to\; \pm 1
\quad\text{for every index $k$ with }\|\mW_k(t)\|_2\neq 0
\]
(equivalently, the same holds columnwise).
\end{defi}
Condensation is a directional notion and implies rank-$1$ collapse when a unique direction emerges. Rank collapse is a spectral notion and allows $k>1$ when multiple top singular directions are tied.
\begin{defi}[Asymptotic rank collapse]
\label{defi:rankcollapse}
Let $\mM(t)$ be a matrix. We say $\mM$ exhibits \emph{rank-$k$ collapse} if the limit
\[
\mM_\infty \;:=\; \lim_{t\to T}\frac{\mM(t)}{\|\mM(t)\|_{\mathrm F}}
\]
exists and $\operatorname{rank}(\mM_\infty)\le k$.
\end{defi}

\section{Theoretical Results}
\label{sec::Theoretical Results}
\subsection{Problem Formulation}
To analyze condensation phenomenon in transformers, we begin by formulating the problem. Specifically, we consider the following one-layer transformer model:
\begin{defi}[One-layer transformer]
Let $\mX \in \mathbb{R}^{s \times d_m}$ be an input sequence of length $s$ with model dimension $d_m$. The Transformer function $f_{\vtheta}: \mathbb{R}^{s \times d_m} \to \mathbb{R}^{s}$ is defined by the composition of attention and feed-forward operations:
\begin{equation}
\label{defi::transformer}
    f_\vtheta (\mX) := \operatorname{FFN}(\operatorname{Attn} (\mX)) = \sigma \left(\operatorname{Attn}(\mX) \mW^{[1]} \right) \mW^{[2]}. 
\end{equation}
The attention sublayer $\operatorname{Attn}: \mathbb{R}^{s \times d_m} \to \mathbb{R}^{s \times d_m}$ is computed as:
\begin{equation}
    \operatorname{Attn} (\mX) = \operatorname{softmax} \left( \frac{\mX \mW_Q \mW_K^{\T} \mX^{\T}}{\sqrt{d_m}}\right) \mX \mW_V,
\end{equation}
where parameter matrices satisfy $\mW_Q, \mW_K, \mW_V, \mW^{[1]} \in \mathbb{R}^{d_m \times d_m}$ and $\mW^{[2]} \in \mathbb{R}^{d_m}$. The activation function $\sigma: \mathbb{R} \to \mathbb{R}$ is tanh.
\end{defi}

We use one-layer transformer $f_\vtheta$ to solve binary classification tasks and take the last dimension of the output $f_{\vtheta} (\mX_i)_s$ as the output.
So the empirical risk to be minimized is given by 
\begin{equation}
    {\mathcal{L}}(\vtheta)= \frac{1}{n} \sum_{i=1}^n \ell (y_i f_{\vtheta}(\mX_i)_s ).
\end{equation}
For simplicity of presentation, we employ the exponential loss function $\ell(q) = e^{-q}$, which is commonly used in the analysis of classification tasks~\cite{lyu2019gradient}. The analysis can be readily extended to other loss functions such as the logistic loss.

The model parameters are initialized with Gaussian distributions scaled by a small perturbation parameter $\varepsilon$:
\begin{equation}
    \mW^{[2]}_k \sim \gN (0,\varepsilon^2), \quad {\mW}_{kk'}^{[1]} \sim \gN (0,\varepsilon^2), \quad {\mW}_{Q,kk'},{\mW}_{K,kk'},{\mW}_{V,kk'} \sim \gN (0,\varepsilon^2),
\end{equation}
where $\varepsilon \ll 1$ controls initialization magnitude. To analyze training dynamics, we adopt the gradient flow (GF) framework---the continuous-time limit of gradient descent. Given the small initialization scale, we derive effective dynamics through a perturbative expansion of the empirical risk $\mathcal{L}(\vtheta)$ in powers of $\varepsilon$.

First, we normalize parameters by absorbing the initialization scale:
\begin{equation*}
    \bar{\mW}^{[2]} = \varepsilon^{-1} \mW^{[2]}, \quad \bar{\mW}^{[1]} = \varepsilon^{-1} \mW^{[1]}, \quad \bar{\mW}_Q = \varepsilon^{-1} \mW_Q, \quad \bar{\mW}_K = \varepsilon^{-1} \mW_K, \quad \bar{\mW}_V = \varepsilon^{-1} \mW_V.
\end{equation*}
Performing a Taylor expansion of $\mathcal{L}(\vtheta)$ about $\varepsilon=0$ yields the leading-order asymptotic form:
\begin{equation}
    \mathcal{L} (\vtheta)= \frac{1}{2n} \sum_{i=1}^n \left[ 1- \varepsilon^3 \left( \sum_{j=1}^s \frac{1}{s} y_i \mX_{i,j} \bar{\mW}_V \bar{\mW}^{[1]} \bar{\mW}^{[2]} \right) + o(\varepsilon^3) \right].
\end{equation}
This expansion induces simplified gradient dynamics characterized by the following proposition.

\begin{prop}[Effective training dynamics]
\label{prop:effective_dynamics}
Given a binary dataset $\{(\mX_i, y_i)\}_{i=1}^n$, we define condensation direction $\bm{v}$ and rescaled time coordinate $\bar{t}$ as follows:
\begin{equation}
    \bm{v} := \frac{\sum_{i=1}^n y_i \left(\sum_{j=1}^s \mX_{i,j}\right)}{\left\| \sum_{i=1}^n y_i \left(\sum_{j=1}^s \mX_{i,j}\right) \right\|_2}, \quad \bar{t} := \frac{\varepsilon}{n s} \left\| \sum_{i=1}^n y_i \left(\sum_{j=1}^s \mX_{i,j}\right) \right\|_2 t.
\end{equation}
Then, normalized parameters $\bar{\bm{\theta}}$ follow leading-order dynamics after rescaling:
\begin{equation}
\label{eq:SimpEffectiveModel}
    \frac{\D \bar{\bm{\theta}}}{\D \bar{t}} = \nabla_{\bar{\bm{\theta}}} \left( \bm{v} \bar{\mW}_V \bar{\mW}^{[1]} \bar{\mW}^{[2]} \right).
\end{equation}
\end{prop}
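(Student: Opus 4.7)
The plan is to derive the effective dynamics in three moves: Taylor-expand the transformer output $f_\vtheta(\mX_i)_s$ in the initialization scale $\varepsilon$, substitute the expansion into the exponential loss, and then apply a joint parameter rescaling ($\bar\vtheta=\varepsilon^{-1}\vtheta$) together with a time rescaling so that the leading-order coefficients become order one.

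First, I track the $\varepsilon$-order of each factor inside $f_\vtheta$. Since every weight is $O(\varepsilon)$, the product $\mW_Q \mW_K^{\T}$ is $O(\varepsilon^2)$, so the softmax argument is small and
\[
\operatorname{softmax}\!\Big(\tfrac{\mX_i \mW_Q \mW_K^{\T}\mX_i^{\T}}{\sqrt{d_m}}\Big)=\tfrac{1}{s}\mathbf{1}\mathbf{1}^{\T}+O(\varepsilon^2).
\]
Combined with $\mX_i \mW_V = O(\varepsilon)$ this gives $\operatorname{Attn}(\mX_i)_s = \tfrac{\varepsilon}{s}\sum_{j=1}^s \mX_{i,j}\bar\mW_V + O(\varepsilon^3)$. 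Using $\tanh(x)=x+O(x^3)$ on the entrywise activation and composing with the outer $\mW^{[1]},\mW^{[2]}$ layers then yields
\[
f_\vtheta(\mX_i)_s = \tfrac{\varepsilon^3}{s}\sum_{j=1}^s \mX_{i,j}\bar\mW_V\bar\mW^{[1]}\bar\mW^{[2]} + O(\varepsilon^5).
\]
Note that the leading $\varepsilon^3$ term does not depend on $\bar\mW_Q$ or $\bar\mW_K$, which is consistent with the first-stage stagnation of the key and query matrices described in the introduction.

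Second, plug this into $\ell(q)=1-q+O(q^2)$ and average over $i$. Interchanging summations and factoring through the trilinear chain produces
\[
\mathcal L(\vtheta)=1-\frac{\varepsilon^3}{ns}\Bigl(\sum_i y_i\sum_j \mX_{i,j}\Bigr)\bar\mW_V\bar\mW^{[1]}\bar\mW^{[2]}+O(\varepsilon^5).
\]
Setting $\bm u:=\sum_i y_i\sum_j \mX_{i,j}=\|\bm u\|_2\,\bm v$ separates direction from magnitude exactly as in the proposition. For the dynamics, the chain rule for the rescaling $\vtheta=\varepsilon\bar\vtheta$ gives $\nabla_\vtheta\mathcal L=\varepsilon^{-1}\nabla_{\bar\vtheta}\tilde{\mathcal L}$ with $\tilde{\mathcal L}(\bar\vtheta):=\mathcal L(\varepsilon\bar\vtheta)$, so the rescaled gradient flow reads
\[
\frac{\D\bar\vtheta}{\D t}=-\varepsilon^{-2}\nabla_{\bar\vtheta}\tilde{\mathcal L}(\bar\vtheta)=\frac{\varepsilon\|\bm u\|_2}{ns}\nabla_{\bar\vtheta}\!\left[\bm v\,\bar\mW_V\bar\mW^{[1]}\bar\mW^{[2]}\right]+O(\varepsilon^3).
\]
Defining $\bar t=\tfrac{\varepsilon\|\bm u\|_2}{ns}\,t$ absorbs the prefactor and produces the stated ODE up to $O(\varepsilon^2)$ corrections.

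The main obstacle I expect is error bookkeeping rather than any single calculation. Because the parameter rescaling multiplies the loss by $\varepsilon^{-2}$, every residual must be shown to be $o(\varepsilon^3)$ \emph{uniformly on bounded sets} of $\bar\vtheta$, together with its gradient in $\bar\vtheta$, before one can drop it in the leading-order flow. The three error sources---the $O(\varepsilon^2)$ softmax correction (which reaches the output only at order $\varepsilon^5$), the cubic $\tanh$ correction (order $\varepsilon^7$), and the loss remainder $O(q^2)$ (order $\varepsilon^6$)---each survive a single derivative in $\bar\vtheta$ at the same order by smoothness of $\operatorname{softmax}$ and $\tanh$, so after multiplication by $\varepsilon^{-2}$ they contribute at most $O(\varepsilon)$ in the original time and $O(\varepsilon^2)$ after rescaling. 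This justifies taking $\varepsilon\to 0$ and yields the claimed effective dynamics.
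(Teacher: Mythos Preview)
Your proposal is correct and takes the same approach as the paper: Taylor-expand the loss in $\varepsilon$, identify the leading $\varepsilon^3$ term as the trilinear product $\bm v\,\bar\mW_V\bar\mW^{[1]}\bar\mW^{[2]}$, and rescale time to absorb the prefactor. The paper in fact gives no detailed proof beyond displaying the loss expansion immediately before the proposition, so your explicit tracking of the softmax, $\tanh$, and loss-remainder errors (and their survival under the $\varepsilon^{-2}$ amplification) is more thorough than what appears there.
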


Proposition \ref{prop:effective_dynamics} reveals a hierarchical learning mechanism: During initial training phases, the fully-connected layers $\mW^{[1]}, \mW^{[2]}$ and value projection matrix $\mW_V$ exhibit substantial updates, while the query/key matrices $\mW_Q$ and $\mW_K$ in the self-attention module remain quasi-static. 

For subsequent analysis, we define the projection of $\mW_V$ onto $\bm{v}$ as $\mW_{\bm{v}} := \bm{v}^{\T} \mW_V$ (omitting bar notation for simplicity) and introduce the energy functional:
\begin{equation}
    E := \mW_{\bm{v}} \mW^{[1]} \mW^{[2]}.
\end{equation}
The effective dynamics can thus be interpreted as gradient ascent on this energy landscape.
\subsection{Blow Up Dynamics}\label{sec::BlowUp}
We first elucidate why Transformers with small initialization can successfully train and eventually escape the small initialization regime. This phenomenon emerges from the interplay between two fundamental mechanisms:

\begin{enumerate}[leftmargin=*]
    \item \textbf{Effective dynamics driving:} The parameter evolution governed by the effective dynamics exhibits remarkable symmetry, manifested through strict conservation laws that preserve key quantities during training.
    
    \item \textbf{Random normal initialization:} While degenerate cases theoretically exist under Gaussian initialization, they occur with vanishing probability (measure zero in parameter space). Consequently, the dynamics almost surely demonstrate non-degenerate characteristics, ensuring stable training trajectories.
\end{enumerate}

To preserve dynamical symmetry, we invoke the following proposition following the approach established in prior works \cite{ji2018gradient}:
\begin{prop}[Conservation laws]\label{prop::ConservationLaw}
Under the gradient flow dynamics prescribed by system \eqref{eq:SimpEffectiveModel}, the following system of conservation laws emerges:
\begin{equation}
    \frac{\D}{\D t} \left( \mW^2_{\bm{v},k} -  \sum_{k'} \left(\mW^{[1]}_{kk'}\right)^2 \right) = 0 \quad \text{and} \quad  \frac{\D}{\D t}  \left( \left(\mW^{[2]}_k\right)^2 -  \sum_{k'}  \left(\mW^{[1]}_{k'k}\right)^2 \right) = 0. 
\end{equation}
\end{prop}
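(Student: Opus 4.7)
The plan is to differentiate each conserved quantity with respect to time and verify it vanishes by directly substituting the componentwise gradient-flow equations induced by the scalar potential $E = \bm v^{\T} \mW_V \mW^{[1]} \mW^{[2]}$ arising from Proposition \ref{prop:effective_dynamics}. Since $E$ is trilinear in the triple $(\mW_V, \mW^{[1]}, \mW^{[2]})$, I expect the time-derivatives of the squared norms of adjacent factors to be equal termwise, so that their difference is identically zero; this is the familiar layer-balance identity of deep linear networks, here adapted to the $\bm v$-projection of $\mW_V$.

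First, I would expand $E = \sum_{k,k'} \mW_{\bm v, k}\, \mW^{[1]}_{kk'}\, \mW^{[2]}_{k'}$ with $\mW_{\bm v, k} = \sum_j v_j \mW_{V, jk}$, and read off the three component-level gradients:
\[
\dot{\mW}_{V, jk} = v_j \sum_{k'} \mW^{[1]}_{kk'} \mW^{[2]}_{k'}, \qquad \dot{\mW}^{[1]}_{kk'} = \mW_{\bm v, k}\, \mW^{[2]}_{k'}, \qquad \dot{\mW}^{[2]}_{k'} = \sum_k \mW_{\bm v, k}\, \mW^{[1]}_{kk'}.
\]
Since $\mW_{\bm v}$ is a derived quantity rather than a trainable parameter, I would then propagate the flow of $\mW_V$ through the chain rule to get $\dot{\mW}_{\bm v, k} = \sum_j v_j\, \dot{\mW}_{V, jk} = \|\bm v\|_2^2 \sum_{k'} \mW^{[1]}_{kk'} \mW^{[2]}_{k'}$, and invoke the unit-norm normalization $\|\bm v\|_2 = 1$ from Proposition \ref{prop:effective_dynamics} to reduce this to $\dot{\mW}_{\bm v, k} = \sum_{k'} \mW^{[1]}_{kk'} \mW^{[2]}_{k'}$.

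With these ingredients the first conservation law follows by computing
\[
\frac{\D}{\D t} \mW^2_{\bm v, k} = 2 \mW_{\bm v, k} \sum_{k'} \mW^{[1]}_{kk'} \mW^{[2]}_{k'}, \qquad \frac{\D}{\D t} \sum_{k'} \bigl(\mW^{[1]}_{kk'}\bigr)^2 = 2 \sum_{k'} \mW^{[1]}_{kk'} \cdot \mW_{\bm v, k}\, \mW^{[2]}_{k'},
\]
which are manifestly equal. The second conservation law is obtained analogously by exchanging the roles of the row and column indices of $\mW^{[1]}$ and pairing the resulting sum with $\dot{\mW}^{[2]}_k$, so that $\frac{\D}{\D t}(\mW^{[2]}_k)^2$ and $\frac{\D}{\D t}\sum_{k'}(\mW^{[1]}_{k'k})^2$ both reduce to $2 \mW^{[2]}_k \sum_{k'} \mW_{\bm v, k'} \mW^{[1]}_{k'k}$.

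There is no serious obstacle: the proposition is essentially a bookkeeping exercise once the effective dynamics are written in components. The only point deserving care is the chain-rule step, where one must distinguish the trainable $\mW_V$ from the projected $\mW_{\bm v} = \bm v^{\T} \mW_V$ appearing in the statement, and then use $\|\bm v\|_2 = 1$ to match $\dot{\mW}_{\bm v}$ with $\dot{\mW}^{[1]}$ and $\dot{\mW}^{[2]}$. The cancellations then express the symmetric trilinear structure of $E$, giving the stated layerwise balance identities.
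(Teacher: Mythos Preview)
Your proposal is correct and follows essentially the same approach as the paper: differentiate the componentwise squared quantities, substitute the gradient-flow equations for the trilinear energy $E$, and observe that the two time-derivatives coincide termwise, then invoke symmetry for the second identity. Your treatment is in fact slightly more careful than the paper's, since you explicitly distinguish the trainable $\mW_V$ from the projected $\mW_{\bm v}$ and justify $\dot{\mW}_{\bm v,k} = \sum_{k'} \mW^{[1]}_{kk'} \mW^{[2]}_{k'}$ via the chain rule and $\|\bm v\|_2 = 1$, whereas the paper treats $\mW_{\bm v}$ as if it obeyed the gradient flow directly.
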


We now analyze the non-symmetric property arising from Gaussian random initialization, with particular focus on the degeneracy mechanism. Crucially, we establish that degeneracy exclusively occurs when initialization violates the following non-degenerate initialization:



  

\begin{defi}[Non-degenerate initialization]
\label{def:non-degenerate-init}
Let $\vtheta = \left( \mW_{V}, \mW^{[1]}, \mW^{[2]} \right)$ denote parameters initialized from a Gaussian distribution. The initialization is called non-degenerate if $\|\mW_{\bm v}\|_2^2 \ne \|\mW^{[2]}\|_2^2$ and 
\begin{equation}
    \|\dot{\mW}_{\bm v}\|_2^2-\|\dot{\mW}^{[2]}\|_2^2
+\min\!\big\{\|\mW_{\bm v}\|_2^2,\ \|\mW^{[2]}\|_2^2\big\}
\big(\|\mW_{\bm v}\|_2^2-\|\mW^{[2]}\|_2^2\big)\ne 0.
\end{equation}
\end{defi}

Having clarified the definition of non-degenerate initialization, we present the following theorem that reveals the non-degeneracy property of effective training dynamics.

\begin{thm}[Blow-up in finite time]
\label{thm::BlowUp}
Let the parameters be initialized randomly as above from a Gaussian distribution. Then, almost surely, the initialization is non-degenerate in the sense of Definition~\ref{def:non-degenerate-init}, and the effective training dynamics Eq. (\ref{eq:SimpEffectiveModel}) blows up in finite time. That is, there exists $T^*>0$ such that
\[
\lim_{t \rightarrow T^*} E(t) = +\infty.
\]
\end{thm}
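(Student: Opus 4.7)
The plan is to proceed in three stages: reduce the almost-sure claim to a Lebesgue-null argument, exploit the conservation laws of Prop.\ \ref{prop::ConservationLaw} to lock the three block norms together, and close with a super-linear differential inequality for the energy $E$ and an ODE comparison. For the first stage, the failure set of Def.\ \ref{def:non-degenerate-init} is cut out by two nontrivial polynomial equations in the entries of $\mW_V$, $\mW^{[1]}$, $\mW^{[2]}$, hence is Lebesgue-null in the ambient Euclidean parameter space. Since the joint Gaussian initialization is absolutely continuous with respect to Lebesgue measure, this set has probability zero, and almost surely both parts of the non-degenerate condition hold at $t=0$.

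For the dynamical core, I first sum the per-index conservation laws of Prop.\ \ref{prop::ConservationLaw} over the width index $k$ to obtain the global invariants $\|\mW_{\bm{v}}\|_2^2 - \|\mW^{[1]}\|_{\mathrm{F}}^2$, $\|\mW^{[2]}\|_2^2 - \|\mW^{[1]}\|_{\mathrm{F}}^2$, and therefore their difference $\|\mW_{\bm{v}}\|_2^2 - \|\mW^{[2]}\|_2^2$. Substituting the flow (\ref{eq:SimpEffectiveModel}) I find $\frac{\D}{\D t}\|\mW_{\bm{v}}\|_2^2 = \frac{\D}{\D t}\|\mW^{[1]}\|_{\mathrm{F}}^2 = \frac{\D}{\D t}\|\mW^{[2]}\|_2^2 = 2E$, so the three squared norms $a := \|\mW_{\bm{v}}\|_2^2$, $b := \|\mW^{[1]}\|_{\mathrm{F}}^2$, $c := \|\mW^{[2]}\|_2^2$ evolve jointly and remain within a fixed additive distance of one another. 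Differentiating the energy itself yields
$$\dot E \;=\; \|\nabla E\|^2 \;=\; \|\mW^{[1]}\mW^{[2]}\|_2^2 + \|\mW_{\bm{v}}\mW^{[1]}\|_2^2 + \|\mW_{\bm{v}}\|_2^2\,\|\mW^{[2]}\|_2^2 \;\ge\; 0,$$
so $E$ is monotone non-decreasing along the flow. The two parts of the non-degenerate condition then serve to guarantee escape from the critical set $\{\nabla E = 0\}$: part (i) excludes the balanced symmetric manifold where $\|\mW_{\bm{v}}\|_2 = \|\mW^{[2]}\|_2$, while part (ii) lower-bounds an appropriate second-order variation of $E$ along the flow, forcing $E(t_0) > 0$ at some finite $t_0$.

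Once $E(t_0) > 0$, Cauchy--Schwarz applied to the trilinear form gives $E^2 \le a b c$, and the locked evolution of the norms implies $a \sim b \sim c$ as they all grow. Combined with the lower bound $\dot E \ge a c$ read off from the last term in the expression above, this produces a super-linear differential inequality $\dot E \ge C\, E^{4/3}$ for some $C > 0$ once $E$ is sufficiently large. A standard ODE comparison then yields a finite blow-up time $T^\ast > 0$ with $\lim_{t\to T^\ast} E(t) = +\infty$, completing the argument.

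The hard part is the intermediate step: for a generic small Gaussian initialization the sign of $E(0)$ is not controlled, so without further argument the gradient flow could in principle drift toward a saddle or lower-dimensional critical submanifold instead of blowing up. The non-degenerate condition (ii) is the analytic quantity tailored precisely to this issue; its non-vanishing certifies that a certain second-order variation of $E$ along the flow is nonzero at initialization, which rules out being trapped on any such critical submanifold and guarantees entry into the strict-growth regime in finite time, from which the $E^{4/3}$ inequality drives $E$ to blow up.
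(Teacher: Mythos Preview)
Your overall architecture is sound and close to the paper's: the measure-zero argument, the conservation laws giving $\dot a=\dot b=\dot c=2E$, the monotonicity $\dot E\ge 0$, and a super-linear differential inequality once $E>0$ are all the right ingredients. Two points deserve refinement.

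First, the super-linear step can be made uniform in $t$ rather than asymptotic. The paper applies AM--GM directly to the three summands of $\dot E = \|\mW^{[1]}\mW^{[2]}\|_2^2 + \|\mW_{\bm v}\mW^{[1]}\|_2^2 + ac$, noting that $\langle \dot{\mW}_{\bm v},\mW_{\bm v}\rangle = \langle \dot{\mW}^{[2]},\mW^{[2]}\rangle = E$, so by Cauchy--Schwarz the product of the three terms is $\ge E^4$, yielding $\dot E \ge 3E^{4/3}$ for \emph{all} $t$. Your route through $\dot E\ge ac$ and $E^2\le abc$ only kicks in after $a,b,c$ have grown enough to be comparable, and you would still need an intermediate bootstrap to reach that regime; the direct AM--GM avoids this entirely.

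Second, and more substantively, your treatment of the ``hard part'' is where the proposal has a genuine gap. You describe condition (ii) of Def.~\ref{def:non-degenerate-init} as a ``second-order variation of $E$'' that rules out being trapped near a critical submanifold, but that is not how the argument actually closes, and that characterization does not by itself lead to a proof. The paper argues by contradiction: suppose $E(t)\le 0$ for all $t<T^\ast$. Since $\dot E\ge 3|E|^{4/3}$ still holds, one shows $T^\ast=\infty$ (else the bounded parameters allow extension past $T^\ast$) and $E(t)\uparrow 0$ (the alternative $E\to E_\infty<0$ contradicts the lower bound $E(t)\ge -((-E(0))^{-1/3}+t)^{-3}$). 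In this borderline regime $\dot E\to 0$ forces $ac\to 0$; condition (i) together with the conserved difference $a-c$ pins down which of $a,c$ vanishes (say $c$), after which $\|\dot{\mW}_{\bm v}\|_2,\|\dot{\mW}^{[2]}\|_2\to 0$ follow. The decisive computation is the identity
\[
\frac{\D}{\D t}\bigl(\|\dot{\mW}_{\bm v}\|_2^2 - \|\dot{\mW}^{[2]}\|_2^2\bigr) \;=\; 2E\,\bigl(c(0)-a(0)\bigr),
\]
whose integral from $0$ to $\infty$, using $\int_0^\infty 2E\,\D t = -c(0)$, evaluates \emph{exactly} to the quantity appearing in condition (ii). Its non-vanishing then contradicts $\|\dot{\mW}_{\bm v}\|_2,\|\dot{\mW}^{[2]}\|_2\to 0$. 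So condition (ii) is not a local second-variation obstruction but an exact integrated invariant tailored to eliminate the $E\uparrow 0$ scenario; without identifying this specific identity, the escape step in your plan does not close.
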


\noindent\textbf{Proof sketch.}
We prove finite–time blow-up via a Riccati-type differential inequality for the energy $E(t)$. Full technical details are provided in Appendix~\ref{sec::blowup_appendix}.

\emph{(1) Superlinear growth.} A direct computation gives
$\dot E(t)\ge 3E(t)^{4/3}$, hence $\partial_t E(t)^{-1/3}\le -1$ and
\begin{equation}
\label{eq::lower_bound}
    E(t)\;\ge\;\frac{1}{\big(E(0)^{-1/3}-t\big)^3}.
\end{equation}
For $E(0)>0$ this yields $T^{\ast}\le E(0)^{-1/3}$.

\emph{(2) Negative initial energy.} If $E(0)\le 0$, then
\begin{equation}
    E(t)\;\ge\;-\frac{1}{\big((-E(0))^{-1/3}+t\big)^3},
\end{equation}
so $E$ is increasing and cannot remain negative indefinitely. Assuming $E(t)\le 0$ for all $t$ leads to contradictions with (i) standard continuation at finite $T^\ast$, or (ii) monotone limits at $T^\ast=\infty$, reducing to the borderline case $E(t)\uparrow 0$.

\emph{(3) Borderline exclusion.} In the regime $T^\ast=\infty$ and $E(t)\uparrow 0$, structural identities and conservation give
$\|\mW_{\bm v}(t)\|_2^2\,\|\mW^{[2]}(t)\|_2^2\to 0$ and $\dot E(t)\to 0$.
Under the non-degenerate initialization (Def.~\ref{def:non-degenerate-init}), this forces a contradiction, since the limiting $\dot E$ must be strictly positive.

\subsection{Condensation Dynamics}
We have proved that energy and parameters norm will blow up almost surely. It implies the  effective dynamics drive parameters escape small initialization area in finite time. The next question is how  the effective dynamics affects the emergence of condensation and whether there exist  observables to help us characterize condensation. 

We propose  a condition of condensation and verify its effectiveness using experimental and theoretical methods. In particular, we theoretically prove that the solution of the effective dynamics has specific properties, which is  to some extent a sufficiency argument. The necessity argument is quite difficult in theory. But experimental results  provide us  a strong implication that this condition maybe also necessary.

\begin{assump}[Condensation condition]
    \label{assump::FinalStageCond}
    The parameters satisfy the \textbf{condensation condition} at time $t$. That is 
    \begin{enumerate}[leftmargin=*]
    \item  For each index $i \in [d_m]$ , $\mW^{[2]}_i \mW_{\bm{v}} \mW^{[1],i} >0$ and $\mW_{\bm{v},i} \mW^{[1]}_i \mW^{[2]}>0$.
    \item  For each pair $i,j \in [d_m]$, $ \langle \mW^{[2]}_i \mW^{[1],i} , \mW^{[2]}_j \mW^{[1],j} \rangle > 0 $, and $ \langle \mW_{\bm{v},i} \mW^{[1],i}, \mW_{\bm{v},j} \mW^{[1],j} \rangle > 0$.
    \end{enumerate}
\end{assump}
This hypothesis can be verified experimentally in Sec. \ref{subsec::Experimental evidence for the Assumptions}. Then, based on Assumption \ref{assump::FinalStageCond}, we formalize the statement of the culminating theorem as follows:
\begin{thm}[Condensation]
\label{thm::new_condensation_thm}
Under Assumption \ref{assump::FinalStageCond}, the effective dynamical system governed by Eq. (\ref{eq:SimpEffectiveModel}) drives the parameter matrix $\mW_V$ to undergo condensation in the sense of Definition \ref{defi:condensation}.
\end{thm}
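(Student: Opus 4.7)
The plan is to exploit the rank-one structure of $\dot{\mW}_V$, integrate it column by column, and then combine Theorem~\ref{thm::BlowUp} with the monotonicity and positivity encoded in Assumption~\ref{assump::FinalStageCond} to show that each column of $\mW_V$ aligns with $\pm\bm v$. First, differentiating $E = \mW_{\bm v}\mW^{[1]}\mW^{[2]}$ with $\mW_{\bm v} = \bm v^{\T}\mW_V$ gives $\dot{\mW}_V = \bm v\bm a^{\T}$, where $\bm a := \mW^{[1]}\mW^{[2]}$. Since $\dot{\mW}_{\bm v,k} = a_k$ by Eq.~(\ref{eq:SimpEffectiveModel}), the $k$-th column integrates exactly to
\[
\mW_V^{k}(t) \;=\; \mW_V^{k}(0) \;+\; \big(\mW_{\bm v,k}(t)-\mW_{\bm v,k}(0)\big)\,\bm v ,
\]
so columnwise condensation of $\mW_V$ to the direction $\bm v$ reduces to showing $|\mW_{\bm v,k}(t)|\to\infty$ for every $k$ with nonzero column initialization.

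Next, I would invoke Assumption~\ref{assump::FinalStageCond} to get coordinatewise monotonicity and then rule out stalled coordinates. Condition 1b reads $\mW_{\bm v,k}\dot{\mW}_{\bm v,k}>0$, so each $|\mW_{\bm v,k}|$ is strictly increasing with preserved sign; a parallel computation gives $\frac{d}{dt}\|\mW^{[1]}_k\|_2^2 = 2\mW_{\bm v,k}a_k>0$, consistent with the conservation law $\mW_{\bm v,k}^2-\|\mW^{[1]}_k\|_2^2=\text{const}$ from Proposition~\ref{prop::ConservationLaw}. Theorem~\ref{thm::BlowUp} then yields $E = \sum_k \mW_{\bm v,k}a_k\to\infty$ in finite time $T^{\ast}$, and under Assumption~\ref{assump::FinalStageCond} every summand is positive, so the divergence is not cancellation-driven. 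Using the pairwise conditions 2a and 2b---which place the vectors $\{\mW^{[2]}_i\mW^{[1],i}\}_i$ and $\{\mW_{\bm v,i}\mW^{[1],i}\}_i$ in a common half-space---I would argue by contradiction that no coordinate stalls: if $|\mW_{\bm v,k_0}|$ stayed bounded, the conservation law would also bound $\|\mW^{[1]}_{k_0}\|_2$, and the cross-coordinate positivity would propagate this boundedness through the coupled dynamics, contradicting $E\to\infty$. Once $|\mW_{\bm v,k}|\to\infty$ is in hand, the finite $\mW_V^{k}(0)$ becomes negligible and $\mW_V^{k}(t)/\|\mW_V^{k}(t)\|_2\to\mathrm{sign}(\mW_{\bm v,k})\,\bm v$, which is exactly Definition~\ref{defi:condensation}.

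I expect the main obstacle to be precisely this last implication---turning the blow-up of the scalar energy $E$ into uniform divergence of every coordinate $\mW_{\bm v,k}$. Monotonicity alone does not exclude a scenario in which a few coordinates carry the divergence while others converge to finite limits, and ruling this out is the specific role of the pairwise positivity conditions 2a and 2b in Assumption~\ref{assump::FinalStageCond}; converting them into quantitative lower bounds on the bounded coordinates' growth is the non-routine part of the argument.
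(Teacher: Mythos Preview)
Your reduction is exactly the right one, and it is also how the paper proceeds: from $\dot{\mW}_V=\bm v\big(\mW^{[1]}\mW^{[2]}\big)^{\T}$ one immediately gets
\[
\mW_V^{k}(t)=\mW_V^{k}(0)+\big(\mW_{\bm v,k}(t)-\mW_{\bm v,k}(0)\big)\bm v,
\]
so columnwise condensation to $\bm v$ is equivalent to $|\mW_{\bm v,k}(t)|\to\infty$ for every $k$, i.e.\ to $\int^{T^\ast}\mW^{[1]}_k\mW^{[2]}\,\D t=\pm\infty$ for every $k$. Your monotonicity observations from Assumption~\ref{assump::FinalStageCond} are also correct.

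The genuine gap is in your ``propagate boundedness to a contradiction'' step. The paper in fact \emph{allows} a priori a partition $[d_m]=C_1\cup C_2$ where the $C_2$-coordinates have finite limits, and nothing in the positivity conditions directly forces $C_2=\varnothing$ by a soft argument. A stalled coordinate $k_0$ with $\mW_{\bm v,k_0}$ and $\|\mW^{[1]}_{k_0}\|_2$ bounded is \emph{not} obviously incompatible with $E\to\infty$: the inner product $\mW^{[1]}_{k_0}\mW^{[2]}$ can remain integrable even as $\|\mW^{[2]}\|_2\to\infty$, provided the angle between them decays. The half-space conditions 2a--2b give positivity but not a quantitative angular lower bound, so the contradiction you sketch does not close.

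What the paper actually does is build a \emph{two-sided} control of the energy. The lower bound $E(t)\ge (E(s)^{-1/3}-(t-s))^{-3}$ already comes from blow-up; the new ingredient is an \emph{upper} bound $E(t)\le (E(s)^{-1/3}-C(t-s))^{-3}$, obtained by first proving angle control (the vectors $\mW_{\bm v,i}\mW^{[1]}_i$ for $i\in C_1$ become collinear, and the angle between $\mW^{[2]}$ and $\dot{\mW}^{[2]}$ stays bounded away from $\pi/2$), which in turn gives $\dot E\le C_0 E^{4/3}$. The upper bound guarantees that $T^\ast-t\ge E(t)^{-1/3}/C$, i.e.\ there is always enough remaining time. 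Combining both bounds with the estimate $\mW^{[1]}_k\mW^{[2]}(t)\ge \mW_{\bm v,k}(T_4)\int_{t_1}^{t}\|\mW^{[2]}\|_2^2\,\D s$ (where the dropped term $\mW^{[1]}_k\dot{\mW}^{[2]}$ is nonnegative precisely thanks to condition~2), one shows that on each of infinitely many subintervals of length $\tfrac{1}{2C}E(t_1)^{-1/3}$ the integral $\int\mW^{[1]}_k\mW^{[2]}\,\D t$ picks up a fixed positive constant. This telescoping is what forces every coordinate to diverge, not a qualitative boundedness-propagation argument.
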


This section gives a high–level proof sketch; full details appear in Appendix~\ref{sec::condensation_appendix}.

\noindent\textbf{Proof sketch.}
We establish finite–time directional convergence (condensation) via geometric propagation and two–sided energy control.

\emph{(1) Geometric consistency and alignment dynamics.}
Under Assumption~\ref{assump::FinalStageCond}, Proposition~\ref{prop::Induction} shows that once the alignment condition holds at some $t_0<T^\ast$, it propagates throughout $(t_0,T^\ast)$. Proposition~\ref{prop::Angle} yields a structural dichotomy of the columns of $\mW_V$ into a condensing class $C_1$ and a uniformly bounded class $C_2$. Propositions~\ref{prop::4} and \ref{prop::5} further establish dynamical alignment between $\mW^{[2]}$ and its time derivative, ensuring coherence of the evolving direction.

\emph{(2) Singularity structure and condensation.}
Proposition~\ref{thm::UpperBound} supplies an energy upper bound which, combined with the lower bound in Eq. (\ref{eq::lower_bound}), furnishes a bilateral estimate on $E(t)$. A telescoping–integral argument then proves that the condensing indices dominate in finite time, completing the proof of condensation via Theorem~\ref{thm::new_condensation_thm}.

\subsection{Key-Query Dynamics}
Following the initial training stage, the parameter matrices $\mW_V$, $\mW^{[1]}$ and $\mW^{[2]}$ exhibit substantial growth in magnitude, effectively escaping the small-initialization regime. In contrast, the key-query matrices $\mW_Q$ and $\mW_K$ demonstrate remarkable stability in scale. This separation phenomenon is fundamentally governed by the effective dynamics of the learning system.

A pivotal question arises: Under what conditions do the key-query matrices become dynamically activated, thereby enabling the attention mechanism to exert its structural influence? We hypothesize that during early training, $\mW_V$, $\mW^{[1]}$ and $\mW^{[2]}$ converge to a critical point where $\mW_Q$ and $\mW_K$ almost vanish, temporarily stabilizing in this dormant state. The following analysis provides mechanistic insights into this dynamical freezing phenomenon.   
The final activation function is omitted from our analysis. This is justified because the layer's pre-activations consistently operate within the linear regime of the function. Furthermore, empirical results confirm that its inclusion does not alter the model's learning dynamics (refer to \ref{subsec::id_experimental}). Now empirical loss $\mathcal{L}(\vtheta)$ has the following decomposition:
\begin{equation}
\label{equ::loss_decomposition}
    \mathcal{L(\vtheta)} \approx \frac{1}{n} \sum_{i=1}^n \mathcal{L}_{1,i}(\vtheta) \mathcal{L}_{2,i} (\vtheta),
\end{equation}
where 
\begin{equation*}
\label{equ::L1_L2}
\begin{aligned}
    \mathcal{L}_{1,i}(\vtheta)=& \textstyle{\exp \left\{-y_i \left( \sum_{j=1}^s \frac{1}{s} \mX_{i,j} \mW_V \mW^{[1]} \mW^{[2]}\right) \right\}}, \\
    \mathcal{L}_{2,i} (\vtheta)=& \textstyle{\exp \left\{-y_i \left( \sum_{j=1}^s \left(\frac{1}{s} \frac{\mX_{i,s}\mW_Q \mW_K^{\T} \mX_{i,j}^{\T}}{\sqrt{d_m}}- \frac{1}{s^2} \sum_{l=1}^s \frac{\mX_{i,s}\mW_Q \mW_K^{\T} \mX_{i,l}^{\T}}{\sqrt{d_m}} \right) \mX_{i,j} \mW_V \mW^{[1]} \mW^{[2]} \right) \right\}}.
\end{aligned}
\end{equation*}

Based on the above discussion, we now formalize the following assumption.

\begin{assump}[Dynamics separation stage]
\label{assump:dynamics_separation}
After the breakdown of the effective dynamics in Eq.~(\ref{eq:SimpEffectiveModel}), let $\delta$ denote a small parameter.
The gradient flow subsequently enters a stage characterized by:
\begin{enumerate}[leftmargin=*, itemsep=2pt]
    \item \textbf{Criticality conditions:}  
    The outer parameters $(\mW_V,\mW^{[1]},\mW^{[2]})$ converge to a quasi-stationary configuration such that  
    $\nabla_{\mW_V}\widetilde{\mathcal{L}}_1=
    \nabla_{\mW^{[1]}}\widetilde{\mathcal{L}}_1=
    \nabla_{\mW^{[2]}}\widetilde{\mathcal{L}}_1=\mathcal{O}(\delta^2)$,  
    where $\widetilde{\mathcal{L}}_1=\frac{1}{n}\sum_i \mathcal{L}_{1,i}$.
    
    \item \textbf{Key-query stunting:}  
    The attention parameters remain small, satisfying  
    $|{\mW_Q}_{ij}|, |{\mW_K}_{ij}|= \mathcal{O} (\delta)$,  
    until their norms $\|\mW_Q\|$ and $\|\mW_K\|$ exceed a critical scale.
\end{enumerate}
\end{assump}

To facilitate empirical validation, we introduce a modified version of the basic equivalence of the first part of Assumption~\ref{assump:dynamics_separation}, denoted as Assumption~\ref{assump:dynamics_separation_prime}.

\let\oldtheassump\theassump
\renewcommand{\theassump}{2*}

\begin{assump}\label{assump:dynamics_separation_prime}
The outer parameters $(\mW_V,\mW^{[1]},\mW^{[2]})$ reach a quasi-stationary state whose directions vary negligibly over time, i.e., 
$\frac{\D}{\D t}\!\left(\frac{\mW_V}{\|\mW_V\|}\right)
= \frac{\D}{\D t}\!\left(\frac{\mW^{[1]}}{\|\mW^{[1]}\|}\right)
= \frac{\D}{\D t}\!\left(\frac{\mW^{[2]}}{\|\mW^{[2]}\|}\right)\!\approx\!\mathbf{0}$,
and the loss evolution satisfies $\frac{\D \mathcal{L}_{1,i}}{\D t}=\mathcal{O}(\delta^2)$ for all $i$.
\end{assump}

\renewcommand{\theassump}{\oldtheassump}

Since we assume the small parameter $\delta$ is still relatively small, we get the following Proposition which illustrate evident dynamics separation and the leading order dynamics of key-query matrices.
\begin{prop}[Effective dynamics during dynamics separation stage]
\label{prop::dynamics_separation}
Under Assumption \ref{assump:dynamics_separation} or Assumption \ref{assump:dynamics_separation_prime}, the empirical risk $\mathcal{L}(\vtheta)$ exhibits the following properties:
\begin{enumerate}[leftmargin=*]
    \item \textbf{Dynamics separation} The gradients of the empirical risk with respect to $\mW_V$, $\mW^{[1]}$ and $\mW^{[2]}$ are of order $\mathcal{O} (\delta^2)$, while the gradients with respect to the query matrix $\mW_Q$ and key $\mW_K$ are of order $\mathcal{O} (\delta)$.
    \item \textbf{Key-query dynamics} Treating $\mW_V$, $\mW^{[1]}$ and $\mW^{[2]}$ as fixed due to dynamics separation, the leading-order dynamics of key-query matrices are given by 
    \begin{equation}
    \label{eq::leading_order_WQK}
    \frac{\D \mW_Q}{\D t} = \mF \mW_K, \quad \frac{\D \mW_K}{\D t} = \mF^{\T} \mW_Q,
    \end{equation}
\end{enumerate}
where $\mF$ is defined as follows
\begin{equation}
    \mF = \frac{1}{s \sqrt{d_m}} \sum_{i=1}^n y_i \mathcal{L}_{1,i} \mX_{i,s}^{\T} {\mW^{[2]}}^{\T} {\mW^{[1]}}^{\T} {\mW_V}^{\T} \left( \sum_{j=1}^s  \mX_{i,j}^{\T} \left( \mX_{i,j}- \frac{1}{s} \sum_{l=1}^s \mX_{i,l} \right)\right). 
\end{equation}
\end{prop}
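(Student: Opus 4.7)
The strategy is to Taylor-expand $\mathcal{L}_{2,i}$ in the small parameter $\delta$ and apply the product rule to the decomposition $\mathcal{L}=\tfrac{1}{n}\sum_{i}\mathcal{L}_{1,i}\mathcal{L}_{2,i}$ in Eq.~(\ref{equ::loss_decomposition}). The crucial observation is that the exponent of $\mathcal{L}_{2,i}$ is bilinear in $\mW_Q$ and $\mW_K$ through the factor $\mW_Q\mW_K^{\T}$, so by the key-query stunting clause of Assumption~\ref{assump:dynamics_separation}(2) it is $\mathcal{O}(\delta^2)$. Taylor expanding the exponential then yields $\mathcal{L}_{2,i}=1+\mathcal{O}(\delta^2)$, while $\nabla_{\mW_Q}\mathcal{L}_{2,i}$ and $\nabla_{\mW_K}\mathcal{L}_{2,i}$ are linear in $\mW_K$ and $\mW_Q$ respectively (hence $\mathcal{O}(\delta)$), and $\nabla_{\mW_V}\mathcal{L}_{2,i}$, $\nabla_{\mW^{[1]}}\mathcal{L}_{2,i}$, $\nabla_{\mW^{[2]}}\mathcal{L}_{2,i}$ are all $\mathcal{O}(\delta^2)$ because these matrices enter the exponent only as a prefactor to the $\mathcal{O}(\delta^2)$ bilinear block.

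\textbf{Step 1: dynamics separation.} For any outer parameter, say $\mW_V$, the product rule decomposes
\begin{equation*}
\nabla_{\mW_V}\mathcal{L}=\nabla_{\mW_V}\widetilde{\mathcal{L}}_1+\tfrac{1}{n}\sum_{i}(\mathcal{L}_{2,i}-1)\,\nabla_{\mW_V}\mathcal{L}_{1,i}+\tfrac{1}{n}\sum_{i}\mathcal{L}_{1,i}\,\nabla_{\mW_V}\mathcal{L}_{2,i}.
\end{equation*}
The first term is $\mathcal{O}(\delta^2)$ by the criticality clause of Assumption~\ref{assump:dynamics_separation}(1); the second is $\mathcal{O}(\delta^2)$ since $\mathcal{L}_{2,i}-1=\mathcal{O}(\delta^2)$; the third is $\mathcal{O}(\delta^2)$ by the order analysis above. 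The same argument applies to $\mW^{[1]}$ and $\mW^{[2]}$. For $\mW_Q$ only the last term survives ($\mathcal{L}_{1,i}$ is independent of $\mW_Q$), producing $\nabla_{\mW_Q}\mathcal{L}=\mathcal{O}(\delta)$; symmetrically $\nabla_{\mW_K}\mathcal{L}=\mathcal{O}(\delta)$.

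\textbf{Step 2: leading-order key-query ODE.} Treating $(\mW_V,\mW^{[1]},\mW^{[2]})$ as frozen, I first collapse the exponent of $\mathcal{L}_{2,i}$ into the compact form $-\tfrac{y_i}{s\sqrt{d_m}}\,\mX_{i,s}\mW_Q\mW_K^{\T}\vc_i$, where $\vc_i:=\sum_{j=1}^{s}\bigl(\mX_{i,j}^{\T}-\tfrac{1}{s}\sum_{l=1}^{s}\mX_{i,l}^{\T}\bigr)(\mX_{i,j}\mW_V\mW^{[1]}\mW^{[2]})$. Standard matrix-calculus identities give $\nabla_{\mW_Q}(\mX_{i,s}\mW_Q\mW_K^{\T}\vc_i)=\mX_{i,s}^{\T}\vc_i^{\T}\mW_K$ and the symmetric $\nabla_{\mW_K}(\cdot)=\vc_i\,\mX_{i,s}\mW_Q$. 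Setting $\mathcal{L}_{2,i}\approx 1$ at leading order and substituting into the gradient flow $\dot{\mW}_Q=-\nabla_{\mW_Q}\mathcal{L}$, I commute the scalar factor $\mX_{i,j}\mW_V\mW^{[1]}\mW^{[2]}={\mW^{[2]}}^{\T}{\mW^{[1]}}^{\T}{\mW_V}^{\T}\mX_{i,j}^{\T}$ out of $\vc_i^{\T}$ and identify the remaining tensor with the claimed $\mF$. An analogous derivation for $\dot{\mW}_K=-\nabla_{\mW_K}\mathcal{L}$ yields $\dot{\mW}_K=\mF^{\T}\mW_Q$ by transposition.

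\textbf{Main obstacle.} The matrix calculus and order-tracking are essentially routine; the subtle step is reconciling Assumption~\ref{assump:dynamics_separation_prime}, which controls only the normalized directions $\mW_V/\|\mW_V\|$, $\mW^{[1]}/\|\mW^{[1]}\|$, $\mW^{[2]}/\|\mW^{[2]}\|$ and the scalar rates $\tfrac{\D\mathcal{L}_{1,i}}{\D t}$, with the gradient-magnitude form of Assumption~\ref{assump:dynamics_separation}(1) that Step~1 invokes. I plan to close this gap by observing that along gradient flow, $\tfrac{\D\mathcal{L}_{1,i}}{\D t}=\langle\nabla\mathcal{L}_{1,i},\dot{\vtheta}\rangle$ and, by the separation just established, the outer components of $\dot{\vtheta}=-\nabla\mathcal{L}$ equal $-\nabla\widetilde{\mathcal{L}}_1$ to leading order in $\delta$. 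Combined with the freezing of the outer directions, this forces $\nabla_{\mW_V,\mW^{[1]},\mW^{[2]}}\widetilde{\mathcal{L}}_1$ to be small not only in directions orthogonal to $\vtheta$ but in magnitude, recovering the $\mathcal{O}(\delta^2)$ criticality needed for Step~1.
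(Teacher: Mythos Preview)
Your proposal is correct and takes essentially the same approach as the paper: Taylor-expand $\mathcal{L}_{2,i}=1+\mathcal{O}(\delta^2)$, apply the product rule to $\mathcal{L}=\tfrac{1}{n}\sum_i\mathcal{L}_{1,i}\mathcal{L}_{2,i}$, invoke the criticality clause for the outer parameters, and then carry out the matrix calculus on the bilinear $\mW_Q\mW_K^{\T}$ block to obtain $\mF$. Your handling of the Assumption~\ref{assump:dynamics_separation_prime} case---pairing $\tfrac{\D\mathcal{L}_{1,i}}{\D t}=\langle\nabla\mathcal{L}_{1,i},\dot{\vtheta}\rangle$ with the observation that the outer components of $\dot{\vtheta}=-\nabla\mathcal{L}$ agree with $-\nabla\widetilde{\mathcal{L}}_1$ up to $\mathcal{O}(\delta^2)$---is exactly the inner-product maneuver the paper uses, and your explicit invocation of the directional freezing to close the magnitude bound is a point the paper leaves implicit.
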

Since the dynamics governing $\mW_Q$ and $\mW_K$ form a linear ordinary differential equation system in this context, we can rigorously establish the subsequent conclusions.
\begin{thm}[Asymptotic rank collapse]
\label{thm::key-query_alignment}
Given the key-query dynamics governed by Eq. (\ref{eq::leading_order_WQK}), the normalized key and query matrices exhibit rank collapse as Definition \ref{defi:rankcollapse}.
Specifically, when $\mF$ possesses a unique largest singular value, both normalized matrices asymptotically become rank $1$.
\end{thm}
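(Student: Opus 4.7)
The plan is to integrate the linear system~(\ref{eq::leading_order_WQK}) exactly via the singular value decomposition of $\mF$, then read off the normalized limit from the dominant exponential mode. Write $\mF = \mU \mSigma \mV^{\T}$ with orthogonal $\mU,\mV\in\mathbb{R}^{d_m\times d_m}$ and $\mSigma = \mathrm{diag}(\sigma_1,\dots,\sigma_{d_m})$, $\sigma_1 \geq \cdots \geq \sigma_{d_m}\geq 0$, and change variables to $\widetilde{\mW}_Q := \mU^{\T} \mW_Q$, $\widetilde{\mW}_K := \mV^{\T} \mW_K$. Using $\mU^{\T}\mF = \mSigma \mV^{\T}$ and $\mV^{\T}\mF^{\T} = \mSigma \mU^{\T}$, the system decouples row-by-row into $\D_t \widetilde{\mW}_{Q,i} = \sigma_i \widetilde{\mW}_{K,i}$ and $\D_t \widetilde{\mW}_{K,i} = \sigma_i \widetilde{\mW}_{Q,i}$, with closed-form solution
\begin{equation*}
\begin{pmatrix} \widetilde{\mW}_{Q,i}(t) \\ \widetilde{\mW}_{K,i}(t) \end{pmatrix} = \begin{pmatrix} \cosh(\sigma_i t) & \sinh(\sigma_i t) \\ \sinh(\sigma_i t) & \cosh(\sigma_i t) \end{pmatrix} \begin{pmatrix} \widetilde{\mW}_{Q,i}(0) \\ \widetilde{\mW}_{K,i}(0) \end{pmatrix}.
\end{equation*}

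Next, I would extract the leading asymptotics. Writing $\bm{r}_i := \widetilde{\mW}_{Q,i}(0)+\widetilde{\mW}_{K,i}(0)$, for $\sigma_i>0$ both rows behave like $\tfrac{1}{2}e^{\sigma_i t}\bm{r}_i$ as $t\to\infty$, so
\begin{equation*}
\|\widetilde{\mW}_Q(t)\|_{\mathrm F}^2 \sim \tfrac{1}{4}\sum_{i=1}^{d_m} e^{2\sigma_i t}\|\bm{r}_i\|_2^2.
\end{equation*}
When $\sigma_1$ is the unique largest singular value, the $i=1$ term dominates, so the normalized first row converges to $\bm{u}_1 := \bm{r}_1/\|\bm{r}_1\|_2$ while all other normalized rows decay like $e^{(\sigma_i-\sigma_1)t}\to 0$. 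Hence $\widetilde{\mW}_Q/\|\widetilde{\mW}_Q\|_{\mathrm F}\to \bm{e}_1 \bm{u}_1^{\T}$ (and the identical limit holds for $\widetilde{\mW}_K$). Rotating back by $\mU$ and $\mV$ yields
\begin{equation*}
\frac{\mW_Q(t)}{\|\mW_Q(t)\|_{\mathrm F}} \longrightarrow \mU^{1}\,\bm{u}_1^{\T}, \qquad \frac{\mW_K(t)}{\|\mW_K(t)\|_{\mathrm F}} \longrightarrow \mV^{1}\,\bm{u}_1^{\T},
\end{equation*}
where $\mU^1,\mV^1$ are the top left and right singular vectors of $\mF$. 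Both limits are rank one, establishing rank-$1$ collapse per Definition~\ref{defi:rankcollapse}. The general case where $\sigma_1$ has multiplicity $k$ proceeds identically: the top $k$ rotated rows grow at the common rate $e^{\sigma_1 t}$ and span at most a $k$-dimensional row subspace after normalization, yielding rank-$k$ collapse.

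The main obstacle is purely a genericity consideration: the argument requires $\bm{r}_1\neq\bm{0}$, i.e., the top singular mode must be excited by the initial key-query data. Since $\mW_Q(0),\mW_K(0)$ are drawn from a continuous Gaussian distribution, their projections onto any fixed singular direction of $\mF$ are almost surely nonzero, so this holds with probability one. A secondary subtlety is that $\mF$, defined through the quasi-stationary values of $(\mW_V,\mW^{[1]},\mW^{[2]})$, is only approximately constant during the dynamics-separation stage (Assumption~\ref{assump:dynamics_separation}); however, the $\mathcal{O}(\delta^2)$ drift of the outer parameters is dominated by the exponential growth rate $\sigma_1$, so the constant-coefficient analysis applies to leading order and the rank-collapse conclusion persists.
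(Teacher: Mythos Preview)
Your proposal is correct and follows essentially the same strategy as the paper: diagonalize the linear system via the SVD of $\mF$, observe that each mode evolves as a combination of $e^{\pm\sigma_i t}$, and conclude that after normalization only the rows corresponding to the top singular value(s) survive. The paper's proof differs only cosmetically: it first differentiates Eq.~(\ref{eq::leading_order_WQK}) to obtain the decoupled second-order system $\ddot{\mW}_Q = \mF\mF^{\T}\mW_Q$, then conjugates by $\mU$ to diagonalize, whereas you stay with the first-order coupled system and conjugate $\mW_Q$ by $\mU$ and $\mW_K$ by $\mV$ simultaneously. Your route is marginally more informative, since it makes explicit that the limiting row directions of $\mW_Q$ and $\mW_K$ coincide (both equal to $\bm r_1/\|\bm r_1\|_2$), and it cleanly identifies the genericity condition $\bm r_1\neq 0$, which the paper leaves implicit.
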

The detailed proofs of Proposition \ref{prop::dynamics_separation} and Theorem \ref{thm::key-query_alignment} can be found in the Appendix \ref{sec::QK_detail}.

\section{Experimental Results}
In this section, we first demonstrate the phenomena of cohesion and rank collapse using synthetic data and confirm the assumptions required for our theoretical analysis of the one-layer Transformer model. We then present experiments on natural language processing tasks to demonstrate the generality of our theoretical findings with respect to various datasets and network architectures.

\subsection{Synthetic Dataset}
\label{sec::Synthetic_data}

We employ the concept of the anchor function~\cite{zhang2024anchor} to construct a synthetic dataset that simulates a simplified language modeling scenario. The model is a one-layer Transformer with $\tanh$ activation, trained using cross-entropy loss and the AdamW optimizer. Further experimental settings are detailed in Appendix~\ref{subsec::Experimental_Setting}.

\subsubsection{Phenomenon: Condensation and Rank Collapse}
To dissect the learning dynamics, we visualize the training process through three complementary lenses: the cosine similarity of parameters (Calculation method refer to Sec. \ref{subsec::Experimental_Setting}), the relative change of norms, and the effective rank of weight matrices. As shown in Figure~\ref{fig:loss_zero_mean}, these analyses collectively reveal a distinct three-stage training trajectory, which we characterize as Condensation, Key-Query Rank Collapse, and the further training.

\begin{figure}[h]
    \centering
    \includegraphics[width=1.0\linewidth]{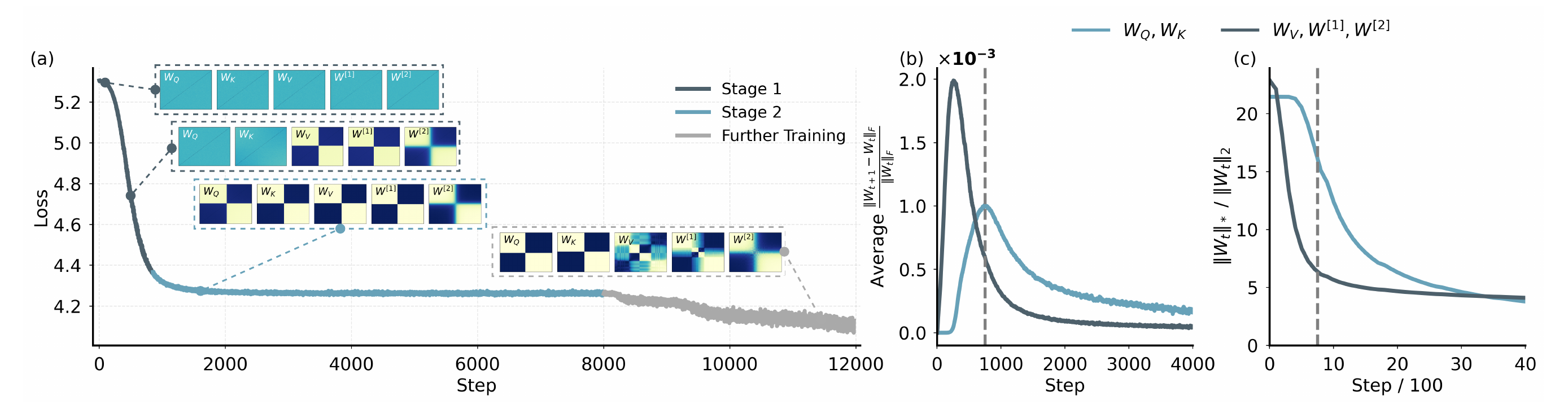}
    \caption{(a) Evolution of cosine similarity matrices for outer and attention parameters. The training process is partitioned into three stages: Condensation (Stage 1), Key-Query rank collapse (Stage 2), and a further training stage. Stage transitions are identified by plateaus in the loss curve and structural shifts in these matrices. (b) The relative change of norms between attention and outer parameters. The gray dashed line marks the onset of Stage 2, where updates to the attention parameters begin to dominate. (c) Evolution of the effective rank for both parameter groups, tracking the change in their intrinsic dimensionality throughout training.}
    \label{fig:loss_zero_mean}
\end{figure}

The training process begins with a rapid decrease in loss, driven almost exclusively by the outer-layer parameters since the relative change of the outer parameters far exceed those of the attention parameters (Fig.\ref{fig:loss_zero_mean}(b)) during this initial phase. This intense optimization leads to the condensation phenomenon, where the initially random outer parameters organize into a low-rank configuration. This is visually evident from the emergence of block structures in their cosine similarity matrices (Fig.\ref{fig:loss_zero_mean}(a)) and is quantified by a steady decrease in their effective rank (Fig.~\ref{fig:loss_zero_mean}(c)). Throughout this stage, the attention parameters remain largely static and unstructured.

Following the initial phase, the training loss enters a prolonged plateau. This signals a critical transition in the learning dynamics, marked by the gray dashed line in Fig.\ref{fig:loss_zero_mean}(b). At this stage, a clear dynamics separation occurs: the updates to the outer parameters subside, and the attention parameters become the primary focus of optimization. This empirical observation validates our theoretical framework, particularly Proposition \ref{prop::dynamics_separation}. As the changes in the outer parameters become slower (supporting Assumption~\ref{assump:dynamics_separation}), the attention parameters begin to learn their specialized roles. This is characterized by a rank collapse, confirmed visually by the sudden formation of structure in their similarity matrices and quantitatively by a precipitous drop in their effective rank (Fig.~\ref{fig:loss_zero_mean}(a), \ref{fig:loss_zero_mean}(c)).

\subsubsection{Experimental Validation of Key Assumptions}
\label{subsec::Experimental evidence for the Assumptions}
To ground our theoretical analysis in the observed dynamics, we now provide direct empirical validation for the key assumptions that underpin our framework: Assumption \ref{assump::FinalStageCond} and Assumption \ref{assump:dynamics_separation}.
\begin{figure}[h]
    \centering
    \includegraphics[width=0.9\linewidth]{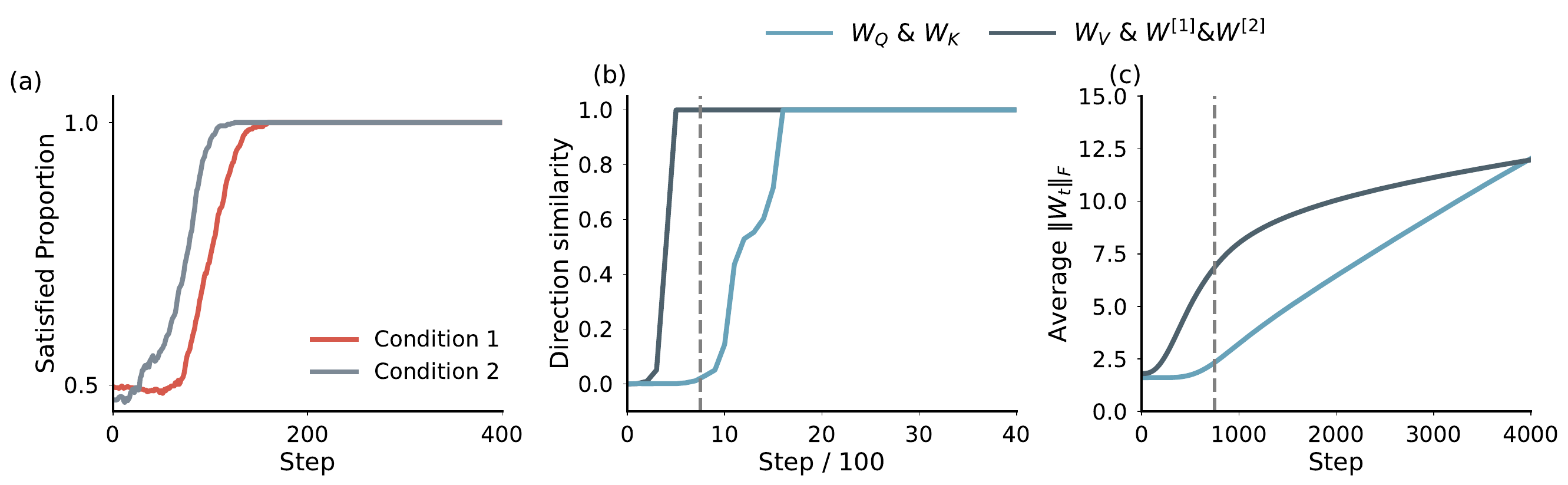}
    \caption{
    (a) Proportion of satisfied conditions in Assumption~\ref{assump::FinalStageCond}, measured as $\frac{|A_1|}{d_m}$ and $\frac{|A_2|}{d_m^2}$ (Definitions of $A_1$ and $A_2$ refer to Sec. \ref{subsec::Experimental_Setting}).
    (b) Similarity between singular vectors of two adjacent time steps. For example, let $\mU_t \boldsymbol{\Sigma}_t \mV_t$ and $\mU_{t+1} \boldsymbol{\Sigma}_{t+1} \mV_{t+1}$ be the singular value decompositions of parameter matrix $\mW_t$ and $\mW_{t+1}$. The similarity is defined as $\frac{1}{d_m} \sum_{i=1}^{d_m} \cos(\vu_t^i, \vu_{t+1}^i)$ (or $\frac{1}{d_m} \sum_{i=1}^{d_m} \cos(\boldsymbol{v}_t^i, \boldsymbol{v}_{t+1}^i)$).
    (c) Frobenius norms of parameter groups. }
    \label{fig:condition_zero_mean}
\end{figure}

First, we examine the condensation condition. Figure \ref{fig:condition_zero_mean}(a) plots the proportion of satisfied conditions in Assumption~\ref{assump::FinalStageCond}. The proportion rapidly approaches $1$ within the first $200$ training steps, confirming that the outer parameters quickly converge to a state where this assumption holds.

Next, we validate the assumption of dynamics separation. As discussed in the previous section, our observation that the gradual change of outer parameters during Stage 2 and flat loss curve (often means a critical point has appeared) already provide strong qualitative support for the first part of Assumption \ref{assump:dynamics_separation}. To analyze this more rigorously, we examine its empirical variant, Assumption \ref{assump:dynamics_separation_prime}. This assumption points that the direction of parameters remains unchanged and the leading-order loss changes very slowly.

Figure \ref{fig:condition_zero_mean}(b) shows the cosine similarity between the singular vectors of the outer parameter matrices at adjacent time steps. The similarity for all outer parameters remains extremely close to 1 after the first stage. This indicates that the subspace spanned by these parameters is highly stable, meaning their directional structure is effectively frozen. This stability, combined with the flat loss curve observed in Stage 2, provides compelling evidence for Assumption \ref{assump:dynamics_separation_prime}. Figure \ref{fig:condition_zero_mean}(c) validates the scale separation implied by the assumption. It shows that by the onset of Stage 2, the Frobenius norms of the outer parameters have grown significantly, while the norms of the attention parameters remain small and close to their initialization values. This confirms the expected scale difference between the two parameter groups, where outer parameters are $\mathcal{O} (1)$ and attention parameters are $\mathcal{O} (\delta)$.

\subsection{Real Task}


We further validate our theoretical predictions on a real-world language modeling benchmark, WikiText~\cite{merity2017pointer}. 
Unlike the synthetic setup, where anchor functions are explicitly defined, WikiText provides natural linguistic dependencies and high distributional variability. 
This allows us to test whether the proposed two-stage dynamics, early condensation of outer parameters followed by attention-driven rank collapse, persist in realistic Transformer training. In this setting, we employ a two-layer transformer with GeLU activation and residual connections. To keep the consistency of architecture and focused on the core dynamics, layer normalization is omitted. Further experimental settings are provided in Appendix~\ref{subsec::Experimental Setting of Real Task}.

\begin{figure}[h]
    \centering
    \includegraphics[width=0.9\linewidth]{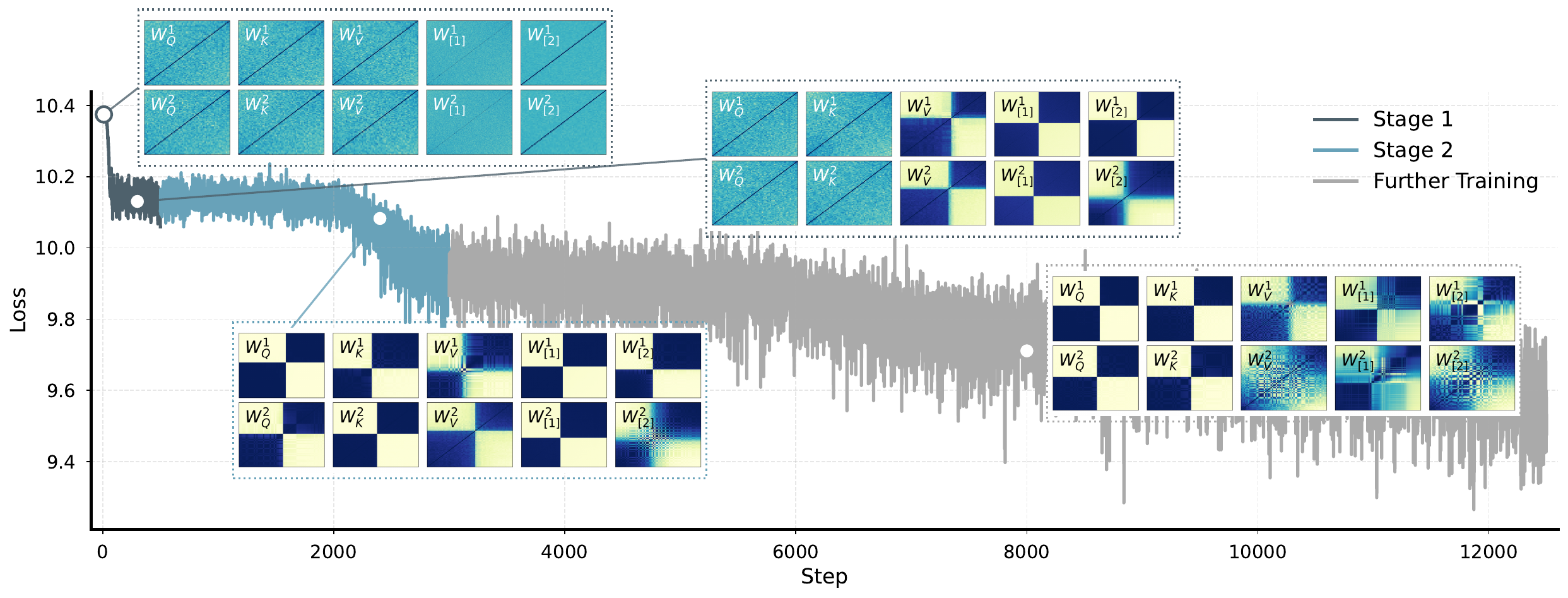}
    \caption{Evolution of cosine similarity between parameter of the two-layer transformer on WikiText dataset. Training dynamics also show a similar three-phase characteristic. Superscripts are used to indicate parameters of different layers, and subscripts indicate different parameters within a layer. For example, $\mW^{1}_V$ represents the value matrix of the first layer.}
    \label{fig:loss_zero_mean_real_task}
\end{figure}


As shown in Figure~\ref{fig:loss_zero_mean_real_task}, the two-layer Transformer on WikiText exhibits the same stage-wise dynamics observed in the synthetic experiments. 
During the initial phase, the outer parameters (\(\mW_V, \mW^{[1]}, \mW^{[2]}\)) in both layers undergo rapid condensation, while the attention weights (\(\mW_Q, \mW_K\)) remain largely unchanged. 
As training proceeds and the loss enters a plateau, the attention parameters begin to evolve, displaying a sharp rank collapse that reorganizes internal representations. 

This empirical observation confirms that the separation between outer-parameter condensation and attention-driven rank reduction is not an artifact of the synthetic dataset but also emerges naturally in real-world text modeling. 
The consistent appearance of this two-stage dynamic across both synthetic and natural settings suggests that implicit regularization, first through low-rank condensation and then through targeted attention adaptation, may serve as a general mechanism underlying the emergence of structured representations in Transformer models.

\section{Discussion}  
\subsection{Conclusion}
This work advances the theoretical understanding of transformer training dynamics by establishing a two-stage analytical framework. 
Through gradient flow analysis, we show that small initialization helps models escape degenerate regions via asymmetric weight updates, leading to condensation of parameter matrices toward task-relevant directions. 
In the subsequent stage, the key-query matrices undergo a coordinated collapse that further refines the learned representations. 
Together, these results clarify the mechanisms underlying the condensation and rank collapse phenomena, providing a principled foundation for future studies on Transformer optimization and generalization.

\subsection{Limitations}
\label{sec::limitation}
While this work provides valuable theoretical insights, its most significant constraint stems from analyzing exclusively binary classification scenarios: a simplification dictated by technical barriers in gradient flow analysis. This narrow scope inherently precludes insights into transformers' dynamics in practical multi-class classification or sequence-to-sequence learning contexts, where complex interactions between multiple prediction targets and attention mechanisms likely emerge. Though focused theoretical simplification is methodologically justified, extending this framework to broader problem domains remains critical for unifying theory with real-world transformer optimization. Future work should prioritize overcoming these technical limitations to theoretically verify whether our conclusions hold true beyond binary settings.

\section*{Acknowledgments and Disclosure of Funding}
This work is sponsored by the National Key R$\&$D Program of China Grant No. 2022YFA1008200 (T. L.) and Shanghai Institute for Mathematics and Interdisciplinary Sciences (T. L.). We thank Pengxiao Lin for insightful discussions and support and encouragement to the authors.

\newpage

\bibliography{main}
\bibliographystyle{plainnat}

\newpage
\appendix

\section{Theory Details}
\label{sec::Theory_Details_appendix}
\subsection{Theory Details for Blow up Dynamics}
\label{sec::blowup_appendix}
\subsubsection{Proof for Proposition 2}
\begin{proof}
Taking advantage of the inherent symmetry of the system, the proof focuses on analyzing the coupled dynamics of $\mW_{\boldsymbol{v}}$ and $\mW^{[1]}$:
    \begin{equation*}
        \frac{\D}{\D t} \left(\mW_{\boldsymbol{v},k}\right)^2  = 2 \sum_{k'} \mW_{\boldsymbol{v},k} \mW^{[1]}_{kk'} \mW^{[2]}_{k'} 
                               = \frac{\D}{\D t} \sum_{k'} \left(\mW^{[1]}_{kk'}\right)^2.
    \end{equation*}
    This finished the proof of the first two equations. We also derive the  relation between energy $E$ and the evolution of parameters
    \begin{equation*}
            \frac{\D}{\D t} \| \mW_{\boldsymbol{v}} \|_2^2  = \frac{\D}{\D t} \sum_{k} \mW_{\boldsymbol{v},k}^2 
                                            = 2 \sum_{k,k'}\mW_{\boldsymbol{v},k}  \mW^{[1]}_{kk'} \mW^{[2]}_{k'} 
                                            = 2E.
    \end{equation*}
    This is just the third equation.
\end{proof}

\subsubsection{Proof for Theorem 1}

\begin{proof}
Since we use Gaussian random initialization, the initialization satisfies Definition~\ref{def:non-degenerate-init} almost surely. Therefore, we establish our results under the assumptions specified in Definition~\ref{def:non-degenerate-init}. By local Lipshcitz condition on the right hand side of dynamical system Eq. (\ref{eq:SimpEffectiveModel}), it  has a solution for $t \in (0,T^*)$ where $T^*$ is maximum existence time of solution and can be  infinity. Taking derivative of $E$, we obtain 
\begin{equation}
\label{eq::DeriE}
    \dot{E}=\frac{\D}{\D t} \mW_{\bm{v}} \mW^{[1]} \mW^{[2]}
    =\| \dot{\mW}_{\bm{v}} \|_2^2 +\| \dot{\mW}^{[2]} \|_2^2 +\| \mW_{\bm{v}} \|_2^2 \| \mW^{[2]} \|_2^2 .
\end{equation}
The inequality of arithmetic and geometric means leads to  
\begin{equation*}
\label{eq::IneqDeriE}
    \begin{aligned}
       \dot{E} & \ge 3 (\| \dot{\mW}_{\bm{v}} \|_2^2 \| \dot{\mW}^{[2]} \|_2^2  \| \mW_{\bm{v}} \|_2^2 \|\mW^{[2]} \|_2^2)^{\frac{1}{3}} \\
    & \ge [\langle \dot{\mW}_{\bm{v}} \mW_{\bm{v}} \rangle^2  \langle \dot{\mW}^{[2]} \mW^{[2]} \rangle^2]^{\frac{1}{3} } \\
    &= 3E^{\frac{4}{3}}.
    \end{aligned}
\end{equation*}
This implies that energy $E$ increase monotonically. If  $E(0)>0$, 
\begin{equation*}
    \frac{\D}{\D  t} E^{-\frac{1}{3}} \le -1.
\end{equation*}
Integrating both sides of the inequality yields a lower bound for the energy $E$
\begin{equation}
\label{eq::LowBd}
    E(t) \ge  \frac{1}{(E(0)^{-\frac{1}{3}} -t)^3}.
\end{equation}
Thus, in the case where $E(0)>0$, the dynamical system explodes before  $T^* \le E(0)^{-\frac{1}{3}}$.

In the case where $E(0) \le 0$, we consider $-E(t)$ instead, and obtain 
\begin{equation}
\label{eq::SimpleUpperBound}
    E(t) \ge - \frac{1}{(t + (-E(0))^{-\frac{1}{3}})^3}.
\end{equation}

We claim that there exists some time $t_0 >0$, such that $E(t_0)>0$. This claim can be proved by contradiction.

Suppose that $E(t) \le 0$ for all $0<t<T^*$. Recall that $\dot{E}(t) \ge 0$ throughout this interval. The boundedness of $\mW_{\bm{v}}$, $\mW^{[1]}$, and $\mW^{[2]}$, together with the monotonicity of energy $E$,  implies that $E(T^*)=\lim_{t \rightarrow T^*}  E(t)$ exists and satisfies $-\infty < E(T^*) \le 0$. We now consider different cases separately.

(\romannumeral 1) The case of  $T^*< + \infty $. The solutions can be extended to a time larger than $T^*$ since  $\| \mW_{\bm{v}} \|_2 $, $ \| \mW^{[1]}\|_{\text{F}}$ , $\|\mW^{[2]}\|_2$ are bounded due to the conservation law. This contradicts the definition of $T^*$. 

(\romannumeral 2) The case of  $T^* = + \infty$ and $E(T^*)<0$. That is $\lim_{t \rightarrow +\infty} E (t) <0$. However, this contradicts  Equation \eqref{eq::SimpleUpperBound}.

(\romannumeral 3)  The case of  $T^* = + \infty$ and $E(T^*)=0$. That is $\lim_{t \rightarrow +\infty} E (t) =0$. We prove this case in three steps.

\textbf{Step 1}: We show that $\lim_{t \rightarrow \infty} \| \mW_{\bm{v}}(t)\|_2^2 \| \mW^{[2]}(t)\|_2^2 =0$. Since $E(t) \le 0$ for all $t$, the quantities $\|\mW_{\bm{v}}\|_2^2$, $ \|\mW^{[1]}\|_{\mathrm{F}}^2 $, $\| \mW^{[2]} \|_2^2 $ are monotonically decreasing. However, each of them is bounded below by zero, and hence they all converge to finite limits and remain uniformly bounded. 

Moreover, note that $\dot{E} \ge 0$ for all $t$.  If $\liminf_{t \rightarrow \infty} \dot{E}(t) >0$, it contradicts the fact that $\lim_{t \rightarrow \infty} E(t)= 0$. Therefore, it must hold that 
\begin{equation*}
    \lim_{t \rightarrow \infty} \| \mW_{\bm{v}} (t)\|_2^2 \| \mW^{[2]}(t)\|_2^2 =0.
\end{equation*}
This implies that either $\lim_{t \rightarrow \infty} \| \mW_{\bm{v}} (t)\|_2^2 =0 $ or $\lim_{t \rightarrow \infty} \| \mW^{[2]} (t)\|_2^2 =0$. 
 We can also obtain  $\lim_{t \rightarrow \infty} \|\dot{\mW}^{[1]}(t) \|_{\mathrm{F}}^2 =0$ since $\dot{\mW}^{[1]}= \mW_{\bm{v}}^{\T}  {\mW^{[2]}} $.  

\textbf{Step 2}: We show $\lim_{t \rightarrow \infty} \dot{E}(t)=0$. Without loss of generality, we assume that $\lim_{t \rightarrow \infty} \| \mW^{[2]} (t)\|_2^2 =0$. The case of $\lim_{t \rightarrow \infty} \| \mW_{\bm{v}} (t)\|_2^2 =0$ is similar. That is $\|\mW_{\bm{v}}(0)\|_2 > \norm{\mW^{[2]}}_2$. By conservation law,  we have $\lim_{t \rightarrow \infty} \| \dot{\mW}_{\bm{v}}(t) \|_2^2 =0.$

Considering the second derivative of $\mW^{[2]}_{k'}$, we obtain 
\begin{equation*}
    \ddot{\mW}^{[2]}_{k'} = \sum_{k} \left( \sum_{l} \mW^{[1]}_{kl} \mW^{[2]}_l \mW^{[1]}_{kk'} + \mW_{\bm{v},k}^2 \mW^{[1]}_{k'} \right).
\end{equation*}
Thus $\lim_{t \rightarrow \infty}\|\ddot{\mW}^{[1]}(t)\|_2^2=0$ since $\lim_{t \rightarrow \infty} \norm{\mW^{[1]}(t)}_2^2 = 0$. Note that  $\dot{E}(t) \ge 0$ and $\lim_{t \rightarrow \infty} E(t)=0$. Recall  that $\liminf_{t \rightarrow \infty} \dot{E}(t)=0$ and $\dot{E} $ are bounded. And we also have $\| \dot{\mW}^{[2]}\|_2^2 \le M$. We claim that $\lim_{t \rightarrow \infty} \| \dot{\mW}^{[2]} (t)\|_2^2=0$, which implies 
\begin{equation}
\label{eq::LimitDireEnergy}
    \lim_{t \rightarrow \infty} \dot{E}(t)=0.
\end{equation}

Since $\lim_{t \rightarrow \infty} \| \mW^{[2]} (t)\| _2 =0$ and $\lim_{t \rightarrow \infty} \|\ddot{\mW}^{[2]}(t) \| _2 =0$.  Using Taylor expansion, we have for some $\varphi \in [t,t+1]$
\begin{equation*}
    \mW^{[2]}_k(t+1)=\mW^{[2]}_k(t)+ \dot{\mW}^{[2]}_k (t)+\frac{1}{2} \ddot{\mW}^{[2]}_k (\varphi) , \forall k,
\end{equation*}
which implies $\lim_{t \rightarrow \infty} \|\dot{\mW}_{\boldsymbol{v}} (t)\| _2 =0$. Therefore, the assertion holds.

\textbf{Step 3}: We show that Eq. (\ref{eq::LimitDireEnergy}) contradicts  the condition in Definition \ref{def:non-degenerate-init}. By direct calculation, we obtain
\begin{equation*}
    \begin{aligned}
    \frac{\D}{\D t} \| \dot{\mW}_{\boldsymbol{v}} \|_2^2 &=\frac{\D }{\D t} ({\mW^{[2]}}^{\T} {\mW^{[1]}}^{\T} \mW^{[1]} \mW^{[2]}) \\
    &=\dot{\mW}^{{[2]}^\T} {\mW}^{{[1]}^\T} \mW^{[1]} \mW^{[2]} + {\mW}^{{[2]}^\T} \dot{\mW}^{[1]^{\T}} \mW^{[1]} \mW^{[2]}  \\
    &~+ \mW^{[2]^{\T}} \mW^{[1]^{\T}} \dot{\mW}^{[1]} \mW^{[2]}+ \mW^{[2]^{\T}} \mW^{[1]^{\T}} \mW^{[1]} \dot{\mW}^{[2]}\\
    &= \mW_{\boldsymbol{v}} \mW^{[1]} \mW^{[1]^\T} \mW^{[1]} \mW^{[2]}+ \mW^{[2]^\T} \mW^{[2]} \mW_{\boldsymbol{v}} \mW^{[1]} \mW^{[2]}\\
    &~~~~+ \mW^{[2]^\T} \mW^{[1]^\T} \mW_{\boldsymbol{v}}^{\T} \mW^{[2]^\T} \mW^{[2]} + \mW^{[2]^\T} \mW^{[1]^\T} \mW^{[1]} \mW^{[1]^\T} \mW_{\boldsymbol{v}}^{\T} \\
    &= 2E  \| \mW^{[1]} \|_2^2 + 2 \dot{\mW}_{\boldsymbol{v}} \mW^{[1]} \dot{\mW}^{[2]}
\end{aligned}
\end{equation*}
and
\begin{equation*}
    \begin{aligned}
    \frac{\D }{\D t} \| \dot{\mW}^{[2]} \|_2^2 & = \frac{\D}{\D t} (\mW \mW^{[1]} \mW^{[1]^\T} \mW_{\boldsymbol{v}}^{\T}) \\
    &=\dot{\mW}_{\boldsymbol{v}} \mW^{[1]} \mW^{[1]^\T} \mW_{\boldsymbol{v}}^{\T} + \mW_{\boldsymbol{v}} \dot{\mW}^{[1]} \mW^{[1]^\T} \mW_{\boldsymbol{v}}^{\T} \\
    &~~~~+ \mW_{\boldsymbol{v}} \mW^{[1]} \dot{\mW}^{[1]^\T} \mW_{\boldsymbol{v}}^{\T}+ \mW_{\boldsymbol{v}}^{\T} \mW^{[1]} \mW^{[1]^\T} \dot{\mW}_{\boldsymbol{v}}^{\T}\\
    &= \mW^{[2]^\T} \mW^{[1]^\T} \mW^{[1]} \mW^{[1]^\T} \mW_{\boldsymbol{v}}^{\T}+ \mW_{\boldsymbol{v}} \mW_{\boldsymbol{v}}^{\T} \mW^{[2]^\T} \mW^{[1]^\T} \mW_{\boldsymbol{v}}^{\T}\\
    &~~~~+\mW_{\boldsymbol{v}} \vb \mW_{\boldsymbol{v}} \mW_{\boldsymbol{v}} \mW_{\boldsymbol{v}}^{\T} + \mW_{\boldsymbol{v}} \mW^{[1]} \mW^{[1]^\T} \mW^{[1]} \mW^{[2]} \\
    &=2E  \| \mW_{\boldsymbol{v}} \|_2^2 + 2 \dot{\mW}_{\boldsymbol{v}}^{\T} \mW^{[1]} \dot{\mW}^{[2]}.
    \end{aligned}
\end{equation*}
Therefore,
\begin{equation*}
    \frac{\D}{\D t} \| \dot{\mW}_{\boldsymbol{v}} \|_2^2 - \frac{\D}{\D t} \| \dot{\mW}^{[2]} \|_2^2 =2E (\| \mW^{[2]} (0)\| _2^2 -\|\mW_{\boldsymbol{v}} (0)\| _2^2).
\end{equation*}
Integrating both sides of the equality, we obtain
\begin{equation*}
    \lim_{t \rightarrow \infty} \| \dot{\mW}_{\boldsymbol{v}}(t) \|_2^2 - \| \dot{\mW}^{[2]} (t) \|_2^2 =\| \dot{\mW}_{\boldsymbol{v}} (0) \|_2^2 - \| \dot{\mW}^{[2]} (0) \|_2^2 -\|\mW^{[2]}(0)\|_2^2 ( \|\mW^{[2]}(0) \|_2^2 -\| \mW_{\boldsymbol{v}}(0) \|_2^2).
\end{equation*}
However, according to Definition \ref{def:non-degenerate-init} and the fact that $\lim_{t \rightarrow \infty} \|\dot{\mW}^{[2]}(t) \|_2 = 0$, we have that
\begin{equation*}
    \lim_{t \rightarrow \infty} \| \dot{\mW}_{\boldsymbol{v}}(t) \|_2^2  \neq 0.
\end{equation*}
Based on Eq. (\ref{eq::DeriE}), we have 
\begin{equation*}
\begin{aligned}
        \lim_{t \rightarrow \infty} \dot{E}(t) &=\lim_{t \rightarrow \infty} \| \dot{\mW}_{\boldsymbol{v}} (t) \|_2^2 +\| \dot{\mW}^{[2]}(t) \|_2^2 +\| \mW_{\boldsymbol{v}} (t) \|_2^2 \| \mW^{[2]} (t) \|_2^2  \\
        &= \lim_{t \rightarrow \infty} \| \dot{\mW}_{\boldsymbol{v}}(t) \|_2^2  \neq 0.
\end{aligned}
\end{equation*}
It contradicts with Eq. (\ref{eq::LimitDireEnergy}) which claims  $\lim_{t \rightarrow \infty} \dot{E} (t)=0$. This completes the proof.
\end{proof}

\subsection{Theory Details for Condensation}
\label{sec::condensation_appendix}
In this section, we prove the main theorems which characterize the condensation.  In retrospect of  the proof of Theorem \ref{thm::BlowUp},  Eq. (\ref{eq::LowBd}) provides a  lower bound that leads to the presence of explosion. However, this inequality leaves the precise growth rate of energy $E$ undetermined. 
The key idea here is that Assumption \ref{assump::FinalStageCond} can give us an upper limit on how fast the energy can grow. Once we understand this growth rate, we can then move forward with proving the main theorems.

We begin our proof by the following proposition.

\begin{prop}[induction]
\label{prop::Induction}
Consider dynamical system Eq. (\ref{eq:SimpEffectiveModel}). If Assumption \ref{assump::FinalStageCond} holds at some time $t_0 $ with $t_0 < T^*$, then Assumption \ref{assump::FinalStageCond}  will hold at $ t \in (t_0, T^*)$. 
\end{prop}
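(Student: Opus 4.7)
My plan is to argue by forward invariance of the open set $\mathcal{S} \subset \mathbb{R}^{\dim\vtheta}$ cut out by the four strict sign conditions in Assumption \ref{assump::FinalStageCond}. Since $\vtheta(t_0) \in \mathcal{S}$ and the gradient flow of Eq.~(\ref{eq:SimpEffectiveModel}) is smooth on $[t_0, T^*)$, I can define $\tau \in (t_0, T^*]$ to be the supremum of times for which $\vtheta(t)$ stays in $\mathcal{S}$; proving the proposition reduces to showing $\tau = T^*$.

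The core technical step is to compute, along the effective dynamics, the time derivatives of the four families of quantities singled out by the assumption:
\begin{equation*}
A_i := \mW^{[2]}_i\bigl(\mW_{\bm{v}} \mW^{[1],i}\bigr),\quad B_i := \mW_{\bm{v},i}\bigl(\mW^{[1]}_i \mW^{[2]}\bigr),\quad C_{ij} := \langle \mW^{[2]}_i \mW^{[1],i},\,\mW^{[2]}_j \mW^{[1],j}\rangle,\quad D_{ij} := \langle \mW_{\bm{v},i} \mW^{[1]}_i,\,\mW_{\bm{v},j} \mW^{[1]}_j\rangle.
\end{equation*}
Using $\dot{\mW}^{[2]}_i = \mW_{\bm{v}} \mW^{[1],i}$, $\dot{\mW}_{\bm{v},k} = \mW^{[1]}_k \mW^{[2]}$, and $\dot{\mW}^{[1]}_{ki} = \mW_{\bm{v},k}\mW^{[2]}_i$, I expect each derivative to decompose as a sum of squares (unconditionally non-negative) plus a linear combination of the \emph{other} assumption quantities with non-negative coefficients. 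Schematically,
\begin{equation*}
\dot{A}_i \;=\; \bigl(\mW_{\bm{v}}\mW^{[1],i}\bigr)^2 + \|\mW_{\bm{v}}\|_2^2\,\bigl(\mW^{[2]}_i\bigr)^2 + \sum_{l} C_{il},
\end{equation*}
and parallel identities hold for $\dot B_i$, $\dot C_{ij}$, $\dot D_{ij}$ (with the $C$-terms replaced by $D$-terms or $A,B$-terms, respectively, exploiting the structural symmetry between $\mW_{\bm{v}}$ and $\mW^{[2]}$).

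Granted these structural identities, the proof is finished by a standard first-exit contradiction. Suppose $\tau < T^*$. By continuity, $\vtheta(\tau)\in\overline{\mathcal{S}}$ and at least one of $A_i,B_i,C_{ij},D_{ij}$ vanishes at $\tau$. The decomposition above, evaluated at $\tau$ with all surviving terms non-negative, forces the derivative of the vanishing coordinate to be $\ge 0$ at $\tau$; a case-by-case check of which factor is zero shows the inequality is in fact strict, contradicting the fact that this coordinate was strictly positive on $[t_0,\tau)$.

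The main obstacle will be the borderline configurations on $\partial\mathcal{S}$ where a vanishing scalar factors as a product of two simultaneously vanishing pieces, e.g.\ $\mW^{[2]}_{i}(\tau) = 0$ \emph{and} $\mW_{\bm{v}}\mW^{[1],i}(\tau) = 0$, which makes the sum-of-squares part of $\dot A_i$ vanish and leaves only the weakly-nonnegative $\sum_l C_{il}$ to control the sign. To rule these out I plan to invoke the conservation laws of Proposition \ref{prop::ConservationLaw}, which pin the column norms of $\mW^{[1]}$ to $\|\mW^{[2]}\|_2^2$ and the row norms to $\|\mW_{\bm{v}}\|_2^2$ and thereby prevent both factors from vanishing simultaneously in generic initialization, together with a second-order Taylor expansion of the offending coordinate near $\tau$; in the residual case the expansion will exhibit a quadratic tangency that shows the trajectory cannot genuinely exit $\mathcal{S}$, completing the invariance argument.
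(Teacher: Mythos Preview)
Your approach is the same as the paper's: compute the time derivatives of the four families $A_i,B_i,C_{ij},D_{ij}$ and show each is strictly positive while Assumption~\ref{assump::FinalStageCond} holds. Your formula for $\dot A_i$ is exactly what the paper obtains (modulo notation), and by the $\mW_{\bm v}\leftrightarrow\mW^{[2]}$ symmetry $\dot B_i$ has the analogous form with $D$-terms. Two remarks, though.

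First, the schematic you write for $\dot C_{ij}$ and $\dot D_{ij}$ is not quite the structure that falls out. The paper computes
\[
\dot C_{ij}=A_j\Bigl((\mW^{[2]}_i)^2+\tfrac{C_{ij}}{(\mW^{[2]}_j)^2}\Bigr)+A_i\Bigl((\mW^{[2]}_j)^2+\tfrac{C_{ij}}{(\mW^{[2]}_i)^2}\Bigr),
\]
i.e.\ a sum of \emph{products} of two assumption quantities with positive scalar weights, rather than ``squares plus a linear combination with nonnegative coefficients.'' The positivity under Assumption~\ref{assump::FinalStageCond} is immediate from this factored form, so your plan still goes through, but the decomposition you sketch will not materialize.

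Second, and more importantly, the elaborate first-exit analysis you propose---checking strict positivity of the derivative at $\tau$, handling simultaneous vanishing of factors, invoking conservation laws and second-order Taylor expansions---is entirely unnecessary. Once you know that $\dot A_i,\dot B_i,\dot C_{ij},\dot D_{ij}>0$ on the open interval $[t_0,\tau)$ where the assumption holds, every one of these quantities is \emph{monotonically increasing} there, hence bounded below by its strictly positive value at $t_0$. By continuity, each quantity at $\tau$ is at least its value at $t_0>0$, so none can vanish at $\tau$; this is the contradiction, and it never requires evaluating any derivative at $\tau$ or examining $\partial\mathcal S$. This is precisely the paper's (implicit) argument, and it avoids the boundary subtleties you flag as the ``main obstacle.''
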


\begin{proof}
    First, we consider the second condition in Assumption \ref{assump::FinalStageCond}. By direct calculation, we have   
\begin{equation*}
\label{eq::CrossTerm1}
\begin{aligned}
    \frac{\D}{\D t} \left\langle \mW^{[2]}_i \mW^{[1],i},  \mW^{[2]}_j \mW^{[1],j} \right\rangle 
    &=\left(\mW^{[2]}_j \mW_{\boldsymbol{v}} \mW^{[1],j} \right) \left( \left(\mW^{[2]}_i\right)^2 + \frac{1}{\left(\mW^{[2]}_j\right)^2} \mW^{[2]}_i \mW^{[2]}_j {\mW^{[1],i}}^{\T} \mW^{[1],j} \right)  \\
    &~~~~+\left(\mW^{[2]}_i \mW_{\boldsymbol{v}} \mW^{[1],i} \right) \left( \left(\mW^{[2]}_j \right)^2 + \frac{1}{ \left(\mW^{[2]}_i\right)^2} \mW^{[2]}_i \mW^{[2]}_j {\mW^{[1],i}}^{\T} \mW^{[1],j} \right) ,\\
\end{aligned}
\end{equation*}
and 
\begin{equation*}
\label{eq::CrossTerm2}
\begin{aligned}
 \frac{\D}{\D t} \left\langle \mW_{\boldsymbol{v},i} \mW^{[1]}_i, \mW_{\boldsymbol{v},j} \mW^{[1]}_j \right\rangle 
     &=\left(\mW_{\boldsymbol{v},j} \mW^{[1]}_j \mW^{[2]} \right) \left( 
 \mW_{\boldsymbol{v},i}^2 + \frac{1}{\mW_{\boldsymbol{v},j}^2} \mW_{\boldsymbol{v},i} \mW_{\boldsymbol{v},j} \mW^{[1]}_i {\mW^{[1]}_j}^{\T} \right)  \\
     &~~~~+\left(\mW_{\boldsymbol{v},i} \mW^{[1]}_i \mW^{[2]} \right) \left( \mW_{\boldsymbol{v},j}^2 + \frac{1}{\mW_{\boldsymbol{v},i}^2} \mW_{\boldsymbol{v},i} \mW_{\boldsymbol{v},j} \mW^{[1]}_i {\mW^{[1]}_j}^{\T} \right).
\end{aligned}
\end{equation*}
By Assumption \ref{assump::FinalStageCond}, we know the above equations are larger than $0$. So $ \left\langle \mW^{[2]}_i \mW^{[1],i},  \mW^{[2]}_j \mW^{[1],j} \right\rangle $ and $ \left\langle \mW_{\boldsymbol{v},i} \mW^{[1]}_i, \mW_{\boldsymbol{v},j} \mW^{[1]}_j \right\rangle$ will be monotonically increasing since $t_0$.

Calculating the derivative of left hand side of first condition, we have
\begin{equation*}
    \frac{\D}{\D t} \mW^{[2]}_i \mW_{\boldsymbol{v}} \mW^{[1],i}= \dot{\mW}_{\boldsymbol{v},i}^2 + \sum_{j=1}^{d_m} \mW_{\boldsymbol{v},i} \mW_{\boldsymbol{v},j} {\mW^{[1],j}}^{\T} \mW^{[1],i} + \left(\mW^{[2]}_i\right)^2 \| \mW_{\boldsymbol{v}} \|_2^2
\end{equation*}
and
\begin{equation*}
    \frac{\D}{\D t} \mW_{\boldsymbol{v},i} \mW^{[1]}_i \mW^{[2]} =\dot{\mW}_{{\boldsymbol{v},i}}^2 + \sum_{j=1}^{d_m} \mW_{\boldsymbol{v},i} \mW_{\boldsymbol{v},j} \mW^{[1]}_i {\mW^{[1]}_j}^{\T} + \left(\mW_{\boldsymbol{v},i}\right)^2 \| \mW^{[2]} \|_2^2.
\end{equation*}
 Hence, $\mW^{[2]}_i \mW_{\boldsymbol{v}} \mW^{[1],i}$ and $\mW_{\boldsymbol{v},i} \mW^{[1]}_i \mW^{[2]}$ will also increase monotonically since $t_0$. Therefore the condensation condition will hold until $T^*$.
\end{proof}

Next, we analyze the angle relation between $\mW^{[2]}$ and its derivative $\dot{\mW}^{[2]}$. For the simplicity of proof and description, we adopt a standardized notation to represent angles between distinct vectors throughout the ensuing discussion.
\begin{defi}
\label{defi::Angles}
    Let $\xi_{ij}(t)$ denote the angle between the vectors $\mW_{\boldsymbol{v},i} (t) \mW^{[1]}_i (t)$ and $\mW_{\boldsymbol{v},j} (t) \mW^{[1]}_i (t)$, and $\psi_i (t)$ denote the angle between the vectors $\dot{\mW}^{[2]} (t)$ and $\mW_{\boldsymbol{v},i} (t) \mW^{[1]}_i (t)$. Let $\varphi_i (t)$ denote the angle between $\mW^{[2]} (t)$ and $\mW_{\boldsymbol{v},i} (t) \mW^{[1]}_i (t)$, while $\zeta (t)$ denote the angle between $\mW^{[2]}(t)$ and $\dot{\mW}^{[2]} (t)$.  In subsequent expressions, the variable $t$ will be omitted unless there is a specific emphasis on the temporal change of angles.
\end{defi}

We divide the entries of vector $\mW_{\boldsymbol{v}}$ into two classes according to whether  their limit is finite. 

\begin{prop}
\label{prop::Angle}
Suppose that Assumption \ref{assump::FinalStageCond} holds. Consider the effective dynamics Eq. (\ref{eq:SimpEffectiveModel}). The indices $[d_m]$ can be partitioned into two disjoint classes, denoted by $C_1= \{ i_1 , \dots , i_k \} \neq \varnothing$ and $C_2=[d_m] \setminus C_1$.  The partition satisfies the following properties: 

\begin{enumerate}[leftmargin=*,label=(\roman*)]
\item For each $i \in [m]$, the limits of $\mW_{\boldsymbol{v},i}$ exist. In particular, 
    \begin{equation}
    \lim_{t \rightarrow T^*} \mW_{\boldsymbol{v},i}=
    \left\{
    \begin{aligned}
    & \pm \infty, \quad &i \in C_1,\\
    &\mW_{\boldsymbol{v},i}^{*}, \quad &i \in C_2.\\
    \end{aligned}
   \right.
    \end{equation}
\item The angle $\xi_{ij}$  between the vectors $\mW_{\boldsymbol{v},i} (t) \mW^{[1]}_i (t)$ and $\mW_{\boldsymbol{v},j} (t) \mW^{[1]}_i (t)$, as defined in Definition \ref{defi::Angles}, fulfills the condition:
\begin{equation}
\label{eq::Cos1}
\lim_{t \rightarrow T^*} \cos \xi_{ij} = 1, \quad \text{for } i, j \in C_1.
\end{equation}
\item The following limits exist
\begin{equation}
    \label{eq::AdotVsCnorm}
        \lim_{t \rightarrow T^*} \frac{ \norm{\dot{\mW}^{[2]}}_2}{ \norm{\mW_{\boldsymbol{v}}}_2^2 } =  \lim_{t \rightarrow T^*} \frac{ \norm{\dot{\mW}^{[2]}}_2}{ \norm{\mW^{[2]}}_2^2 }  = 1 .
    \end{equation}
\end{enumerate}
\end{prop}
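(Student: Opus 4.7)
The plan is to combine the sign constraint from Assumption \ref{assump::FinalStageCond} with a pairwise conservation identity that generalizes Proposition \ref{prop::ConservationLaw} from $i=j$ to arbitrary pairs; the three claims then reduce to closed-form asymptotics. For (i), the effective dynamics give $\dot{\mW}_{\boldsymbol{v},i}=\mW^{[1]}_i\mW^{[2]}$, so Assumption \ref{assump::FinalStageCond}(1) reads $\mW_{\boldsymbol{v},i}\dot{\mW}_{\boldsymbol{v},i}>0$. Therefore $\mW_{\boldsymbol{v},i}^2$ is strictly increasing on $(t_0,T^*)$, the sign of $\mW_{\boldsymbol{v},i}(t)$ is locked, and its limit exists in $\{\pm\infty\}\cup(\mathbb{R}\setminus\{0\})$. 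I partition indices accordingly into $C_1$ (infinite limit) and $C_2$ (finite nonzero limit). To see $C_1\neq\varnothing$, suppose otherwise: then every $|\mW_{\boldsymbol{v},i}|$ stays bounded, the first conservation law in Proposition \ref{prop::ConservationLaw} forces $\mW^{[1]}$ to be bounded, the second then forces $\mW^{[2]}$ to be bounded, and hence $E$ is bounded, contradicting Theorem \ref{thm::BlowUp}.

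The key new ingredient is a pairwise conservation identity. Because $\dot{\mW}^{[1]}_i=\mW_{\boldsymbol{v},i}(\mW^{[2]})^{\T}$, every row of $\dot{\mW}^{[1]}$ is parallel to the same vector $(\mW^{[2]})^{\T}$, and a short computation yields $\frac{\D}{\D t}\!\big(\mW^{[1]}_i(\mW^{[1]}_j)^{\T}\big)=\mW_{\boldsymbol{v},i}\dot{\mW}_{\boldsymbol{v},j}+\mW_{\boldsymbol{v},j}\dot{\mW}_{\boldsymbol{v},i}=\frac{\D}{\D t}\big(\mW_{\boldsymbol{v},i}\mW_{\boldsymbol{v},j}\big)$. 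Hence $D_{ij}:=\mW_{\boldsymbol{v},i}\mW_{\boldsymbol{v},j}-\mW^{[1]}_i(\mW^{[1]}_j)^{\T}$ is conserved; it reduces to the diagonal identity of Proposition \ref{prop::ConservationLaw} at $i=j$ but supplies new off-diagonal information at $i\neq j$, and this is enough to reduce every quantity of interest to a rational function of the $\mW_{\boldsymbol{v},i}$'s with constant coefficients.

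With $D_{ij}$ fixed, the two remaining claims become algebraic. For (ii), $\cos\xi_{ij}=\mW_{\boldsymbol{v},i}\mW_{\boldsymbol{v},j}(\mW_{\boldsymbol{v},i}\mW_{\boldsymbol{v},j}-D_{ij})\big/\big(|\mW_{\boldsymbol{v},i}||\mW_{\boldsymbol{v},j}|\sqrt{\mW_{\boldsymbol{v},i}^2-D_{ii}}\sqrt{\mW_{\boldsymbol{v},j}^2-D_{jj}}\big)$; for $i,j\in C_1$ the $D$-corrections are subleading, the factor $\sign(\mW_{\boldsymbol{v},i}\mW_{\boldsymbol{v},j})^2=1$ is automatic, and the ratio tends to $1$. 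For (iii), expanding $\|\dot{\mW}^{[2]}\|_2^2=\mW_{\boldsymbol{v}}\mW^{[1]}(\mW^{[1]})^{\T}\mW_{\boldsymbol{v}}^{\T}$ via the same identity gives $\|\dot{\mW}^{[2]}\|_2^2=\|\mW_{\boldsymbol{v}}\|_2^4-\mW_{\boldsymbol{v}} D\mW_{\boldsymbol{v}}^{\T}$, whose second term is $O(\|\mW_{\boldsymbol{v}}\|_2^2)$ by constancy of $D$, yielding $\|\dot{\mW}^{[2]}\|_2/\|\mW_{\boldsymbol{v}}\|_2^2\to 1$. Subtracting the two conservation laws in Proposition \ref{prop::ConservationLaw} gives $\|\mW_{\boldsymbol{v}}\|_2^2-\|\mW^{[2]}\|_2^2=\,$constant, so $\|\mW^{[2]}\|_2^2/\|\mW_{\boldsymbol{v}}\|_2^2\to 1$ and the second limit in (iii) follows.

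The main obstacle is not any single computation but recognizing the pairwise identity for $D_{ij}$ above; once that structural piece is in hand, (i)--(iii) collapse to routine asymptotics in $\|\mW_{\boldsymbol{v}}\|_2$. A secondary technical point is that the condensation condition must persist throughout $(t_0,T^*)$ so that the sign constraints used in (i) remain valid along the entire trajectory, which is supplied by Proposition \ref{prop::Induction}.
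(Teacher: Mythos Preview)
Your proof is correct and genuinely cleaner than the paper's. The paper does not state your off-diagonal conservation identity $D_{ij}=\mW_{\bm v,i}\mW_{\bm v,j}-\mW^{[1]}_i(\mW^{[1]}_j)^{\T}$; instead, for (ii) it sets $p=\langle \mW_{\bm v,i}\mW^{[1]}_i,\mW_{\bm v,j}\mW^{[1]}_j\rangle$ and $q=\mW_{\bm v,i}^2\mW_{\bm v,j}^2$, derives the ODE $\D p/\D q=\tfrac12(1+p/q)$, solves it by the substitution $u=p/q$, and reads off $u\to 1$ as $q\to\infty$. Your identity short-circuits this: since $p=q-D_{ij}\,\sign(\mW_{\bm v,i}\mW_{\bm v,j})\sqrt{q}$, one has $p/q\to 1$ immediately, and indeed differentiating this closed form reproduces exactly the paper's ODE. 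For (iii) the paper expands $\|\dot{\mW}^{[2]}\|_2^2=\sum_{i,j}\langle \mW_{\bm v,i}\mW^{[1]}_i,\mW_{\bm v,j}\mW^{[1]}_j\rangle$ and splits the sum into $C_1\times C_1$, $C_1\times C_2$, $C_2\times C_2$ blocks, controlling each separately; your matrix identity $\|\dot{\mW}^{[2]}\|_2^2=\|\mW_{\bm v}\|_2^4-\mW_{\bm v}D\mW_{\bm v}^{\T}$ collapses this to one line. The trade-off is that the paper's route is more computational but requires no structural insight beyond the diagonal conservation laws already stated in Proposition~\ref{prop::ConservationLaw}, whereas your route is shorter but hinges on spotting the off-diagonal invariant.
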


\begin{proof}
1.
    First, we find that for every index $i$ the $\left(\mW_{\boldsymbol{v},i}\right)^2$ increases monotonically. So their limits exist. We define the index set of the parameters that tend to infinity as $C_1$ and the others as $C_2$. Based on Theorem \ref{thm::BlowUp} and conservation laws, we know that $C_1 \neq \varnothing $. Property 1 is automatically satisfied due to our partition.

2. We introduce new variables
    \begin{equation*}
    \left \{
    \begin{aligned}
    & p= \langle \mW_{\boldsymbol{v},i} \mW^{[1]}_i , \mW_{\boldsymbol{v},j} \mW^{[1]}_j \rangle,\\
    & q= \mW_{\boldsymbol{v},i}^2 \mW_{\boldsymbol{v},j}^2.
    \end{aligned}
    \right.
    \end{equation*}
    
    According to the proof of  Proposition \ref{prop::Induction}, we find that
    \begin{equation}
    \label{eq::DyDt}
    \begin{aligned}
    \frac{\D p }{\D t}  
    &=\left(\mW_{\boldsymbol{v},j} \mW^{[1]}_j \mW^{[2]} \right) \left( \mW_{\boldsymbol{v},i}^2 + \frac{1}{\mW_{\boldsymbol{v},j}^2} \mW_{\boldsymbol{v},i} \mW_{\boldsymbol{v},j} \mW^{[1]}_i {\mW^{[1]}_j}^{\T} \right)  \\
     &~~~~+\left(\mW_{\boldsymbol{v},i} \mW^{[1]}_i \mW^{[2]} \right) \left(\mW_{\boldsymbol{v},j}^2 + \frac{1}{\mW_{\boldsymbol{v},i}^2} \mW_{\boldsymbol{v},i} \mW_{\boldsymbol{v},j} \mW^{[1]}_i {\mW^{[1]}_j}^{\T} \right) \\
    &=  \left( \mW_{\boldsymbol{v},j} \mW^{[1]}_j \mW^{[2]}  \mW_{\boldsymbol{v},i}^2 + \mW_{\boldsymbol{v},i} \mW^{[1]}_i \mW^{[2]}  \mW_{\boldsymbol{v},j}^2 \right) \left(1 + \frac{p}{q} \right).
    \end{aligned}
    \end{equation}

Thanks to Eq. (\ref{eq:SimpEffectiveModel}), we obtain
    \begin{equation}
    \label{eq::DxDt}
        \frac{\D q}{\D t} = 2 \left( \mW_{\boldsymbol{v},j} \mW^{[1]}_j \mW^{[2]}  \mW_{\boldsymbol{v},i}^2 + \mW_{\boldsymbol{v},i} \mW^{[1]}_i \mW^{[2]}  \mW_{\boldsymbol{v},j}^2 \right).
    \end{equation}
Combining Eq. (\ref{eq::DyDt}) and  Eq. (\ref{eq::DxDt}), we obtain
    \begin{equation}
    \label{eq::DyDx}
        \frac{\D p}{\D q}=\frac{1}{2} \frac{p}{q}+\frac{1}{2}.
    \end{equation}

    Let $u = p/q $. Note that $\frac{\D p}{\D q }  = \frac{\D }{\D q} (uq) = q \frac{\D u}{\D q} + u $.
    Combining this with the right hand of Eq. (\ref{eq::DyDx}), we get
    \begin{equation}
    \label{eq::DxDu}
         \frac{\D q}{q}= \frac{2 \D u }{1-u}.
    \end{equation}

    The Eq. (\ref{eq::DxDu}) can be solved explicitly. 
    \begin{equation}
        \ln |q(t)| - \ln|q(t_0)| = -2 \ln|u(t)-1| + 2 \ln|u(t_0) -1|.
    \end{equation}
    For $i,j \in C_1$, we have $\lim_{t \rightarrow T^* } u(t) =1 $  since $q $ tends to infinite as $t$ tends to $T^*$.
    
    By definition, 
    \begin{equation}
    \label{u_defi}
        u= \frac{p}{q} = \frac{  \langle \mW_{\boldsymbol{v},i} \mW^{[1]}_i , \mW_{\boldsymbol{v},j} \mW^{[1]}_j \rangle }{ \mW_{\boldsymbol{v},i}^2 \mW_{\boldsymbol{v},j}^2 }= \frac{ \norm{\mW^{[1]}_i}_2 \norm{\mW^{[1]}_j}_2 }{|\mW_{\boldsymbol{v},i}| | \mW_{\boldsymbol{v},j}|} \cos \xi_{ij}.
    \end{equation}
    Using the conservation laws, we have 
    \begin{equation*}
        \norm{\mW^{[1]}_i}_2^2 (t) - \norm{\mW^{[1]}_i}_2^2 (0) = \mW_{\boldsymbol{v},i}^2(t) -\mW_{\boldsymbol{v},i}^2(0).
    \end{equation*}
    For $i \in C_1$, we have 
    \begin{equation}
    \label{bi_ci}
        \lim_{t \rightarrow T^*} \frac{\norm{\mW^{[1]}_i}_2^2}{\mW_{\boldsymbol{v},i}^2} =1.
    \end{equation}
    Combining Equations (\ref{u_defi}) and (\ref{bi_ci}), we get  
    \begin{equation}
        \lim_{t \rightarrow T^*}  \cos \xi_{ij}=1, \ i,j \in C_1.
    \end{equation}
    This finishes the proof of statement (\romannumeral 2).

    3.
    Finally, we calculate the norm $ \norm{\dot{\mW}^{[2]}}_2^2 $. By definition, we obtain
    \begin{equation*}
         \langle \dot{\mW}^{[2]} , \dot{\mW}^{[2]} \rangle =\sum_{i=1}^{d_m} \sum_{j=1}^{d_m} \langle   \mW_{\boldsymbol{v},i} \mW^{[1]}_i , \mW_{\boldsymbol{v},j} \mW^{[1]}_j  \rangle.
    \end{equation*}
    We divide the sum into three parts due to the boundedness of  entries of  $\mW_{\boldsymbol{v}}$.
    \begin{equation*}
        \begin{aligned}
             &\sum_{i \in C_1} \sum_{j \in C_1} \mW_{\boldsymbol{v},i}^2 \mW_{\boldsymbol{v},j}^2 \frac{\norm{\mW^{[1]}_i}_2}{|\mW_{\boldsymbol{v},i}|} \frac{\norm{\mW^{[1]}_j}_2}{|\mW_{\boldsymbol{v},j}|} \cos \xi_{ij} + 2 \sum_{i \in C_1 } \sum_{j \in C_2} \mW_{\boldsymbol{v},i}  \mW_{\boldsymbol{v},j}  \norm{\mW^{[1]}_i}_2 \norm{\mW^{[1]}_j}_2 \cos \xi_{ij}\\
            &+  \sum_{i \in C_2 } \sum_{j \in C_2} \mW_{\boldsymbol{v},i}  \mW_{\boldsymbol{v},j} \norm{\mW^{[1]}_i}_2 \norm{\mW^{[1]}_j}_2 \cos \xi_{ij}\\
            &=\sum_{i \in C_1} \sum_{j \in C_1} \mW_{\boldsymbol{v},i}^2 \mW_{\boldsymbol{v},j}^2 + \sum_{i \in C_1} \sum_{j \in C_1} \mW_{\boldsymbol{v},i}^2 \mW_{\boldsymbol{v},j}^2 \left( \frac{\norm{\mW^{[1]}_i}_2}{|\mW_{\boldsymbol{v},i}|} \frac{\norm{\mW^{[1]}_j}_2}{|\mW_{\boldsymbol{v},j}|} \cos \xi_{ij} -1 \right)\\
            &+ 2 \sum_{i \in C_1 } \sum_{j \in C_2} \mW_{\boldsymbol{v},i}  \mW_{\boldsymbol{v},j} \norm{\mW^{[1]}_i}_2 \norm{\mW^{[1]}_j}_2 \cos \xi_{ij} + \sum_{i \in C_2 } \sum_{j \in C_2} \mW_{\boldsymbol{v},i}  \mW_{\boldsymbol{v},j} \norm{ \mW^{[1]}_i }_2 \norm{\mW^{[1]}_j}_2 \cos \xi_{ij}.
        \end{aligned}
    \end{equation*}
    Since $\lim_{t \rightarrow T^*} \frac{\norm{\mW^{[1]}_i}_2}{|\mW_{\boldsymbol{v},i}|} \frac{\norm{\mW^{[1]}_j}_2}{|\mW_{\boldsymbol{v},j}|} \cos \xi_{ij}=1 $, we have 
    \begin{equation}
    \label{a_dot1}
     \lim_{t \rightarrow T^*} \frac{ \langle \dot{\mW}^{[2]} , \dot{\mW}^{[2]} \rangle}{ \left(  \sum_{i \in C_1} \mW_{\boldsymbol{v},i}^2 \right)^2} = 1.   
    \end{equation} 
    Based on statement (\romannumeral 1), we obtain 
    \begin{equation}
    \label{a_dot2}
        \lim_{t \rightarrow T^*} \frac{\sum_{i \in C_1} \mW_{\boldsymbol{v},i}^2}{\norm{\mW_{\boldsymbol{v}}}_2^2}=1.
    \end{equation}
    Combining Equations (\ref{a_dot1}), (\ref{a_dot2}) and conservation law, we have 
    \begin{equation}
        \lim_{t \rightarrow T^*} \frac{ \norm{\dot{\mW}^{[2]}}_2}{ \norm{\mW_{\boldsymbol{v}}}_2^2 } = \lim_{t \rightarrow T^*} \frac{ \norm{\dot{\mW}^{[2]}}_2}{ \norm{\mW^{[2]}}_2^2 } =1 .
    \end{equation}
    This finishes the proof of statement (\romannumeral 3).
\end{proof}

Proposition \ref{prop::Angle} describes the angle between $\mW_{\boldsymbol{v},i} \mW^{[1]}_i$ and $\mW_{\boldsymbol{v},j} \mW^{[1]}_j$. Since $\dot{\mW}^{[2]}$ is a linear combination of $\mW_{\boldsymbol{v},i} \mW^{[1]}_i$, we immediately have following corollary.

\begin{cor}
\label{coro::Dota_ci_bi}
Suppose that Assumption \ref{assump::FinalStageCond} holds. Consider the effective dynamics Eq. (\ref{eq:SimpEffectiveModel}) and recall index class defined in Proposition \ref{prop::Angle}. The angle $\psi_i$ between the vectors $\dot{\mW}^{[2]}$ and $\mW_{\boldsymbol{v},i} \mW^{[1]}_i$, as defined in Definition \ref{defi::Angles}, satisfies:
    \begin{equation}
        \lim_{ t \rightarrow T^*} \cos \psi_i  =1, \quad i \in C_1.
    \end{equation}  
\end{cor}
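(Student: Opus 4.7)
My plan is to expand $\dot{\mW}^{[2]}$ as a linear combination of the very vectors whose alignment is already controlled by Proposition~\ref{prop::Angle}(ii), and then combine that alignment with a dominance argument that separates the $C_1$ and $C_2$ contributions. Differentiating $E=\mW_{\boldsymbol{v}}\mW^{[1]}\mW^{[2]}$ with respect to $\mW^{[2]}$ produces the decomposition
\[
\dot{\mW}^{[2]}\;=\;\sum_{k=1}^{d_m}\mW_{\boldsymbol{v},k}\,\mW^{[1]}_k,
\]
which is precisely the representation already exploited in step~3 of the proof of Proposition~\ref{prop::Angle}. I split this sum along the partition as $\dot{\mW}^{[2]}=S_1(t)+S_2(t)$ with $S_1(t):=\sum_{k\in C_1}\mW_{\boldsymbol{v},k}\mW^{[1]}_k$ and $S_2(t):=\sum_{k\in C_2}\mW_{\boldsymbol{v},k}\mW^{[1]}_k$.

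Next, I would control magnitudes via Proposition~\ref{prop::Angle}(i) together with the conservation law of Proposition~\ref{prop::ConservationLaw}: the identity $\|\mW^{[1]}_k\|_2^2-\mW_{\boldsymbol{v},k}^2=\mathrm{const}$ keeps both $|\mW_{\boldsymbol{v},k}|$ and $\|\mW^{[1]}_k\|_2$ bounded for $k\in C_2$, so $\|S_2(t)\|_2$ stays bounded as $t\to T^*$; while for $k\in C_1$ the same identity forces $\|\mW^{[1]}_k\|_2/|\mW_{\boldsymbol{v},k}|\to 1$, and hence $\|\mW_{\boldsymbol{v},k}\mW^{[1]}_k\|_2\sim\mW_{\boldsymbol{v},k}^2\to\infty$. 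Proposition~\ref{prop::Angle}(iii) then supplies $\|\dot{\mW}^{[2]}\|_2\sim\sum_{k\in C_1}\mW_{\boldsymbol{v},k}^2$, so that $\|S_2\|_2/\|\dot{\mW}^{[2]}\|_2\to 0$.

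The decisive step invokes Proposition~\ref{prop::Angle}(ii): for every pair $i,j\in C_1$ the unit vectors $\mW_{\boldsymbol{v},i}\mW^{[1]}_i/\|\mW_{\boldsymbol{v},i}\mW^{[1]}_i\|_2$ and $\mW_{\boldsymbol{v},j}\mW^{[1]}_j/\|\mW_{\boldsymbol{v},j}\mW^{[1]}_j\|_2$ become asymptotically parallel. The positivity built into Assumption~\ref{assump::FinalStageCond}, propagated in time via Proposition~\ref{prop::Induction}, ensures no sign cancellation in $S_1$, so a short Gram-matrix computation gives $\cos\angle\!\bigl(S_1(t),\mW_{\boldsymbol{v},i}(t)\mW^{[1]}_i(t)\bigr)\to 1$ for every fixed $i\in C_1$. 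Combining this with $\|S_2\|_2=o(\|S_1\|_2)$ transfers the same alignment to $\dot{\mW}^{[2]}$, which is exactly $\lim_{t\to T^*}\cos\psi_i=1$. The step I expect to require the most care is the dominance estimate $\|S_2\|_2/\|S_1\|_2\to 0$ together with the sign bookkeeping in $S_1$: the positivity in Assumption~\ref{assump::FinalStageCond} is precisely what lets the naive magnitude comparison translate into a genuine directional statement by preventing partial cancellation among the unboundedly large $C_1$ summands.
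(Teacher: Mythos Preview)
Your proposal is correct and follows essentially the same approach as the paper: both expand $\dot{\mW}^{[2]}=\sum_k \mW_{\boldsymbol{v},k}\mW^{[1]}_k$, split the sum along $C_1\cup C_2$, use Proposition~\ref{prop::Angle}(i) and the conservation law to show the $C_2$ part is negligible relative to $\|\dot{\mW}^{[2]}\|_2$, and then invoke Proposition~\ref{prop::Angle}(ii)--(iii) to pass to the limit. The paper organizes this by writing $\cos\psi_i$ directly as $\sum_j |\mW_{\boldsymbol{v},j}|\,\|\mW^{[1]}_j\|_2\cos\xi_{ij}\big/\|\dot{\mW}^{[2]}\|_2$ and taking the limit termwise, whereas you first prove $\|S_2\|_2/\|S_1\|_2\to 0$ and then a Gram-matrix alignment for $S_1$; these are equivalent packagings of the same computation. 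One small remark: your appeal to Assumption~\ref{assump::FinalStageCond} for sign bookkeeping is harmless but not strictly needed here, since $\cos\xi_{ij}\to 1$ already rules out anti-parallel $C_1$ summands.
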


\begin{proof}
By definition,
\begin{equation*}
    \cos \psi_i = \frac{\langle \mW_{\boldsymbol{v},i} \mW^{[1]}_i, \sum_{j=1}^m \mW_{\boldsymbol{v},j} \mW^{[1]}_j \rangle}{ \norm{\mW_{\boldsymbol{v},i} \mW^{[1]}_i }_2  \norm{\dot{\mW}^{[2]} }_2 }.
\end{equation*}
Recall the definition of $\xi_{ij}$, the above equation can be 
reformulated as follows:
\begin{equation*}
\begin{aligned}
     \cos \psi_i &  =\frac{\sum_{j=1}^m |\mW_{\boldsymbol{v},i}| |\mW_{\boldsymbol{v},j}| \norm{\mW^{[1]}_i}_2 \norm{\mW^{[1]}_j}_2 \cos \xi_{ij}}{\| \mW_{\boldsymbol{v},i} \mW^{[1]}_i \|_2 \|\dot{\mW}^{[2]} \|_2} \\
            &= \frac{\sum_{j=1}^m |\mW_{\boldsymbol{v},j}| \norm{\mW^{[1]}_j}_2 \cos \xi_{ij}}{\norm{\dot{\mW}^{[2]}}_2} \\
            &= \frac{\sum_{j \in C_1} \mW_{\boldsymbol{v},j}^2 \frac{\norm{\mW^{[1]}_j}_2}{|\mW_{\boldsymbol{v},j}|} \cos \xi_{ij} + \sum_{j \in C_2} |\mW_{\boldsymbol{v},j}| \norm{\mW^{[1]}_j}_2 \cos \xi_{ij}}{\norm{\dot{\mW}^{[2]}}_2 }.
\end{aligned}
\end{equation*}
   According to  Equations (\ref{eq::Cos1}) and (\ref{eq::AdotVsCnorm}), we have  $\lim_{ t \rightarrow T^*} \cos \psi_i =1$. This completes the proof.
\end{proof}  

So far, we have characterized some properties of $\mW_{\boldsymbol{v},i} \mW^{[1]}_i$ which is component of $\dot{\mW}^{[2]}$. We have shown that some of them will have the same direction when $t$ tends to $T^*$. However, it is not enough for our seek for a upper bound for energy $E$. Luckily,  based on Corollary \ref{coro::Dota_ci_bi}, we can analyze the angle between $\mW^{[2]}$ and $\mW_{\boldsymbol{v},i} \mW^{[1]}_i$ which provides an upper bound. Before this, we give the following proposition. The subsequent proposition demonstrates an extension of statement 1 of Assumption \ref{assump::FinalStageCond}, going beyond the condition of $\mW_{\boldsymbol{v},i} \mW^{[1]}_i \mW^{[2]} $ being greater than zero to include additional angle-related information.

\begin{prop}
\label{prop::4}
Suppose that Assumption \ref{assump::FinalStageCond} holds. Consider the effective dynamics Eq. (\ref{eq:SimpEffectiveModel}) and recall index class defined in Proposition \ref{prop::Angle}.
There exists constants $T_1 \in (t_0,T^*)$ and $\Theta_1 \in [0,\frac{\pi}{2}) $ such that for each index $i \in C_1 $, the follow inequality holds:
    \begin{equation*}
        \cos \varphi_i \ge \cos \Theta_1,\quad t \in (T_1 ,T^*).
    \end{equation*}
\end{prop}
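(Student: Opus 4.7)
The plan is to exploit the triangle inequality for angles between vectors in $\mathbb{R}^{d_m}$, which with the notation of Definition~\ref{defi::Angles} reads $\varphi_i\le\zeta+\psi_i$. Corollary~\ref{coro::Dota_ci_bi} already supplies $\psi_i(t)\to 0$ for every $i\in C_1$, so it suffices to exhibit some $\Theta_0\in[0,\pi/2)$ and $T_1<T^*$ with $\zeta(t)\le\Theta_0$ on $(T_1,T^*)$. Since $|C_1|<\infty$, any such pair transfers uniformly to a single $\Theta_1\in(\Theta_0,\pi/2)$ valid for every $i\in C_1$, giving the claimed bound.

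To control $\zeta$, I would pin down the blow-up rate of the energy $E$ relative to $R:=\|\mW^{[2]}\|_2$. Proposition~\ref{prop::ConservationLaw} makes $\|\mW_{\boldsymbol{v}}\|_2$, $\|\mW^{[1]}\|_{\mathrm{F}}$, and $\|\mW^{[2]}\|_2$ asymptotically comparable as $t\to T^*$, and Proposition~\ref{prop::Angle}(iii) gives $\|\dot{\mW}^{[2]}\|_2\sim R^2$. Combined with the symmetric analog $\|\dot{\mW}_{\boldsymbol{v}}\|_2\sim R^2$, the identity $\dot E=\|\dot{\mW}_{\boldsymbol{v}}\|_2^2+\|\dot{\mW}^{[2]}\|_2^2+\|\mW_{\boldsymbol{v}}\|_2^2\|\mW^{[2]}\|_2^2$ forces $\dot E\sim 3R^4$. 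Using $\frac{\mathrm{d}R^2}{\mathrm{d}t}=2E$, setting $u=R^2$ gives $\ddot u=2\dot E\sim 6u^2$; multiplying by $\dot u$ and integrating yields the Hamiltonian-type identity $\dot u^2\sim 4u^3$, whence $E=\dot u/2\sim u^{3/2}=R^3$.

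With the two-sided rates $E\sim R^3$ and $\|\dot{\mW}^{[2]}\|_2\sim R^2$ in hand,
\[
\cos\zeta\;=\;\frac{E}{\|\mW^{[2]}\|_2\,\|\dot{\mW}^{[2]}\|_2}\;\sim\;\frac{R^3}{R\cdot R^2}\;\longrightarrow\;1,
\]
so in fact $\zeta(t)\to 0$; combined with the first-paragraph reduction this yields $\varphi_i(t)\to 0$ uniformly in $i\in C_1$, which is strictly stronger than the required bound.

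The hard part will be promoting the heuristic $\dot E\sim 3R^4$ to a quantitative two-sided estimate that survives the Hamiltonian integration: writing $\dot E=3R^4(1+\epsilon(t))$ with $\epsilon\to 0$, the integrated remainder $\int 6u^2\dot u\,\epsilon(s)\,\mathrm{d}s$ has the same blow-up order as the leading term $2u^3$, so pointwise smallness of $\epsilon$ is not enough and one needs $\sup_{s\ge t_1}|\epsilon(s)|\to 0$ as $t_1\uparrow T^*$. A cleaner alternative is to bypass the ODE analysis by combining the lower bound $E(t)\ge c_1(T^*-t)^{-3}$ from Eq.~(\ref{eq::lower_bound}) with the matching upper bound supplied by Proposition~\ref{thm::UpperBound}, which directly gives $E\asymp R^3$ and hence $\cos\zeta$ uniformly bounded below by a positive constant on $(T_1,T^*)$.
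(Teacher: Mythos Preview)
Your route differs from the paper's. The paper differentiates $\cos^2\varphi_i$ directly and shows that the derivative splits into a nonnegative term plus a term with the sign of $\cos\psi_i-\cos\zeta\cos\varphi_i$; since $\cos\psi_i\to 1$ by Corollary~\ref{coro::Dota_ci_bi}, once $\cos\varphi_i$ falls below $1-\varepsilon$ this factor is positive and $\cos^2\varphi_i$ increases, trapping $\cos\varphi_i$ above a fixed positive level. This is a self-contained monotonicity argument that needs no control of $\zeta$ and no rate information on $E$. The paper's dependency chain runs Proposition~\ref{prop::4} $\Rightarrow$ Proposition~\ref{prop::5} (bound on $\zeta$) $\Rightarrow$ Proposition~\ref{thm::UpperBound} (energy upper bound). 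Your plan inverts this: bound $\zeta$ first via $E\sim R^3$, then obtain $\varphi_i\le\zeta+\psi_i$; if it succeeds it also yields $\varphi_i\to 0$ and subsumes Proposition~\ref{prop::5}.

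There is a genuine gap in your preferred ``cleaner alternative'': invoking Proposition~\ref{thm::UpperBound} here is circular, because its proof uses Proposition~\ref{prop::5}, which in turn uses Proposition~\ref{prop::4}. So the matching upper bound on $E$ is not available at this point.

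Your ODE route, however, is sound, and the worry you raise is unfounded: the condition $\epsilon(t)\to 0$ as $t\to T^*$ is already the uniform statement $\sup_{s\ge t_1}|\epsilon(s)|\to 0$ as $t_1\uparrow T^*$. For any $\delta>0$ choose $t_1$ with $|\epsilon|<\delta$ on $[t_1,T^*)$; integrating $\ddot u\,\dot u$ from $t_1$ gives $|\dot u(t)^2-4u(t)^3|\le 4\delta\,u(t)^3+C(t_1)$, hence $\dot u^2/u^3\to 4$ and $E/R^3\to 1$, so $\cos\zeta\to 1$. The only missing ingredient is the symmetric analog $\|\dot{\mW}_{\bm{v}}\|_2\sim R^2$ of Proposition~\ref{prop::Angle}(iii), which the paper does not state but which follows from the evident symmetry of the dynamics and of Assumption~\ref{assump::FinalStageCond} under swapping the roles of $\mW_{\bm{v}}$ and $\mW^{[2]}$ (with $\mW^{[1]}\mapsto\mW^{[1]\top}$).
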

\begin{proof}
It is sufficient to prove the statement for any fixed $i \in C_1$ due to the finiteness of $|C_1|$.
In Proposition \ref{prop::Induction}, we have shown that $\langle \mW_{\boldsymbol{v},i} \mW^{[1]}_i, \mW^{[2]}\rangle > 0 $ for $t \in (t_0,T^*)$, which implies 
    \begin{equation}
        \cos \varphi_i >0, \quad t \in (t_0,T^*).
    \end{equation}
    Hence we can focus on its square, i.e., $\cos^2 \varphi_i 
        =\frac{\langle \mW^{[2]} , \mW^{[1]}_i \rangle^2}{\| \mW^{[2]}\|_2^2 \| \mW^{[1]}_i \|_2^2}$.
    By direct calculation, the derivation of $\cos^2 \varphi_i $ is 
    \begin{equation*}
    \begin{aligned}
    &~~\frac{2}{(\|\mW^{[2]}\|_2^2 \|\mW^{[1]}_i \|_2^2)^2} \langle \mW^{[2]},\mW^{[1]}_i \rangle ( \langle \dot{\mW}^{[2]} , \mW^{[1]}_i \rangle + \langle \mW^{[2]}, \dot{ \vb}^i \rangle) \| \mW^{[2]}\|_2^2 \| \mW^{[1]}_i \|_2^2 \\
    &-\frac{2}{(\|\mW^{[2]}\|_2^2 \|\mW^{[1]}_i \|_2^2)^2} \langle \mW^{[2]},\mW^{[1]}_i \rangle^2 (\langle \dot{\mW}^{[2]} ,\mW^{[2]}\rangle \| \mW^{[1]}_i \|_2^2 + \| \mW^{[2]}\|_2^2 \langle \dot{\vb}^i ,\mW^{[1]}_i \rangle)
    \end{aligned}
    \end{equation*}
    We can rewrite the numerator as 
    \begin{equation*}
    \begin{aligned}
         &~~2 \norm{\mW^{[1]}_i}_2^2  \langle \mW^{[2]}, \mW^{[1]}_i \rangle \left[ \langle \dot{\mW}^{[2]} , \mW^{[1]}_i \rangle \| \mW^{[2]}\|_2^2 - \langle \mW^{[2]}, \mW^{[1]}_i \rangle \langle \dot{\mW}^{[2]}, \mW^{[2]}\rangle \right] \\  
        &+ 2 \norm{\mW^{[2]}}_2^2 \langle \mW^{[2]}, \mW_{\boldsymbol{v},i} \mW^{[1]}_i \rangle \left[ \| \mW^{[2]}\|_2^2 \| \mW^{[1]}_i \|_2^2- \langle \mW^{[2]},\mW^{[1]}_i \rangle^2 \right].
    \end{aligned}
    \end{equation*}
    The second term of above expression is obviously greater than zero by inequality of arithmetic and geometric means. Also we can rewrite the first term as 
    \begin{equation*}
        \frac{2 \|\mW^{[1]}_i\|_2^2}{\mW_{\boldsymbol{v},i}^2} \langle \mW^{[2]},\mW_{\boldsymbol{v},i} \mW^{[1]}_i \rangle \left[ \langle \dot{\mW}^{[2]} ,\mW_{\boldsymbol{v},i} \mW^{[1]}_i \rangle \|\mW^{[2]} \|_2^2- \langle \mW^{[2]}, \mW_{\boldsymbol{v},i} \mW^{[1]}_i \rangle \langle \dot{\mW}^{[2]}, \mW^{[2]} \rangle \right].
    \end{equation*}
    We find that the first two factors $ \frac{2\|\mW^{[1]}_i\|_2^2}{\mW_{\boldsymbol{v},i}^2} \langle \mW^{[2]},\mW_{\boldsymbol{v},i} \mW^{[1]}_i \rangle$ of above expression  are positive. According to Definition \ref{defi::Angles}, the difference term can be reformulated as 
    \begin{equation*}
    \begin{aligned}
        &~~~~~~\langle \dot{\mW}^{[2]} ,\mW_{\boldsymbol{v},i} \mW^{[1]}_i \rangle \|\mW^{[2]} \|^2- \langle \mW^{[2]}, \mW_{\boldsymbol{v},i} \mW^{[1]}_i \rangle \langle \dot{\mW}^{[2]}, \mW^{[2]} \rangle \\
        &=  \| \mW^{[2]} \|_2^2 \| \dot{\mW}^{[2]} \|_2 \| \mW_{\boldsymbol{v},i} \mW^{[1]}_i \|_2 ( \cos \psi_i - \cos \zeta \cos \varphi_i).
    \end{aligned}
    \end{equation*}
    Note that $\lim_{t \rightarrow T^*} \cos \psi_i=1$ for $i \in C_1$. So for every $\varepsilon>0$, there exists $\delta >0$ such that  
    \begin{equation*}
        1-\varepsilon \leq \cos \psi_i \leq 1, \quad  t \in (T^*-\delta,T^*).
    \end{equation*}
    Set $\bar{t}_i = T^* -\delta$ and $\bar{\theta}_i = \arccos (1- \varepsilon)$. Then we have  either $\cos \varphi_i \ge \cos  \bar{\theta}_i, \ t \in (\bar{t}_i,T^*)$, or there exists $t \in (\bar{t}_i,T^*)$ such that $\cos \varphi_i \le \cos \bar{\theta}_i$, then it will increase monotonically until it goes up to $\bar{\theta}_i$. No matter in which case, we can find $\Tilde{\theta}_i \in [0, \frac{\pi}{2})$ such that $\cos \varphi_i \ge \cos \Tilde{\theta}_i$. Let $T_1 = \max_{i \in C_1} \bar{t}_i$ and $\Theta_1 = \max_{i \in C_1}  \Tilde{\theta}_i$. Thus, for each $i \in C_1$, the following inequality holds 
    \begin{equation*}
        \cos \varphi_i \ge \cos \Theta_{1},\quad t \in (T_1 ,T^*).
    \end{equation*}
    This completes the proof. 
\end{proof}

In fact, Proposition \ref{prop::4}  provides the angular relationship between 
$\mW^{[2]}$ and its derivative $\dot{\mW}^{[2]}$. We summarize it as follows.
\begin{prop}\label{prop::5}
Suppose that Assumption \ref{assump::FinalStageCond} holds. Consider the effective dynamics Eq. (\ref{eq:SimpEffectiveModel}).
    There exists $ T_2 \in (T_1,T^*)$ and $ \Theta_2 \in [0,\frac{\pi}{2})$ such that 
    \begin{equation}
        \cos \zeta \ge \cos \Theta_2, \quad t \in (T_2,T^*).
    \end{equation}
    Moreover, recall the definition of energy $E$, the following inequality holds
    \begin{equation}
        E= \langle \mW^{[2]}, \dot{\mW}^{[2]} \rangle \ge \cos \Theta_{2} \|\mW^{[2]}\|_2 \|\dot{\mW}^{[2]} \|_2 , \quad t \in (T_2,T^*).
    \end{equation}
\end{prop}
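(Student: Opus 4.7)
My plan is to bound the angle $\zeta(t)$ between $\mW^{[2]}$ and $\dot{\mW}^{[2]}$ by combining Proposition~\ref{prop::4} with Corollary~\ref{coro::Dota_ci_bi} through the spherical triangle inequality. The key observation is that, from direct differentiation of Eq.~(\ref{eq:SimpEffectiveModel}), $\dot{\mW}^{[2]} = \sum_{i=1}^{d_m} \mW_{\boldsymbol{v},i}\,\mW^{[1]}_i$, so $\dot{\mW}^{[2]}$ is a linear combination of precisely the vectors $\mW_{\boldsymbol{v},i}\mW^{[1]}_i$ that appear in Assumption~\ref{assump::FinalStageCond} and in the angle definitions. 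Picking any reference index $i_0 \in C_1$ (nonempty by Proposition~\ref{prop::Angle}(i)), both $\mW^{[2]}$ and $\dot{\mW}^{[2]}$ make small angles with the reference vector $\mW_{\boldsymbol{v},i_0}\mW^{[1]}_{i_0}$: the former by Proposition~\ref{prop::4}, which gives $\varphi_{i_0}(t)\le \Theta_1$ on $(T_1,T^*)$, and the latter by Corollary~\ref{coro::Dota_ci_bi}, which gives $\psi_{i_0}(t)\to 0$ as $t\to T^*$.

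Concretely, I would fix any $i_0\in C_1$ and apply the spherical triangle inequality $\zeta(t) \le \varphi_{i_0}(t) + \psi_{i_0}(t)$, which is just the fact that the geodesic distance on the unit sphere is a metric. Since $\Theta_1 < \pi/2$ by Proposition~\ref{prop::4}, I would set $\varepsilon := \tfrac12(\pi/2 - \Theta_1) > 0$ and use Corollary~\ref{coro::Dota_ci_bi} to pick $T_2 \in (T_1,T^*)$ so that $\psi_{i_0}(t) < \varepsilon$ for every $t \in (T_2,T^*)$. Taking $\Theta_2 := \Theta_1 + \varepsilon = (\Theta_1 + \pi/2)/2 \in [0,\pi/2)$ then immediately yields $\zeta(t) \le \Theta_2$, and consequently $\cos \zeta(t) \ge \cos\Theta_2$ on $(T_2,T^*)$, establishing the first claim.

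For the second claim, I would identify the energy with the desired inner product:
\begin{equation*}
E = \mW_{\boldsymbol{v}}\mW^{[1]}\mW^{[2]} = \sum_{i=1}^{d_m} \mW_{\boldsymbol{v},i}\,\mW^{[1]}_i\mW^{[2]} = \langle \dot{\mW}^{[2]}, \mW^{[2]}\rangle,
\end{equation*}
where the last equality uses the explicit form $\dot{\mW}^{[2]} = \sum_i \mW_{\boldsymbol{v},i}\mW^{[1]}_i$ already exploited in the proof of Proposition~\ref{prop::Angle}. Combining the Cauchy--Schwarz identity $\langle u,v\rangle = \|u\|_2\|v\|_2\cos\angle(u,v)$ with the previously established bound $\cos\zeta(t) \ge \cos\Theta_2$ then yields $E(t) \ge \cos\Theta_2\,\|\mW^{[2]}(t)\|_2\,\|\dot{\mW}^{[2]}(t)\|_2$ for $t \in (T_2, T^*)$.

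I do not foresee a significant obstacle here, since the proposition is essentially a geometric consequence of Proposition~\ref{prop::4} and Corollary~\ref{coro::Dota_ci_bi}. The only subtle point is the need for a strictly positive angular slack: because Proposition~\ref{prop::4} provides only the qualitative bound $\Theta_1 < \pi/2$, we must reserve half of the gap $\pi/2 - \Theta_1$ as a fixed margin for $\psi_{i_0}$ rather than let $\varepsilon\downarrow 0$ as $t\to T^*$. If only the weak bound $\Theta_1 \le \pi/2$ were available, this argument would fail and one would instead need a joint quantitative estimate on $\cos\varphi_{i_0}\cos\psi_{i_0} - \sin\varphi_{i_0}\sin\psi_{i_0}$ via the angle addition formula, controlling the sine factors by the rates in Corollary~\ref{coro::Dota_ci_bi}.
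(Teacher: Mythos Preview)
Your proposal is correct and takes a slightly different route from the paper. The paper expands $\cos\zeta$ directly by writing $\langle \dot{\mW}^{[2]}, \mW^{[2]}\rangle = \sum_i \langle \mW_{\bm{v},i}\mW^{[1]}_i, \mW^{[2]}\rangle$, bounds each $C_1$ summand below by $\cos\Theta_1\,\|\mW_{\bm{v},i}\mW^{[1]}_i\|_2\,\|\mW^{[2]}\|_2$ via Proposition~\ref{prop::4}, discards the (positive) $C_2$ terms, and then invokes the asymptotic $\sum_{i\in C_1}\|\mW_{\bm{v},i}\mW^{[1]}_i\|_2\,/\,\|\dot{\mW}^{[2]}\|_2 \to 1$ extracted from the proof of Proposition~\ref{prop::Angle}. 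Your approach instead fixes a single reference index $i_0\in C_1$ and applies the spherical triangle inequality $\zeta \le \varphi_{i_0} + \psi_{i_0}$, combining Proposition~\ref{prop::4} with Corollary~\ref{coro::Dota_ci_bi} directly. Both arguments rest on the same underlying asymptotics (the $C_1$ terms dominate $\dot{\mW}^{[2]}$), but your packaging via Corollary~\ref{coro::Dota_ci_bi} plus the metric property of angular distance is shorter and avoids reopening the summation over $C_1$; the paper's version, by contrast, does not need to single out any particular index and does not explicitly invoke the triangle inequality on the sphere.
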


\begin{proof}
By definition of $\zeta$, we obtain
    \begin{equation*}
        \cos \zeta = \frac{\langle \dot{\mW}^{[2]},\mW^{[2]} \rangle }{ \|\mW^{[2]}\|_2 \| \dot{\mW}^{[2]} \|_2 } =\frac{\langle \sum_{j=1}^m \mW_{\boldsymbol{v},i} \mW^{[1]}_i,\mW^{[2]} \rangle }{ \|\mW^{[2]}\|_2 \| \dot{\mW}^{[2]} \|_2 }.
    \end{equation*}
According to Proposition \ref{prop::4}, we have 
\begin{equation*}
    \cos \zeta  \ge \cos \Theta_1 \frac{\sum_{i \in C_1} \|\mW_{\boldsymbol{v},i} \mW^{[1]}_i \|_2 }{\| \dot{\mW}^{[2]} \|_2}, \quad t \in (T_1,T^*).
\end{equation*}
    Since we have shown that $\lim_{t \rightarrow T^*} \frac{\sum_{i \in C_1} \|\mW_{\boldsymbol{v},i} \mW^{[1]}_i \|_2}{\| \dot{\mW}^{[2]} \|_2}=1 $ according to the proof of Proposition \ref{prop::Angle}, there exists $ T_2 \in (T_1,T^*)$ and $\Theta_2 \in [0, \frac{\pi}{2})$ such that 
    \begin{equation*}
        \cos \zeta \ge \cos \Theta_2 , \quad t \in (T_2,T^*).
    \end{equation*}
    Recall the definition of energy $E$, we have 
    \begin{equation*}
        E= \langle \mW^{[2]}, \dot{\mW}^{[2]} \rangle \ge \left(\cos \Theta_{2}\right) \|\mW^{[2]}\|_2 \|\dot{\mW}^{[2]} \|_2 , \quad t \in (T_2,T^*).
    \end{equation*}
    This completes the proof.
\end{proof}

Now we prove the proposition with all preparations above.

\begin{prop}[energy upper bound and blow up estimate]
\label{thm::UpperBound}
Suppose that Assumption \ref{assump::FinalStageCond} holds. Consider the effective dynamics Eq. (\ref{eq:SimpEffectiveModel}). There exist $T_3 \in (T_2,T^*)$ and $C \ge 1$ such that the following upper bound of Energy $E$ holds 
\begin{equation}
E(t) \le \frac{1}{\left( E(s)^{-\frac{1}{3}} - C(t-s) \right)^3}, \quad T_3 \le s < t < T^*.
\end{equation}
Moreover, the blow up time $T^*$ is bounded below by 
\begin{equation}
T^* \ge t + \frac{E(t)^{-\frac{1}{3}}}{C}, \quad t < T^*.
\end{equation}
\end{prop}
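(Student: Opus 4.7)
My plan is to complement the lower bound $\dot{E} \ge 3 E^{4/3}$ (used in the proof of Theorem \ref{thm::BlowUp}) with a matching upper bound $\dot{E} \le C' E^{4/3}$ valid on some subinterval $(T_3, T^*)$. Once this one-sided differential inequality is secured, integrating $\frac{\D}{\D \tau} E(\tau)^{-1/3} \ge -C'/3$ yields the claimed comparison $E(t) \le (E(s)^{-1/3} - C(t-s))^{-3}$ with $C := C'/3$ (enlarging $C'$ if necessary to make $C \ge 1$), and the lower bound on $T^*$ follows by requiring the denominator to remain nonnegative as $E \to \infty$ at $T^*$.

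First, I would convert the limiting identities of Proposition \ref{prop::Angle}(iii), namely $\|\dot{\mW}^{[2]}\|_2/\|\mW^{[2]}\|_2^2 \to 1$ and $\|\dot{\mW}^{[2]}\|_2/\|\mW_{\bm v}\|_2^2 \to 1$, together with the conservation laws of Proposition \ref{prop::ConservationLaw} (which force $\|\mW_{\bm v}\|_2^2 - \|\mW^{[1]}\|_{\mathrm F}^2$ and $\|\mW^{[2]}\|_2^2 - \|\mW^{[1]}\|_{\mathrm F}^2$ to be constant in time), into uniform inequalities on a suitable interval $(T_3, T^*)$. Since all three norms blow up at $T^*$, the conserved differences become negligible and I can choose $T_3$ so that $\|\dot{\mW}^{[2]}\|_2 \le 2\|\mW^{[2]}\|_2^2$, $\|\mW_{\bm v}\|_2^2 \le 2\|\mW^{[2]}\|_2^2$, and $\|\mW^{[1]}\|_{\mathrm F}^2 \le 2\|\mW^{[2]}\|_2^2$ hold throughout $(T_3, T^*)$. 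Feeding the first into Proposition \ref{prop::5} yields $E \ge \tfrac{1}{2}\cos\Theta_2 \|\mW^{[2]}\|_2^3$, hence $\|\mW^{[2]}\|_2 \le K\,E^{1/3}$ with $K := (2/\cos\Theta_2)^{1/3}$.

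Second, I would apply these bounds to the three contributions in $\dot E = \|\dot{\mW}_{\bm v}\|_2^2 + \|\dot{\mW}^{[2]}\|_2^2 + \|\mW_{\bm v}\|_2^2\|\mW^{[2]}\|_2^2$ from Eq. (\ref{eq::DeriE}). Since $\dot{\mW}^{[2]} = (\mW_{\bm v}\mW^{[1]})^{\T}$ and $\dot{\mW}_{\bm v} = (\mW^{[1]}\mW^{[2]})^{\T}$, the Cauchy--Schwarz inequality gives
\[
\|\dot{\mW}^{[2]}\|_2^2 \le \|\mW_{\bm v}\|_2^2\,\|\mW^{[1]}\|_{\mathrm F}^2, \qquad \|\dot{\mW}_{\bm v}\|_2^2 \le \|\mW^{[1]}\|_{\mathrm F}^2\,\|\mW^{[2]}\|_2^2.
\]
Combined with the uniform controls of the previous step, each summand is bounded by a constant multiple of $\|\mW^{[2]}\|_2^4 \le K^4 E^{4/3}$, delivering $\dot E \le C' E^{4/3}$ on $(T_3, T^*)$.

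The final Riccati integration is routine: integrating $\frac{\D}{\D \tau} E^{-1/3} \ge -C$ from $s$ to $t$ gives $E(t)^{-1/3} \ge E(s)^{-1/3} - C(t-s)$, which rearranges to the stated upper bound, and sending $t \to T^*$ while $E(t) \to \infty$ forces $E(s)^{-1/3} - C(T^* - s) \le 0$, i.e., $T^* \ge s + E(s)^{-1/3}/C$. The principal obstacle I foresee is the first step: Proposition \ref{prop::Angle}(iii) supplies only limit information, so some care is required to promote those limits to uniform inequalities on $(T_3, T^*)$ that are simultaneously consistent with the conservation laws, without circular reasoning through the asymptotic scalings. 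Once that uniform control is in place, the remainder of the argument is a direct Riccati comparison.
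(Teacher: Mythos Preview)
Your proposal is correct and follows essentially the same route as the paper: both establish $\dot E\le C' E^{4/3}$ on an interval $(T_3,T^*)$ using Proposition~\ref{prop::Angle}(iii), Proposition~\ref{prop::5}, and the conservation laws, then integrate the Riccati inequality. The paper factors $\dot E = \|\dot{\mW}^{[2]}\|_2^{4/3}\|\mW^{[2]}\|_2^{4/3}\cdot(\text{ratio}\to 3)$ and applies Proposition~\ref{prop::5} directly to the product $\|\dot{\mW}^{[2]}\|_2\|\mW^{[2]}\|_2\le E/\cos\Theta_2$, whereas you bound each summand in $\dot E$ by a multiple of $\|\mW^{[2]}\|_2^4$ via Cauchy--Schwarz before converting to $E^{4/3}$; the ingredients are identical and the difference is cosmetic.

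One slip worth repairing: to obtain $E\ge \tfrac12\cos\Theta_2\,\|\mW^{[2]}\|_2^3$ from Proposition~\ref{prop::5} you need the \emph{lower} bound $\|\dot{\mW}^{[2]}\|_2\ge \tfrac12\|\mW^{[2]}\|_2^2$, not the upper bound $\|\dot{\mW}^{[2]}\|_2\le 2\|\mW^{[2]}\|_2^2$ that you listed as ``the first''. Both inequalities follow from the same limit in Proposition~\ref{prop::Angle}(iii), so just record the two-sided estimate when you pick $T_3$.
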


\begin{proof}
    First, by calculating the derivative of energy $E$, we have
    \begin{equation*}
        \dot{E} =\| \dot{\mW}^{[2]} \|_2^2 + \| \dot{\mW}_{\boldsymbol{v}} \|_2^2 + \| \mW^{[2]}\|_2^2 \| \mW_{\boldsymbol{v}} \|_2^2.
    \end{equation*}
    We rewrite the derivative of energy $E$ as 
    \begin{equation*}
        \begin{aligned}
            \dot{E} &= \| \dot{\mW}^{[2]} \|_2^2 \left( 1+\frac{\|\dot{\mW}_{\boldsymbol{v}} \|_2^2}{\| \dot{\mW}^{[2]} \|_2^2} + \frac{\|\mW^{[2]}\|_2^2 \|\mW_{\boldsymbol{v}} \|_2^2}{\|\dot{\mW}^{[2]}\|_2^2} \right) \\
        &=\| \dot{\mW}^{[2]} \|_2^{\frac{4}{3}} \|\mW^{[2]}\|_2^\frac{4}{3} \frac{\| \dot{\mW}^{[2]} \|_2^\frac{2}{3}}{\| \mW^{[2]}\|_2^\frac{4}{3}} \left( 1+\frac{\|\dot{\mW}_{\boldsymbol{v}} \|_2^2}{\| \dot{\mW}^{[2]} \|_2^2} + \frac{\|\mW^{[2]}\|_2^2 \|\mW_{\boldsymbol{v}} \|_2^2}{\|\dot{\mW}^{[2]}\|_2^2} \right).
        \end{aligned}
    \end{equation*}
    According to  conservation laws and Equation (\ref{eq::AdotVsCnorm}), there exists $T_3>T_2$  such that 
    \begin{equation*}
        \frac{\| \dot{\mW}^{[2]} \|_2^\frac{2}{3}}{\| \mW^{[2]}\|_2^\frac{4}{3}} \left( 1+\frac{\|\dot{\mW}_{\boldsymbol{v}} \|_2^2}{\| \dot{\mW}^{[2]} \|_2^2} + \frac{\|\mW^{[2]}\|_2^2 \|\mW_{\boldsymbol{v}} \|_2^2}{\|\dot{\mW}^{[2]}\|_2^2} \right) \le 4.
    \end{equation*}
    Then we have 
    \begin{equation*}
    \dot{E} \le 4 \| \dot{\mW}^{[2]} \|_2^{\frac{4}{3}} \| \mW^{[2]}\|_2^\frac{4}{3} ,\quad \forall t >T_3 .    
    \end{equation*}
    Based on Proposition \ref{prop::5}, we have 
    \begin{equation*}
        \| \mW^{[2]}\|_2 \| \dot{\mW}^{[2]} \|_2 \le \frac{1}{ \cos \Theta_{2}} E ,\quad \forall t> T_3.
    \end{equation*}
    Thus we obtain
    \begin{equation*}
        \dot{E} \le 4(\frac{1}{\cos \Theta_{2}})^\frac{4}{3}  E, \quad \forall t>T_3.
    \end{equation*}
    We denote $4 (\frac{1}{\cos \Theta_{2}})^\frac{4}{3} $ as $C_0$. Then we have $C_0 \ge 4$ and 
    \begin{equation*}
        \dot{E} \le C_0 E^\frac{4}{3}.
    \end{equation*}

    Opposite to proof of Theorem \ref{thm::BlowUp}, we obtain  
    \begin{equation*}
        \frac{\D}{\D t} E^{-\frac{1}{3}} \ge -\frac{1}{3} C_0,\quad  \forall t>T_3.
    \end{equation*}
    We denote $\frac{C_0}{3}$ as $C$. Thus $C \ge 1$ and we have 
    \begin{equation}
        E(t) \le \frac{1}{\left( {E(s)}^{-\frac{1}{3}}-C (t-s) \right)^3} , \quad  T_3 < s < t < T^*.
    \end{equation}
    Hence, for each time $t < T^*$, the time of blow up is bounded below by  
    \begin{equation}
        T^* \ge t + \frac{E(t)^{-\frac{1}{3}}}{C}.
    \end{equation}
    This completes the proof.
\end{proof}

Now we begin the proof for Theorem \ref{thm::new_condensation_thm}.

\begin{proof}
    We just prove the case for $\mW_{\boldsymbol{v},i}>0$. And the case for $\mW_{\boldsymbol{v},i} < 0 $ follows by similar argument. Because  the derivative of $\mW_{\boldsymbol{v},i}$ is $\mW^{[1]}_i \mW^{[2]}$,  we only need to show that
    \begin{equation}
        \int_{T_3}^{T^*} \mW^{[1]}_i \mW^{[2]}~\D t = + \infty.
    \end{equation}
    Since $\lim_{t \rightarrow T^*} \frac{\|\mW^{[2]}\|_2^2}{\|\dot{\mW}^{[2]} \|_2}=1$ due to the statement (\romannumeral 3) of  Proposition \ref{prop::Angle}, there exists $T_4>T_3$ such that $\| \dot{\mW}^{[2]} \|_2 \le 2 \sqrt{2} \| \mW^{[2]}\|_2^2$, which implies
    \begin{equation}
    \label{eq::AdotA}
        E = \langle \dot{\mW}^{[2]} , \mW^{[2]}\rangle \le \norm{\dot{\mW}^{[2]}}_2 \norm{\mW^{[2]}}_2 \le 2 \sqrt{2} \norm{\mW^{[2]}}_2^3.
    \end{equation}
    
    The idea is we can find a infinite division of $(T_4,T^*)$ such that the integral of $\mW^{[1]}_i \mW^{[2]}$ on each sub-interval is larger than positive constant which is an independent constant. 
 Then we consider the integral $\int_{t_1}^{t_2} \mW^{[1]}_i \mW^{[2]}\D t$. By direct calculation of  the derivative of $\mW^{[1]}_i \mW^{[2]}$, we have 
    \begin{equation*}
        \dot{(\mW^{[1]}_i \mW^{[2]})}=\mW_{\boldsymbol{v},i} \mW^{[2]^{\T}} \mW^{[2]}+ \mW^{[1]}_i \left(\sum_j \mW_{\boldsymbol{v},j} \mW^{[1]}_j \right).
    \end{equation*}
    Integrating both sides of the equality, we have
    \begin{equation*}
    \begin{aligned}
        (\mW^{[1]}_i \mW^{[2]}) (t) &= (\mW^{[1]}_i \mW^{[2]}) (t_1) + \int_{t_1}^{t} \mW_{\boldsymbol{v},i} \mW^{[2]^{\T}} \mW^{[2]}+ \mW^{[1]}_i \left(\sum_j \mW_{\boldsymbol{v},j} \mW^{[1]}_j \right) \D s \\
        & \ge \mW_{\boldsymbol{v},i}(T_4) \int_{t_1}^{t} \mW^{[2]^{\T}} \mW^{[2]}\D s.
    \end{aligned}
    \end{equation*}
    Note that Eq. (\ref{eq::AdotA}) implies
    \begin{align*}
     \int_{t_1}^{t} \mW^{[2]^{\T}} \mW^{[2]}\D s & \ge \frac{1}{ 2} \int_{t_1}^t E^{\frac{2}{3}} (s) \D s \\
     &\ge \frac{1}{2} \int_{t_1}^t \frac{1}{(E(t_1)^{-\frac{1}{3}}-(s-t_1))^2} \D s \\
    &=\frac{1}{2} \left[ \frac{1}{E(t_1)^{-{\frac{1}{3}}}-(t-t_1)} - \frac{1}{E(t_1)^{-\frac{1}{3}}} \right].
    \end{align*}
    Thus the integral satisfies 
    \begin{align*}
    \int_{t_1}^{t_2} \mW^{[1]}_i \mW^{[2]}\D t & \ge \mW_{\boldsymbol{v},i}(T_4)  \frac{1}{2} \int_{t_1}^{t_2} \frac{1}{E(t_1)^{-{\frac{1}{3}}}-(t-t_1)} - \frac{1}{E(t_1)^{-\frac{1}{3}}} \D t \\
    &= \mW_{\boldsymbol{v},i}(T_4) \frac{1}{2} \left[- \ln(E(t_1)^{-\frac{1}{3}} -(t_2 -t_1)) + \ln (E(t_1)^{-\frac{1}{3}}) -\frac{t_2-t_1}{E(t_1)^{-\frac{1}{3}}}\right].
    \end{align*}
    According to Theorem \ref{thm::UpperBound}, we can  choose $t_2 -t_1 =\frac{E(t_1)^{-\frac{1}{3}}}{ 2 C}$. Thus we obtain 
    \begin{equation}
        \int_{t_1}^{t_2} \mW^{[1]}_i \mW^{[2]}\D t \ge \mW_{\boldsymbol{v},i}(T_4) \frac{1}{2} \left[\ln \frac{1}{1-\frac{1}{2 C}}-\frac{1}{ 2 C} \right] .
    \end{equation}
    Then we introduce auxiliary function 
    \begin{align*}
        f(t) &=\ln \frac{1}{1-t} -t \\
        &=- \ln (1-t) -t.
    \end{align*}
    We have $f(0)=0$ and $\dot{f}(t)>0$. Then we have $\ln \frac{1}{1-\frac{1}{2 C}}-\frac{1}{ 2 C} >0$. Since there are infinitely  many such sub-intervals, the proof of the theorem is completed.
\end{proof}

\subsection{Theory details for Key-Query Dynamics} 
\label{sec::QK_detail}
This appendix provides the detailed derivations for Proposition \ref{prop::dynamics_separation} and Theorem \ref{thm::key-query_alignment}, assuming a linear activation function and holding to Assumption \ref{assump:dynamics_separation} (or its empirical variant, Assumption \ref{assump:dynamics_separation_prime}). Our approach begins with a standard asymptotic analysis to decompose the loss function, as shown in Eq. (\ref{equ::loss_decomposition}).

Starting from the definition of the empirical risk, we have:
\begin{equation}
\label{equ::loss_decomposition_initial}
\begin{aligned}
    \mathcal{L} (\vtheta) &= \frac{1}{n} \sum_{i=1}^n e^{-y_i f_{\vtheta} (\mX_i)_s}  \\
    &=\frac{1}{n} \sum_{i=1}^n \exp\left({-y_i \left(\sum_{j=1}^s \frac{\exp\left( \frac{\mX_{i,s}\mW_Q \mW_K^{\T} \mX_{i,j}^{\T}}{\sqrt{d_m}}\right)}{\sum_{l=1}^s \exp\left( \frac{\mX_{i,s}\mW_Q \mW_K^{\T} \mX_{i,l}^{\T}}{\sqrt{d_m}}\right)}  \mX_{i,j} \mW_V \mW^{[1]} \mW^{[2]} \right)}\right) \\ 
    &=\frac{1}{n} \sum_{i=1}^n \exp\Biggl(-y_i \sum_{j=1}^s \Biggl[ \Biggl( \frac{1}{s} + \frac{1}{s} \frac{\mX_{i,s}\mW_Q \mW_K^{\T} \mX_{i,j}^{\T}}{\sqrt{d_m}} \\
    & \qquad \qquad \qquad - \frac{1}{s^2} \sum_{l=1}^s \frac{\mX_{i,s}\mW_Q \mW_K^{\T} \mX_{i,l}^{\T}}{\sqrt{d_m}} +\mathcal{O}(\delta^4) \Biggr) \mX_{i,j} \mW_V \mW^{[1]} \mW^{[2]} \Biggr] \Biggr).
\end{aligned}
\end{equation}

The third equality results from applying a Taylor expansion to the softmax function, which is justified by Assumption \ref{assump:dynamics_separation}. The higher-order term, $\mathcal{O}(\delta^4)$, is subsequently omitted as it does not affect the leading-order training dynamics. Given the property of the exponential loss function, $\ell(q) = e^{-q}$, the empirical loss can be decomposed as follows:
\begin{equation}
\label{equ::loss_decomposition_final}
\begin{aligned}
    \mathcal{L}(\vtheta) &= \frac{1}{n} \sum_{i=1}^n \exp\left({-y_i \left( \sum_{j=1}^s \frac{1}{s} \mX_{i,j} \mW_V \mW^{[1]} \mW^{[2]}\right)}\right) \\
    &~~\cdot \exp\left({-y_i \left( \sum_{j=1}^s \left(\frac{1}{s} \frac{\mX_{i,s}\mW_Q \mW_K^{\T} \mX_{i,j}^{\T}}{\sqrt{d_m}}- \frac{1}{s^2} \sum_{l=1}^s \frac{\mX_{i,s}\mW_Q \mW_K^{\T} \mX_{i,l}^{\T}}{\sqrt{d_m}} \right) \mX_{i,j} \mW_V \mW^{[1]} \mW^{[2]} \right)}\right) \\
    &= \frac{1}{n} \sum_{i=1}^n \mathcal{L}_{1,i}(\vtheta) \mathcal{L}_{2,i}(\vtheta),
\end{aligned}
\end{equation}
where $\mathcal{L}_{1,i}(\vtheta)$ and $\mathcal{L}_{2,i}(\vtheta)$ correspond to the two exponential factors in the preceding expression.

\subsubsection{Proof for Proposition \ref{prop::dynamics_separation} under Assumption \ref{assump:dynamics_separation}}
\begin{proof}
The proof is structured in two parts. First, we demonstrate the separation of dynamics by showing that the gradients with respect to different sets of weights have different orders of magnitude. Second, we derive the specific dynamics for the key and query matrices.

\textbf{Dynamics Separation}: By symmetry, we will detail the calculations for the partial derivatives with respect to $\mW_V$ and $\mW_Q$. The derivations for the other weight matrices follow a similar procedure.

We begin by computing the partial derivative of the loss $\mathcal{L}$ with respect to $\mW_V$. Applying the product rule to the decomposed loss from Eq. (\ref{equ::loss_decomposition_final}), we obtain:
\begin{equation}
    \begin{aligned}
        \frac{\partial \mathcal{L}}{\partial \mW_V} &= \frac{\partial}{\partial \mW_V} \left( \frac{1}{n} \sum_{i=1}^n \mathcal{L}_{1,i}(\vtheta) \mathcal{L}_{2,i}(\vtheta) \right) \\
        &=\frac{1}{n} \sum_{i=1}^n \left( \frac{\partial \mathcal{L}_{1,i}}{\partial \mW_V} \mathcal{L}_{2,i} + \mathcal{L}_{1,i} \frac{\partial \mathcal{L}_{2,i}}{\partial \mW_V} \right) \\
        &=\frac{1}{n} \sum_{i=1}^n \left( \frac{\partial \mathcal{L}_{1,i}}{\partial \mW_V} \left( 1 + \mathcal{O}(\delta^2)\right) + \mathcal{L}_{1,i} \cdot \mathcal{O}(\delta^2) \right).
    \end{aligned}
\end{equation}
The third equality holds based on Assumption \ref{assump:dynamics_separation}, which implies that $\mathcal{L}_{2,i} = 1 + \mathcal{O}(\delta^2)$ and its derivative $\frac{\partial \mathcal{L}_{2,i}}{\partial \mW_V}$ is also of order $\mathcal{O}(\delta^2)$. Furthermore, Assumption \ref{assump:dynamics_separation} states that the leading-order term of the loss, $\mathcal{L}_{1,i}$, is independent of the attention mechanism weights at initialization. Consequently, the term $\frac{\partial \mathcal{L}_{1,i}}{\partial \mW_V}$ is of order $\mathcal{O}(\delta^2)$. This implies that the entire gradient $\frac{\partial \mathcal{L}}{\partial \mW_V}$ is dominated by terms of order $\mathcal{O}(\delta^2)$.

Next, we consider the partial derivative with respect to $\mW_Q$:
\begin{equation}
\begin{aligned}
    \frac{\partial \mathcal{L}}{\partial \mW_Q} &= \frac{\partial}{\partial \mW_Q} \left( \frac{1}{n} \sum_{i=1}^n \mathcal{L}_{1,i} (\vtheta) \mathcal{L}_{2,i} (\vtheta) \right) \\
    &=\frac{1}{n} \sum_{i=1}^n \mathcal{L}_{1,i} (\vtheta) \frac{\partial \mathcal{L}_{2,i}}{\partial \mW_Q},
\end{aligned}
\end{equation}
since $\mathcal{L}_{1,i}$ is independent of $\mW_Q$. Based on Assumption \ref{assump:dynamics_separation}, the term $\frac{\partial \mathcal{L}_{2,i}}{\partial \mW_Q}$ is of order $\mathcal{O}(\delta)$, which establishes that the overall gradient $\frac{\partial \mathcal{L}}{\partial \mW_Q}$ is also of order $\mathcal{O}(\delta)$.

\textbf{Key-Query Dynamics}: The principle of dynamics separation, established above, shows that the gradients with respect to $\mW_Q$ and $\mW_K$ (order $\mathcal{O}(\delta)$) are significantly larger than those for $\mW_V, \mW^{[1]},$ and $\mW^{[2]}$ (order $\mathcal{O}(\delta^2)$). Therefore, during the initial phase of training, the dynamics are dominated by the updates to $\mW_Q$ and $\mW_K$. We can thus analyze their leading-order dynamics by treating the other weight matrices as effectively constant.

To derive these dynamics, we employ matrix calculus with differentials. For a scalar function $f(\mathbf{X})$ of a matrix variable $\mathbf{X}$, the differential is given by $\mathrm{d}f = \operatorname{tr}\left( \left(\frac{\partial f}{\partial \mathbf{X}}\right)^\T \mathrm{d}\mathbf{X} \right)$. We apply this to the argument of the exponential in $\mathcal{L}_{2,i}$, which we denote as $A_i(\vtheta)$. The differential of $A_i$ with respect to $\mW_Q$ is:
\begin{equation}
\begin{aligned}
    &\mathrm{d} \left( -y_i \left( \sum_{j=1}^s \left(\frac{1}{s} \frac{\mX_{i,s}\mW_Q \mW_K^{\T} \mX_{i,j}^{\T}}{\sqrt{d_m}}- \frac{1}{s^2} \sum_{l=1}^s \frac{\mX_{i,s}\mW_Q \mW_K^{\T} \mX_{i,l}^{\T}}{\sqrt{d_m}} \right) \mX_{i,j} \mW_V \mW^{[1]} \mW^{[2]} \right) \right) \\
    =& - \frac{y_i}{s \sqrt{d_m}} \mX_{i,s} (\mathrm{d} \mW_Q) \mW_K^{\T} \left[ \sum_{j=1}^s \left( \mX_{i,j}^{\T} - \frac{1}{s} \sum_{l=1}^s \mX_{i,l}^{\T} \right) \mX_{i,j} \right] \mW_V \mW^{[1]} \mW^{[2]} \\
    =& \operatorname{tr} \left( -\frac{y_i}{s \sqrt{d_m}} \mW_K \left[ \sum_{j=1}^s \mX_{i,j}^{\T} \left( \mX_{i,j} - \frac{1}{s} \sum_{l=1}^s \mX_{i,l} \right) \right] \mW_V \mW^{[1]} \mW^{[2]} \mX_{i,s}^{\T}  (\mathrm{d} \mW_Q)^{\T} \right).
\end{aligned}
\end{equation}
By identifying the coefficient of $\mathrm{d} \mW_Q$ from the trace form, we obtain the gradient. Consequently, after neglecting higher-order terms, the leading-order dynamics for $\mW_Q$ under the gradient flow $\frac{\mathrm{d} \mW_Q}{\mathrm{d} t} = -\frac{\partial \mathcal{L}}{\partial \mW_Q}$ are given by:
\begin{equation}
    \frac{\mathrm{d} \mW_Q}{\mathrm{d} t} = \frac{1}{s \sqrt{d_m}} 
 \sum_{i=1}^n y_i \mathcal{L}_{1,i}  \mX_{i,s}^{\T} \left( \mW_V \mW^{[1]} \mW^{[2]} \right)^{\T} \left[ \sum_{j=1}^s \mX_{i,j}^{\T} \left( \mX_{i,j} - \frac{1}{s} \sum_{l=1}^s \mX_{i,l} \right) \right]^{\T} \mW_K.
\end{equation}
A symmetric argument yields the corresponding dynamics for $\mW_K$. This completes the proof.
\end{proof}

\subsection{Proof for Proposition \ref{prop::dynamics_separation} under Assumption \ref{assump:dynamics_separation_prime}}

\begin{proof}
    \textbf{Dynamics Separation}: We use another scheme to estimate the gradient. Consider
    \begin{equation}
    \begin{aligned}
        \frac{\partial \mathcal{L}}{\partial \mW_V} &= \frac{\partial}{\partial \mW_V} \left( \frac{1}{n} \sum_{i=1}^n \mathcal{L}_{1,i}(\vtheta) \mathcal{L}_{2,i}(\vtheta) \right) \\
        &=\frac{1}{n} \sum_{i=1}^n \left( \frac{\partial \mathcal{L}_{1,i}}{\partial \mW_V} \mathcal{L}_{2,i} + \mathcal{L}_{1,i} \frac{\partial \mathcal{L}_{2,i}}{\partial \mW_V} \right) \\
        &=\frac{1}{n} \sum_{i=1}^n \left( \frac{\partial \mathcal{L}_{1,i}}{\partial \mW_V} \left( 1 + \mathcal{O}(\delta^2)\right) + \mathcal{L}_{1,i} \cdot \mathcal{O}(\delta^2) \right) \\
        &= \frac{1}{n} \sum_{i=1}^n \frac{\partial \mathcal{L}_{1,i}}{\partial \mW_V} + \mathcal{O} (\delta^2).
    \end{aligned}
    \end{equation}
    Take the inner product of both sides of the equation with $\frac{\partial \mathcal{L}}{\partial \mW_V}$, we have
    \begin{equation}
        \left\langle \frac{\partial \mathcal{L}}{\partial \mW_V}, \frac{\partial \mathcal{L}}{\partial \mW_V} \right\rangle = \left\langle \frac{1}{n} \sum_{i=1}^n \frac{\partial \mathcal{L}_{1,i}}{\partial \mW_V}, \frac{\partial \mathcal{L}}{\partial \mW_V} \right\rangle + \mathcal{O} (\delta^2).
    \end{equation}
    However, note that 
    \begin{equation}
        \frac{\D }{\D t} \widetilde{\mathcal{L}}_1 =  - \left\langle \frac{1}{n} \sum_{i=1}^n \frac{\partial \mathcal{L}_{1,i}}{\partial \mW_V}, \frac{\partial \mathcal{L}}{\partial \mW_V} \right\rangle - \left\langle \frac{1}{n} \sum_{i=1}^n \frac{\partial \mathcal{L}_{1,i}}{\partial \mW^{[1]}}, \frac{\partial \mathcal{L}}{\partial \mW^{[1]}} \right\rangle - \left\langle \frac{1}{n} \sum_{i=1}^n \frac{\partial \mathcal{L}_{1,i}}{\partial \mW^{[2]}}, \frac{\partial \mathcal{L}}{\partial \mW^{[2]}} \right\rangle.
    \end{equation}
    Based on Assumption \ref{assump:dynamics_separation_prime}, we have $\frac{\partial \mathcal{L}}{\partial \mW_V}$ is $\mathcal{O} (\delta^2)$. The rest of the proof is similar to the previous one.

    \textbf{Key-Query Dynamics}: This part is almost the same. Just need to note that the condition $\frac{\D}{\D t}\!\left(\frac{\mW_V}{\|\mW_V\|}\right)
    = \frac{\D}{\D t}\!\left(\frac{\mW^{[1]}}{\|\mW^{[1]}\|}\right)
    = \frac{\D}{\D t}\!\left(\frac{\mW^{[2]}}{\|\mW^{[2]}\|}\right)\!\approx\!\mathbf{0}$ gives the same $\mF$ after normalization.

\end{proof}

\subsection{Proof for Theorem \ref{thm::key-query_alignment}}
\begin{proof}
    Based on the leading-order dynamics of key-query matrices, we prove the theorem for $\mW_Q$ and the technique for $\mW_K$ is similar. Differentiating the dynamics again, we obtain:
    \begin{equation}
    \frac{\D^2}{\D t^2} \mW_Q = \mF \mF^{\T} \mW_Q.
    \end{equation}
    Let the singular value decomposition of $\mF$ be $\mF= \mU \mSigma \mV^{\T}$, then the dynamics can be rewritten as 
    \begin{equation}
        \frac{\D^2}{\D t^2} \mW_Q = \mU \mSigma^2 \mU^{\T} \mW_Q.
    \end{equation}
    Let $\Tilde{\mW}_Q = \mU^{\T} \mW_Q \mU$, we have 
    \begin{equation}
        \frac{\D^2}{\D t^2} \Tilde{\mW}_Q = \mSigma^2 \Tilde{\mW}_Q.
    \end{equation}
    The evolutions of entries are 
    \begin{equation}
        \Tilde{\mW}_{Q,ij}(t) = C_{1,ij} \operatorname{e}^{\lambda_i t} + C_{2,ij} \operatorname{e}^{-\lambda_i t}.
    \end{equation}
    As a result,
    \begin{equation}
    \operatorname{rank} \left(\lim_{t \rightarrow \infty} \frac{\mW_Q}{\norm{\mW_Q}_{\operatorname{F}}} \right) \le k,
    \end{equation}
    where $k$ is the multiplicity of the largest singular value. This result finishes the proof.

\newpage
\section{Experimental Details}
\label{sec::Experimental Details}
In this section, we present more experimental details to supplement the main text.
\subsection{Experimental Setting of Synthetic Dataset}
\label{subsec::Experimental_Setting}
We introduce the dataset construction method and training hyperparameters used to train the synthetic dataset. 

First, we give some calculation methods for experimental pictures.

\textbf{Satisfaction rate.} We denote the conditions in Assumption~\ref{assump::FinalStageCond} as follows:
\begin{equation*}
    \begin{aligned}
        A_1 &= \left\{ i \in [d_m] \;\middle|\; \boldsymbol{W}^{[2]}_i \boldsymbol{W}_{\bm{v}} \boldsymbol{W}^{[1],i} > 0, \mW_{\bm{v},i} \mW^{[1]}_i \mW^{[2]}>0\right\}, \\
        A_2 &= \left\{ (i,j) \in [d_m] \times [d_m] \;\middle|\; 
    \langle \boldsymbol{W}^{[2]}_i \boldsymbol{W}^{[1],i}, \, \boldsymbol{W}^{[2]}_j \boldsymbol{W}^{[1],j} \rangle > 0, \langle \mW_{\bm{v},i} \mW^{[1],i}, \mW_{\bm{v},j} \mW^{[1],j} \rangle > 0 \right\}.
    \end{aligned}
\end{equation*}

\textbf{Cosine similarity.} To visualize the internal structure of a weight matrix $\mathbf{W}$, we generate a heatmap of its reordered row-wise cosine similarity matrix. The procedure is as follows: first, the row-wise cosine similarity matrix $\mathbf{S}$ is computed, where each entry $S_{ij} = \cos(\mathbf{w}_i, \mathbf{w}_j)$ measures the similarity between row vectors $\mathbf{w}_i$ and $\mathbf{w}_j$.

To reveal underlying block structures, we then employ a spectral reordering technique. This involves finding the principal eigenvector $\mathbf{v}_{\max}$ (the one corresponding to the largest eigenvalue) of the similarity matrix $\mathbf{S}$. The sorted order of this eigenvector's components, $\mathcal{P} = \text{argsort}(\mathbf{v}_{\max})$, provides a permutation index. By applying this permutation to both the rows and columns of $\mathbf{S}$, we group highly correlated row vectors together, making low-rank patterns visually apparent in the final heatmap.

\textbf{Dataset Construction.} 
We construct synthetic datasets using the concept of the anchor function~\cite{zhang2024anchor}, which enables controlled simulation of linguistic relationships. 
Let the set of prompt anchors be $\mathcal{A} = \{ a \in \mathbb{N}^{+} \mid \alpha_{\min} \le a \le \alpha_{\max} \}$ and the set of keys be $\mathcal{Z} = \{ z \in \mathbb{N}^{+} \mid \zeta_{\min} \le z \le \zeta_{\max} \}$, where $\mathcal{A}$ and $\mathcal{Z}$ are disjoint, i.e., $\mathcal{A} \cap \mathcal{Z} = \emptyset$.

We define an anchor function $\mathcal{F}(\mX): \mathbb{N}^s \rightarrow \mathbb{N}$, where $\mX = (x_1, x_2, \dots, x_s)$ is a sequence of length $s$. 
Each sequence contains exactly one anchor token $a \in \mathcal{A}$ among the first $s-1$ positions, and the function outputs the token immediately following the anchor, shifted by $a$: 
\begin{equation}
    \mathcal{F}(x_1, \dots, x_s) = x_{i+1} + a, \quad \text{where } x_i = a.  
\end{equation}

In our experiments, we set $\mathcal{A} = \{1, 2, 3, 4\}$, $\mathcal{Z} = \{5, \dots, 100\}$, and $s = 10$. 
To introduce synonymy among anchors, we modify the mapping as
\begin{equation}
    \mathcal{F}(x_1, \dots, x_s) = x_{i+1} +  (a \bmod 2), \quad \text{where } x_i = a,
\end{equation}
so that anchors $\{1, 2\}$ and $\{3, 4\}$ produce equivalent outputs, mimicking synonymous relationships observed in natural language.

\textbf{Model and training hyperparameters.}
Our model is a decoder-only Transformer with a single layer and a single attention head. The architecture follows the standard GPT design, consisting of a multi-head self-attention block and a position-wise feed-forward network. The Tanh activation function is used in the feed-forward network. A key aspect of our experimental setup is that both the token embedding layer and the final output projection layer are fixed and not updated during training. This allows us to isolate the learning dynamics exclusively within the Transformer's attention and feed-forward weights.

All trainable weights in the model are initialized from a normal distribution with a mean of 0. The standard deviation for different components is set based on the model dimension $d_{\text{model}}$ as $\sigma = d_{\text{model}}^{-0.85}$. The loss is computed only on the prediction of the last token in the sequence.

The model was trained for 30 epochs using the AdamW optimizer. We employed a learning rate scheduler that combines a gradual warmup phase for the first 10 epochs followed by a cosine annealing schedule. The specific hyperparameters are detailed in Table \ref{tab:hyperparameters}.

\begin{table}[h!]
\centering
\caption{Model and Training Hyperparameters}
\label{tab:hyperparameters}
\begin{tabular}{ll@{\hskip 4em}ll}
\toprule
\textbf{Parameter} & \textbf{Value} & \textbf{Parameter} & \textbf{Value} \\
\midrule
Model Architecture &  & Training Settings& \\
Vocabulary Size & 201 & Optimizer & AdamW \\
Model Dimension ($d_{\text{model}}$) & 640 & Batch Size & 1000 \\
Feed-Forward Dim. ($d_{\text{ff}}$) & 1280 & Epochs & 20 \\
Key/Value Dim. ($d_k, d_v$) & 640 & Weight Decay & 0.0 \\
Number of Layers & 1 & Gradient Clipping & 1.0 \\
Number of Heads & 1 & AdamW $\beta_1, \beta_2$ & 0.9, 0.999 \\
Activation Function & Tanh & & \\
\midrule
\multicolumn{4}{c}{\textit{Learning Rate Scheduler (Warmup + CosineAnnealing)}} \\
\midrule
Initial LR / $\eta_{\min}$ & $1 \times 10^{-5}$ & Warmup Epochs & 10 \\
Warmup Multiplier & 15.0 & Cosine Annealing $T_{\max}$ & 200 \\
\bottomrule
\end{tabular}
\end{table}

\subsection{Synthetic Dataset without activation function}
\label{subsec::id_experimental}
In this section, we show similar result for model without activation function, as a supplement to the synthetic data experiments. This shows that it is reasonable to ignore activation in our analysis

\begin{figure}[h]
    \centering
    \includegraphics[width=1.0\linewidth]{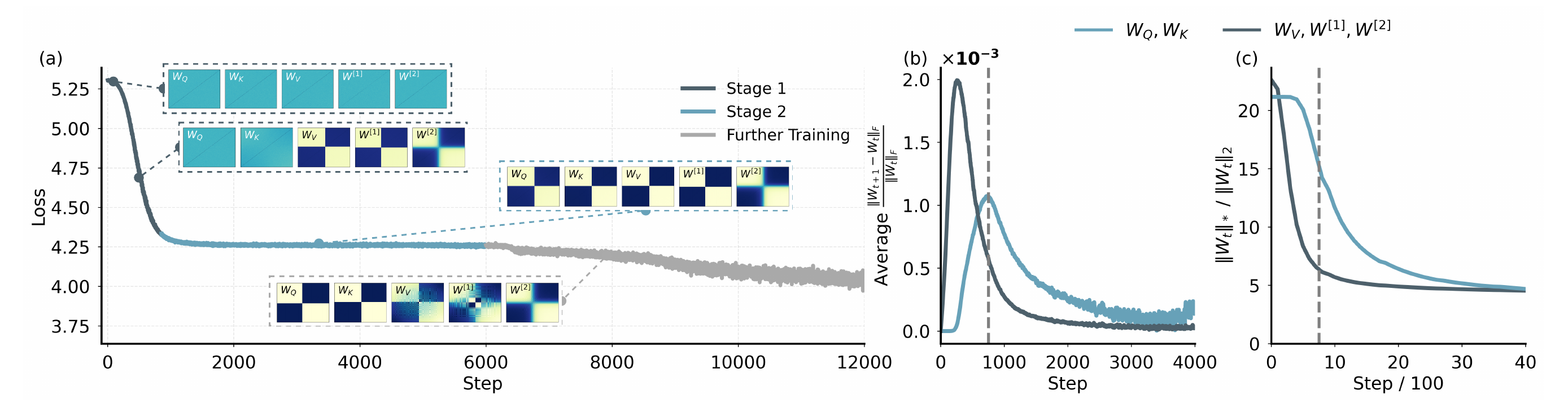}
    \caption{(a) Evolution of cosine similarity matrices for outer and attention parameters. The training process is partitioned into three stages: Condensation (Stage 1), Key-Query rank collapse (Stage 2), and a further training stage. Stage transitions are identified by plateaus in the loss curve and structural shifts in these matrices. (b) The relative change of norms between attention and outer parameters. The gray dashed line marks the onset of Stage 2, where updates to the attention parameters begin to dominate. (c) Evolution of the effective rank for both parameter groups, tracking the change in their intrinsic dimensionality throughout training.}
    \label{fig:loss_id_act}
\end{figure}

\begin{figure}[h]
    \centering
    \includegraphics[width=1.0\linewidth]{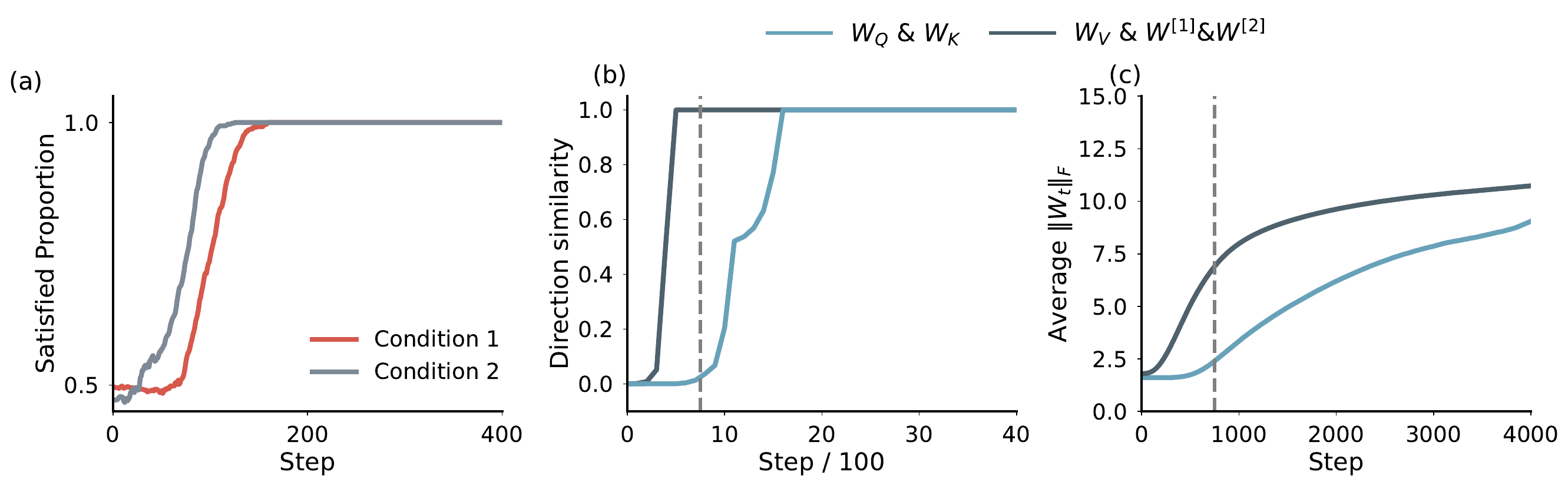}
    \caption{(a) Proportion of satisfied conditions in Assumption~\ref{assump::FinalStageCond}, measured as $\frac{|A_1|}{d_m}$ and $\frac{|A_2|}{d_m^2}$.
    (b) Similarity between singular vectors of two adjacent time steps. 
    (c) Frobenius norms of parameter groups.}
    \label{fig:condition_id_act}
\end{figure}

\subsection{Experimental Setting of Real Task}
\label{subsec::Experimental Setting of Real Task}

To validate that our theoretical insights generalize beyond simplified settings, we conducted experiments on the WikiText dataset, a standard benchmark for language modeling. This experimental setup intentionally incorporates more complex and commonly used architectural features.

\textbf{Dataset and Task.} We use the WikiText dataset, which consists of high-quality articles from Wikipedia. The task is next-token prediction, where the model is trained to predict the next word in a sequence. Consistent with our synthetic experiments, the training objective is calculated exclusively based on the prediction loss for the final token of each input sequence. The sequence length is set to 2048.

\textbf{Model and Training Hyperparameters.} We use a 2-layer decoder-only Transformer. To test the robustness of our findings, this model's architecture includes standard components that were abstracted away in the synthetic setup. Specifically, it incorporates residual connections after both the self-attention and feed-forward sub-layers, and it utilizes the GeLU activation function in the feed-forward network. This more realistic configuration allows us to demonstrate that our theory holds even in the presence of such non-linearities and standard architectural features.

All model weights are initialized from a normal distribution with a standard deviation of $\sigma = d_{\text{model}}^{-1.2}$. The model was trained for 5 epochs using the AdamW optimizer with an initial learning rate of $2 \times 10^{-4}$, which was managed by a cosine decay schedule with a warmup phase. The detailed hyperparameters for this experiment are listed in Table \ref{tab:hyperparameters_wikitext}.

\begin{table}[h!]
\centering
\caption{Model and Training Hyperparameters for WikiText}
\label{tab:hyperparameters_wikitext}
\begin{tabular}{ll@{\hskip 4em}ll}
\toprule
\textbf{Parameter} & \textbf{Value} & \textbf{Parameter} & \textbf{Value} \\
\midrule
Model Architecture &  & Training Settings& \\
Vocabulary Size & 31,999 & Dataset & WikiText \\
Model Dimension ($d_{\text{model}}$) & 64 & Sequence Length & 2048 \\
Feed-Forward Dim. ($d_{\text{ff}}$) & 800 & Optimizer & AdamW \\
Key/Value Dim. ($d_k, d_v$) & 64 & Batch Size & 500 \\
Number of Layers & 2 & Epochs & 5 \\
Number of Heads & 1 & Learning Rate & $2 \times 10^{-4}$ \\
Activation Function & GeLU & AdamW $\beta_1, \beta_2$ & 0.9, 0.999 \\
Pos. Emb. Length & 2048 & Weight Decay & 0.0 \\
& & Gradient Clipping & 1.0 \\
\bottomrule
\end{tabular}
\end{table}

\section{Experiments Compute Resources}
\label{sec:resources}
The experiments were conducted on a server with the following configuration:
\begin{itemize}
    \item 48 AMD EPYC 7352 24-Core Processors, each with 512KB of cache
    \item 251GB of total system memory
    \item 8 NVIDIA GeForce RTX 4080 GPUs with 16GB of video memory each
    \item The experiments were run using Ubuntu 22.04 LTS operating system
\end{itemize}
\end{proof}


\end{document}